\def\Pr{{\bf P}}
\def\E{{\mathbb E}}
\newcommand{\alg}[1]{\textsc{#1}}
\DeclareMathOperator{\indicate}{\mathbf{1}} %1 \kern -4pt 1}
\DeclareMathOperator{\classD}{\mathfrak{D}}
\DeclareMathOperator{\R}{\mathbb{R}}
\DeclareMathOperator{\M}{\mathcal{M}}
\DeclareMathOperator{\D}{\mathcal{D}}
\DeclareMathOperator{\Xb}{\mathbf{X}}
\DeclareMathOperator{\distM}{\rho_{_M}}
\DeclareMathOperator{\T}{^\mathsf{T}}    % transpose
\DeclareMathOperator{\Hnet}{\mathcal{H}^{\textrm{2-net}}_{\sigma^\gamma}}
\DeclareMathOperator{\fat}{\mathsf{Fat}}
\DeclareMathOperator{\err}{\textup{err}}
\DeclareMathOperator{\mean}{\textup{mean}}
\DeclareMathOperator{\Mgap}{\mathcal{M}_\textup{0-1}}
\DeclareMathOperator{\dist}{\textup{dist}}
\DeclareMathOperator{\reg}{\textup{reg}}
\DeclareMathOperator{\hypoth}{\textup{hypoth}}
\DeclareMathOperator{\cov}{\textrm{cov}}
\DeclareMathOperator{\pak}{\textrm{pak}}
\DeclareMathOperator{\argmin}{\textup{argmin}}
\newcommand{\ie}{{\emph{i.e.,}}}
\newcommand*\xbar[1]{%
  \hbox{%
    \vbox{%
      \hrule height 0.5pt % The actual bar
      \kern0.17ex%         % Distance between bar and symbol
      \hbox{%
        \kern-0.25em%      % Shortening on the left side
        \ensuremath{#1}%
        \kern-0.12em%      % Shortening on the right side
      }%
    }%
  }%
}
\def\qed{\vrule height8pt width3pt depth0pt}
\def\jump{\vskip0.05in}
\newtheorem{theorem}{Theorem}                               %%%
\newtheorem{definition}{Definition}                %%%
\newtheorem{corollary}[theorem]{Corollary}                  %%%
\newtheorem{lemma}[theorem]{Lemma}                          %%%
\newtheorem{fact}[theorem]{Fact}                          %%%
\newenvironment{proof}{\noindent{\it Proof.} }{\qed\jump}   %%%
\begin{document} 

\title{Sample Complexity of Learning Mahalanobis \\Distance Metrics}

\author{Nakul Verma\thanks{email: \texttt{verman@janelia.hhmi.org}; corresponding author.} }
\author{Kristin Branson\thanks{email: \texttt{bransonk@janelia.hhmi.org}}}
\affil{Janelia Research Campus\\Howard Hughes Medical Institute, Virginia, USA}
\date{}
%\affil[2]{Department of Mechanical Engineering, \LaTeX\ University}

%\address{Janelia Research}
%\address{Janelia Research}

\maketitle
%%%\twocolumn[
%%%\icmltitle{Sample Complexity of Learning Mahalanobis Distance Metrics
%%%           }
%%%
%%%% It is OKAY to include author information, even for blind
%%%% submissions: the style file will automatically remove it for you
%%%% unless you've provided the [accepted] option to the icml2014
%%%% package.
%%%\icmlauthor{Nakul Verma}{verman@janelia.hhmi.org}
%%%%\icmladdress{Your Fantastic Institute,
%%%%            314159 Pi St., Palo Alto, CA 94306 USA}
%%%\icmlauthor{Kristin Branson}{bransonk@janelia.hhmi.org}
%%%\icmladdress{Janelia Research Campus, HHMI,
%%%            Ashburn, Virginia, USA}
%%%
%%%% You may provide any keywords that you 
%%%% find helpful for describing your paper; these are used to populate 
%%%% the "keywords" metadata in the PDF but will not be shown in the document
%%%\icmlkeywords{Metric Learning, Learning Theory, Sample Complexity, Mahalanobis Distance Metrics}
%%%
%%%%\vskip 0.3in  %%%%%%%%%%  <<<<<<<<<<<<<<<<<<<<<<<<<<<<<<<<<< THIS IS THE ORIGINAL 
%%%\vskip 0.4in
%%%]

\begin{abstract} 
Metric learning seeks a transformation of the feature space that enhances prediction
quality for the given task at hand. In this work we provide PAC-style 
sample complexity rates for supervised metric learning. We give matching lower-
and upper-bounds showing that the sample complexity scales with the representation
dimension when no assumptions are made about the underlying data distribution. 
However, by leveraging the structure of the data distribution, we show that one can achieve rates that are \emph{fine-tuned} to 
a specific notion of intrinsic complexity for a given dataset. Our analysis 
reveals that augmenting the metric learning optimization criterion with a simple norm-based regularization 
can help adapt to a dataset's intrinsic complexity, yielding better
generalization.
Experiments on benchmark datasets validate our
analysis and show that regularizing the metric can help discern the signal even when the data contains high amounts of noise.

%of weighting metrics in can
%
%dataset-dependent sample 
%complexity result suggests that norm-regularized training error minimizing algorithms can
%yield better generalization error for datasets with low intrinsic complexity.
%Experiments on benchmark datasets validate our
%analysis and show that regularizing the metric can help evade noise that may be present in data.
\end{abstract} 

\allowdisplaybreaks

\section{Introduction}

In many machine learning tasks, data is represented in a
high-dimensional Euclidean space where each dimension corresponds to some
interesting measurement of the observation. Often, 
practitioners include a variety of measurements in hopes that some
combination of these features will capture the relevant information. %in data.
While it is natural to represent such data in a Real space of measurements,
there is no reason to expect that using Euclidean ($L_2$) distances to compare the
observations will be necessarily useful for the task at hand. Indeed, the presence of
uninformative or mutually correlated measurements simply inflates the $L_2$-distances 
between pairs of observations, rendering distance-based comparisons ineffective. 

\emph{Metric learning} has emerged as a powerful technique to learn a good
notion of distance or a \emph{metric} in the representation space that can emphasize the 
feature combinations that help in the predication task while suppressing the contribution of
spurious measurements. The past decade has seen a variety of successful metric learning algorithms 
that leverage various attributes of the problem domain. A few notable examples include exploiting class labels 
to find a Mahalanobis distance metric that maximizes the distance between
dissimilar observations while minimizing distances between similar ones to improve classification quality
\citep{met_learn:alg_LMNN, met_learn:alg_ITML}, and explicitly optimizing for
a downstream prediction task such as information retrieval 
\citep{met_learn:alg_mlr_mcfee}. % or network connectivity \citep{met_learn:alg_spml_shaw}. 

%classification task such as good nearest neighbor performance
%\citep{met_learn:alg_LMNN} or ranking accuracy 

%or (iii) explicitly finding metrics that use the most relevant set of input features \citep{met_learn:alg_rob_stru_mcfee}.

Despite the popularity of metric learning methods, few studies have focused on
studying how the problem complexity scales with key attributes of a given
dataset. For instance, how do we expect the generalization error to scale---both
theoretically and practically---as one varies the number of informative and
uninformative measurements, or changes the noise levels? 
%It would be instructive to 
%understand the strengths and failure modes of popular metric learning
%techniques, thus providing us with a set of guiding principles for metric learning
%in different application domains.

Here we study supervised metric learning more
formally and gain a better understanding of how different modalities in data affect the metric learning problem. 
We develop two general frameworks for PAC-style analysis of supervised metric learning. 
We can categorize the popular metric learning algorithms into an empirical
error minimization problem in one of the two frameworks.  The first generic
framework, the distance-based metric learning framework, uses class label
information to derive distance constraints.
The key objective is to learn a metric that on average yields smaller distances between
%pairs (or triples) of 
examples from the same class than those from
different classes. 
%\footnote{We will explicitly show results on pairwise constraints, the with slight more work, results are generalizable 
%to triplet constraints.}
Some popular %examples of 
algorithms that optimize for such distance-based objectives include Mahalanobis Metric for Clustering (MMC) by \citet{met_learn:alg_xing} and Information Theoretic Metric Learning 
(ITML) by \citet{met_learn:alg_ITML}.
Instead of using
distance comparisons as a proxy, however, one can also optimize for a specific prediction task directly.
The second generic framework, the classifier-based metric learning framework,
 explicitly incorporates the hypothesis associated with the prediction
task of interest to learn effective distance metrics. 
A few interesting examples in this regime include the work by \citet{met_learn:alg_mlr_mcfee} that finds metrics that improve ranking quality in information retrieval tasks, and
the work by \citet{met_learn:alg_spml_shaw} that learns metrics that help predict connectivity structure in networked data.

Our analysis shows that in both frameworks, the sample complexity
scales with the representation dimension for a given dataset (Lemmas \ref{lm:unif_conv_all} and \ref{lm:hypoth_ub}), and
this dependence is necessary in the absence of any specific assumptions on the underlying
data distribution (Lemmas \ref{lm:lb_dist} and \ref{lm:hypoth_lb}). By
considering any Lipschitz loss, our results generalize previous sample
complexity results (see our discussion in Section
\ref{sec:related_work}) and, for the first time in the literature, provide
matching lower bounds.

In light of the observation made earlier that data measurements often include uninformative or weakly informative features, 
we expect a metric that yields good generalization performance to de-emphasize such features and accentuate the relevant ones.
We can thus formalize the \emph{metric learning complexity} of a given dataset in terms of
the intrinsic complexity $d$ of the metric that reweights the features in a way that yields the best generalization
performance. (For Mahalanobis distance metrics, we can characterize the intrinsic complexity by the \emph{norm} of the
matrix representation of the metric.) We refine our
sample complexity result and show a \emph{dataset-dependent} bound for both frameworks that scales with 
dataset's intrinsic metric learning complexity $d$ (Corollary \ref{cor:unif_conv_refined}). 

Taking guidance from our dataset-dependent result, we propose a simple variation on
the empirical risk minimizing (ERM) algorithm that, when given an i.i.d.\ sample, 
returns a metric (of complexity $\hat{d}$) that jointly minimizes the observed
sample bias and the expected intra-class variance for metrics of fixed
complexity $\hat{d}$. This bias-variance balancing algorithm can be viewed as a
structural risk minimizing algorithm that provides better generalization
performance than an ERM algorithm and justifies norm-regularization of
weighting metrics in the optimization criteria for metric learning. 

Finally, we evaluate the practical efficacy of our proposed norm-regularization criteria with some popular metric
learning algorithms on benchmark datasets (Section \ref{sec:experiments}). Our experiments highlight that the norm-regularization indeed helps
in learning weighting metrics that better adapt to the signal in data in high-noise regimes.

\section{Preliminaries}

Given a representation space $X = \R^D$ of $D$ real-valued measurements of 
observations of interest, the goal of metric learning is to learn a
\emph{metric} $M$ (that is, a $D\times D$ real-valued weighting matrix on
$X$; to remove arbitrary scaling we shall assume that the
maximum singular value of $M$, that is, $\sigma_{\max}(M) = 1$)\footnote{Note
%; we shall use $\mathcal{M}$ to denote the set of such matrices)\footnote{Note
 that we are 
looking at the linear form of the metric $M$; usually the corresponding quadratic form $M^\mathsf{T}M$ is
discussed in the literature, which is necessarily positive semi-definite.}
%; to remove arbitrary scaling we
%shall also assume that the maximum singular value of M, that is,
%$\sigma_{\max}(M) = 1$) 
that minimizes some notion of \emph{error} on data drawn from an unknown underlying
distribution $\D$ on $X \times \{0,1\}$. Specifically, we want to find the metric %$M^*$ s.t.\
\begin{eqnarray*}
M^* := \argmin_{M \in \mathcal{M}} \err(M,\D),
\end{eqnarray*}
from the class of metrics $\mathcal{M}$ under consideration, that is, $\mathcal{M}:= \{ M \;|\; M\in \R^{D\times D}, \sigma_{\max}(M)=1  \}$.
For supervised metric learning, this \emph{error} is typically label-based and can be defined in multiple
reasonable ways. As discussed earlier, we explore two intuitive regimes for defining error.
\\

\textbf{Distance-based error.} 
A popular criterion for quantifying error in metric learning is by comparing \emph{distances} amongst %(pairs or triples of) 
points
drawn from the underlying data distribution.  
Ideally, we want a weighting metric $M$ that brings data from the same class closer
together than those from opposite classes. In a distance-based framework, a natural way to accomplish this is to
find a weighting $M$ that yields shorter distances between pairs of observations from the same class
than those from different classes.
By penalizing
how often and by how much the distances violate these constraints gives rise
to the particular form of the error.

Let the variable $z=(x,y)$ denote a random draw from $\D$ with $x \in X$ as 
the observation and $y\in\{0,1\}$ its associated label,
and let $\lambda$ denote 
how severely one wants to penalize the distance violations, 
then a natural definition of \emph{distance-based} error becomes: 
\begin{align*}
\err_{\dist}^\lambda(M,\D) := \E_{z_1, z_2 \sim \D} \Big[ \phi^\lambda\big( \distM(x_1,x_2), Y \big) \Big],  
\end{align*}
for a generic distance-based loss function $\phi^\lambda(\distM, Y)$, that computes
the degree of violation between weighted distance $\distM(x_1,x_2) := \|M(x_1-x_2)\|^2$ and the label agreement $Y := \indicate[y_1 = y_2]$
among a pair $z_1 = (x_1,y_1)$ and $z_2 = (x_2,y_2)$ drawn from %the underlying data distribution $\D$.
$\D$.

%denote the (squared) Euclidean distance between arbitrary points
%$x$ and $x'$ with respect to $M$, and let $\phi^\lambda(\hat Y, Y)$ be some $\lambda$-Lipschitz loss function,
%then we can define
%
%
%
%
An example instantiation of $\phi$ popular in literature encourages metrics
that yield distances that are no more than some upper limit $U$ between
observations from the same class, and distances that are no less than some
lower limit $L$ between those from
different classes (for some $U<L$). Thus
\begin{align}
\phi_{L,U}^\lambda(\distM,Y) := \Bigg\{ 
\begin{array}{ll} 
  \min\{1, \lambda[\distM - U]_{_+} \} & \textrm{if $Y = 1$} \\ %\vspace{0.05in}
  \min\{1, \lambda[L - \distM]_{_+} \}  & \textrm{otherwise} \end{array},
\label{eq:pair_LUloss}
\end{align}
%\vspace{-0.25in}
where $[A]_{_+} := \max\{0,A\}$.

 \citet{met_learn:alg_xing} optimize an efficiently computable variant of
this criterion, in which they look for a metric that keeps the total pairwise distance
amongst the observations from the same class less than a constant while
maximizing the total pairwise distance amongst the observations from 
opposite classes. The variant proposed by \citet{met_learn:alg_ITML} explicitly includes the
upper and lower limits with an added regularization on the learned $M$
to be close to a pre-specified metric of interest $M_0$. % as possible.

While we discuss loss-functions $\phi$ that handle distances between a \emph{pair} of observations, it is
easy to extend to distances among \emph{triplets}. Rather than 
having hard upper and lower limits which every pair of the same and the opposite
classes must obey, a triplet-based comparison typically focuses on relative distances between three
observations at a time. A natural instantiation in this case becomes:
\begin{align*}
&\phi_{\textrm{triple}}^\lambda(\distM(x_1,x_2), \distM(x_1,x_3), (y_1,y_2,y_3)) := 
\Bigg\{ \!\! \begin{array}{ll} 
  \min\{1,\lambda [\distM(x_1,x_2) - \distM(x_1,x_3)]_{_+}\}  & \!\!\! \textrm{if $y_1 = y_2 \neq y_3$} \\ 
0 & \!\!\! \textrm{otherwise} 
\end{array} \!\!, 
\end{align*}
for a triplet $(x_1,y_1)$, $(x_2,y_2)$, $(x_3,y_3)$
drawn from $\D$.

 \citet{met_learn:alg_LMNN} discuss an interesting variant of this,
in which instead of looking at all triplets in a given training sample, they focus on
triplets of observations in local neighborhoods and learn a metric that maintains
a gap or a margin among distances between observations from the same class and those from the
opposite class. Improving the quality of distance comparisons in local
neighborhoods directly affects the nearest neighbor performance, making this a
popular technique. 
\\

\textbf{Classifier-based Error.} Distance comparisons
typically act as a surrogate for a specific downstream prediction task.
If we want a metric that directly optimizes for a task,
we need to explicitly incorporate the hypothesis class being used for that
task while finding a good weighting metric. 

This simple but effective insight has been used recently by
\citet{met_learn:alg_mlr_mcfee} for improving ranking results in information retrieval problems by explicitly
incorporating ranking losses while learning an effective weighting metric.
\citet{met_learn:alg_spml_shaw} also follow this principle and explicitly include network topology constraints
to learn a weighting metric that can better predict the connectivity structure in
social and web networks.

We can formalize the classifier-based metric learning framework by considering a fixed hypothesis class $\mathcal{H}$ of interest on the measurement
domain. To keep the discussion general, we shall assume that the hypotheses are real-valued and can be regarded as a measure of confidence 
in classification, that is, each $h\in \mathcal{H}$ is of the form $h:
X\rightarrow [0,1]$. (One can obtain the binary predictions from $h$ by a simple thesholding at $1/2$.) Then, the error induced by a particular weighting metric $M$ on the measurement space $X$ can be defined as the best possible error that can be obtained by hypotheses in $\mathcal{H}$, that is
\begin{eqnarray*}
\err_{\hypoth}(M,\D) := \inf_{h\in\mathcal{H}} \E_{(x,y) \sim \D} \Big[ \indicate \big[ | h(Mx) - y| \geq 1/2 \big] \Big].
\end{eqnarray*}

We shall study how this error scales with various key parameters of the metric learning problem.

\section{Learning a Metric from Samples}

In any practical setting, we estimate the ideal weighting metric $M^*$ by minimizing the
empirical version of the error criterion from a finite size sample from $\D$.  
%PAC-analysis provides a powerful tool to study the estimation errors rates in terms of number of samples from the
%underlying data distribution.

Let $S_m$ denote a sample of size $m$, and $\err(M,S_m)$ denote the empirical error on the sample $S_m$ (the exact definitions of $S_m$ and the form
of $\err(M,S_m)$ are discussed later). We can then define the empirical risk minimizing metric based on $m$ samples as
$M^*_m := \argmin_M
\err(M,S_m).$ Most practical algorithms, of course, return some approximation of $M^*_m$, and thus it is important to compare the generalization ability of
$M^*_m$ to that of theoretically optimal $M^*$. That is, how
\begin{equation}
 \err(M^*_m, \D) - \err(M^*,\D)   
\label{eq:erm_conv}
\end{equation}
converges as the sample size $m$ grows.

\subsection{Distance-Based Error Analysis} 
Given an i.i.d.\ sequence of observations $z_1,z_2,\ldots$ from $\D$, we can pair the observations
together to form a \emph{paired} sample $S_m = \{(z_1,z_2),(z_3,z_4),\ldots,(z_{2m-1},z_{2m}) \} = \{(z_{1,i},z_{2,i})\}_{i=1}^m$ of size $m$, and define
the sample based distance error $\err_{\dist}^\lambda(M, S_m)$ induced by a metric $M$ as
\begin{eqnarray*}
\err_{\dist}^\lambda(M, S_m) := \frac{1}{m} \sum_{i=1}^m \phi^\lambda\big( \distM(x_{1,i}, x_{2,i}), \indicate[ y_{1,i} =  y_{2,i} ] \big).
\end{eqnarray*}

Then for any bounded support distribution $\D$ (that is, each $(x,y)\sim \D$, $\|x\|\leq B <\infty$), we have the following convergence result.\footnote{We only 
present the results for paired distance comparisons; the results are easily extended to triplet-based comparisons.}

\begin{lemma}
\label{lm:unif_conv_all}
Fix any sample size $m$, and let $S_m$ be an i.i.d.\ \emph{paired} sample of
size $m$ from an unknown bounded distribution $\D$ (with bound $B$). For any distance-based loss function $\phi^\lambda$ that is $\lambda$-Lipschitz in the first argument, with probability at least $1-\delta$ over the draw of $S_m$,
\begin{align*}
%\lefteqn{
 \sup_{M\in \mathcal{M}}  \big[ \err_{\dist}^\lambda  (M,\D) - &
\err_{\dist}^\lambda (M,S_m) \big]
%\\
\leq O\Bigg( \lambda B^2 \sqrt{\frac{D \ln(1/\delta)}{m}}\Bigg).
\end{align*}
\end{lemma}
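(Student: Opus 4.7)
I would prove this via a standard Rademacher complexity argument, with the nontrivial ingredient being a bound on the Rademacher complexity of the distance class $\{(z_1,z_2)\mapsto \|M(x_1-x_2)\|^2 : M\in\mathcal{M}\}$ that exploits the quadratic-form / trace structure of $\distM$. The naive observation that $\mathcal{M}$ has $D^2$ free parameters would give a bound scaling with $D$ (not $\sqrt{D}$) inside the square root, so the crux is to avoid that.

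\textbf{Step 1: Symmetrization and contraction.} Treating each pair $(z_{1,i},z_{2,i})$ as a single i.i.d.\ observation, and noting that the loss $\phi^\lambda$ is bounded by $O(\lambda B^2)$ on the relevant range (since $\distM(x_1,x_2)\leq \sigma_{\max}(M)^2\|x_1-x_2\|^2\leq 4B^2$ and $\phi^\lambda$ is $\lambda$-Lipschitz with $\phi^\lambda(0,\cdot)\leq 1$), I would apply McDiarmid's inequality to the quantity $\sup_{M}[\err_{\dist}^\lambda(M,\D)-\err_{\dist}^\lambda(M,S_m)]$, obtaining a concentration term of order $\lambda B^2\sqrt{\ln(1/\delta)/m}$ around its expectation, and bound that expectation by $2\E[\hat{R}_m(\mathcal{F})]$ where $\mathcal{F}=\{(z_1,z_2)\mapsto \phi^\lambda(\distM(x_1,x_2),\indicate[y_1=y_2]):M\in\mathcal{M}\}$. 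Talagrand's contraction lemma (applied after subtracting the $M$-independent offset $\phi^\lambda(0,\indicate[y_1=y_2])$, which leaves the Rademacher average unchanged) then gives $\hat{R}_m(\mathcal{F})\leq \lambda \hat{R}_m(\mathcal{G})$, where $\mathcal{G}=\{(z_1,z_2)\mapsto \|M(x_1-x_2)\|^2:M\in\mathcal{M}\}$.

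\textbf{Step 2: Rademacher complexity of the distance class via trace duality.} Writing $A=M\T M$ (which is PSD with $\|A\|_{\mathrm{op}}\leq 1$) and $u_i=x_{1,i}-x_{2,i}$, I have
\begin{equation*}
\hat{R}_m(\mathcal{G}) \;=\; \E_\sigma \sup_{A\succeq 0,\;\|A\|_{\mathrm{op}}\leq 1}\tr\!\left(A\cdot \tfrac{1}{m}\sum_{i=1}^m \sigma_i u_i u_i\T\right).
\end{equation*}
For any symmetric matrix $B$, the supremum on the right equals the sum of its positive eigenvalues, which is at most the nuclear norm $\|B\|_*$, and by Cauchy--Schwarz $\|B\|_*\leq \sqrt{D}\,\|B\|_F$. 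Setting $B_m=\tfrac{1}{m}\sum_i\sigma_i u_iu_i\T$ and using independence of the $\sigma_i$, a direct calculation gives $\E_\sigma\|B_m\|_F^2=\tfrac{1}{m^2}\sum_i\|u_i\|^4\leq 16B^4/m$, so by Jensen $\hat{R}_m(\mathcal{G})\leq 4B^2\sqrt{D/m}$.

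\textbf{Main obstacle.} The delicate point is the $\sqrt{D}$ (rather than $D$) scaling. A naive $\epsilon$-net on the $D^2$-dimensional space $\mathcal{M}$ would yield an entropy bound of order $D^2\log(1/\epsilon)$ and hence a $\sqrt{D^2/m}=D/\sqrt{m}$ rate. The gain comes from recognizing that the loss depends on $M$ only through the bilinear form $u\T M\T M u$, whose sup over the unit operator-norm ball is a nuclear-norm quantity---and the nuclear norm of a $D\times D$ symmetric matrix is controlled by $\sqrt{D}$ times its Frobenius norm. Getting this step right, together with carefully handling the label-dependent offset in the contraction step so that $\lambda$ is the only Lipschitz factor appearing, is the only nonstandard part; combining Steps 1 and 2 yields the stated bound.
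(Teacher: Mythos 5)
Your proposal is correct and follows essentially the same route as the paper: bound the Rademacher complexity of the squared-distance class by writing $\distM(x_1,x_2)=\tr\!\big(M\T M\,\bar x\bar x\T\big)$, pull out a $\sqrt{D}$ factor from the unit-operator-norm constraint on $M\T M$, compute $\E_\sigma\big\|\tfrac1m\sum_i\sigma_i\bar x_i\bar x_i\T\big\|_F\le 4B^2/\sqrt m$, and then pass to the loss via a Lipschitz argument plus a concentration bound. The only cosmetic differences are that you extract the $\sqrt D$ via nuclear-norm duality $\|B\|_*\le\sqrt D\|B\|_F$ rather than the paper's directly equivalent $\|M\T M\|_F\le\sqrt D$, and you unpack the McDiarmid/symmetrization/Talagrand steps that the paper folds into a single black-box lemma of Bartlett--Mendelson.
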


Using this lemma we can get the desired convergence rate (Eq.\ \ref{eq:erm_conv}). Fix $M^*\in \mathcal{M}$, then for any $0<\delta<1$ and $m\geq1$, 
%fix any $M' \in \mathcal{M}$ such
%that $\err_{\dist}^\lambda( M', \D) \leq \err_{\dist}^\lambda(M^*,\D) +
%\sqrt{\frac{\ln(2/\delta)}{2m}}$. Then, 
with probability at least $1-\delta$,
we have
\begin{align*}
 \err_{\dist}^\lambda(M^*_m, \D)  - \err_{\dist}^\lambda &(M^*,\D)  
 \\
= &\;\;
    \err_{\dist}^\lambda(M^*_m, \D) - \err_{\dist}^\lambda(M^*_m,S_m)  
 +   \err_{\dist}^\lambda(M^*_m, S_m) - \err_{\dist}^\lambda(M^*,S_m) \\
&\;\; +   \err_{\dist}^\lambda(M^*, S_m) - \err_{\dist}^\lambda(M^*,\D)  \\
%&\;\; +   \err_{\dist}^\lambda(M', \D) - \err_{\dist}^\lambda(M^*,\D)  \\
%
%
%    \err_{\dist}^\lambda(M^*_m, \D) - \err_{\dist}^\lambda(M^*_m,S_m)  \\
%&\;\; +   \err_{\dist}^\lambda(M^*_m, S_m) - \err_{\dist}^\lambda(M^*,S_m) \\
%&\;\; +   \err_{\dist}^\lambda(M^*, S_m) - \err_{\dist}^\lambda(M^*,\D)  \\
%\leq&\;\; 2  \max_{M\in \mathcal{M}} | \err_{\dist}^\lambda(M,S_m) -
%\err_{\dist}^\lambda(M,\D)| \\
\leq&\;\; O\Bigg( \lambda B^2 \sqrt{\frac{D \ln(1/\delta)}{m}}\Bigg) + \sqrt{\frac{\ln(2/\delta)}{2m}}  \\
=&\;\; O\Bigg( \lambda B^2 \sqrt{\frac{D \ln(1/\delta)}{m}}\Bigg),
\end{align*}
by noting (i) $\err_{\dist}^\lambda(M^*_m, S_m)  \leq \err_{\dist}^\lambda(M^*,S_m)$, since $M^*_m$ is empirical error minimizing on $S_m$, 
and (ii) by using Hoeffding's inequality on the fixed $M^*$ to conclude
that with probability at least $1-\delta/2$, 
$\err_{\dist}^\lambda(M^*, S_m) - \err_{\dist}^\lambda(M^*,\D) \leq \sqrt{\frac{\ln(2/\delta)}{2m}}$.
%. %, and (ii) by applying Lemma \ref{lm:unif_conv_all}.

Thus to achieve a specific estimation error rate $\epsilon$, the number of
samples $m = \Omega\Big(\big(\frac{\lambda B^2}{\epsilon}\big)^2 D \ln(\frac{1}{\delta})\Big)$ are sufficient to
conclude, with confidence at least $1-\delta$, the empirical risk minimizing metric
$M^*_m$ will have estimation error of at most $\epsilon$.
This shows that one never needs more than a number proportional to the representation dimension $D$ examples to achieve the desired level of accuracy. 

Since
typical applications have a large representation dimension, it is instructive to study if such a strong dependency on $D$ necessary. 
It turns out that even for simple distance-based loss functions like $\phi^\lambda_{L,U}$ (cf.\ Eq.\ \ref{eq:pair_LUloss}), 
there are data distributions for which one cannot get away with fewer than linear in $D$ samples and ensure good estimation errors.
In particular we have the following.
%The following lemma confirms 
%that there are data distributions for which one cannot get away with fewer than linear in $D$ samples and ensure good estimation errors.
%
%see if such scaling is n
% scales with $D$ measurements.  a it 
%The following lemma that a linear dependence on $D$ is necessary in general. 
%
%\TODO{Show Lemma 1 implies Eq.\ (1)}.
%
\begin{lemma}
\label{lm:lb_dist}
Let $\mathcal{A}$ be any algorithm that, given an i.i.d.\ sample $S_m$ (of size $m$) from a
fixed unknown bounded support distribution $\D$, returns a weighting metric 
from $\mathcal{M}$ that minimizes the empirical error with respect to distance-based loss function $\phi^\lambda_{L,U}$. 
There exist $\lambda\geq 0$, $0\leq U<L$, such that for
all $0<\epsilon,\delta<1/64$, there  
exists a bounded
support distribution $\D$, such that if $m \leq \frac{D+1}{512\epsilon^2}$ then
%i.i.d.\ samples are
%drawn from $\D$
%If $m < D^2/\epsilon^2$ then 
$$
\Pr_{S_m} \Big[ \err_{\dist}^\lambda(\mathcal{A}(S_m),\D) - \err_{\dist}^\lambda(M^*,\D) > \epsilon \Big] > \delta.
$$
%where $M^* := \argmin_M \err_{\dist}(M,\D)$.
\end{lemma}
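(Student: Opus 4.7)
The plan is a classical VC-style no-free-lunch argument: we engineer a family of distributions indexed by a hidden bit vector $\sigma \in \{0,1\}^D$ such that (i) each $\sigma$ has a distinct optimal weighting metric, (ii) the excess distance-based error of an incorrect metric grows linearly with the Hamming distance between the algorithm's output and $\sigma$, and (iii) recovering $\sigma$ from a paired sample reduces to estimating $D$ independent biased coins, which by standard information-theoretic bounds requires $\Omega(D/\epsilon^2)$ observations.

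First, I would fix the loss parameters. Taking $\lambda = 1$, $U = 0$, and $L = 1$, the loss $\phi^1_{0,1}(\rho_M, Y)$ equals $\min\{1, \rho_M\}$ on same-class pairs and $\min\{1, (1-\rho_M)_+\}$ on opposite-class pairs. Next, I would build the hard distribution $\D_\sigma$: support it on an anchor $x_0 = 0 \in \R^D$ with fixed label $0$, placed with probability $1/2$, and on the standard basis vectors $e_1,\dots,e_D$, each chosen with probability $1/(2D)$. For $x = e_i$, let the label equal $\sigma_i$ with probability $1/2 + \gamma$ and $1 - \sigma_i$ with probability $1/2 - \gamma$, for a bias $\gamma = \Theta(\epsilon)$ to be fixed. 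The "ideal" metric $M_\sigma$ is diagonal with $(M_\sigma)_{ii} = \sigma_i$, which satisfies $\sigma_{\max}(M_\sigma) \leq 1$.

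The analysis proceeds in three steps. Step (a): restrict attention to diagonal metrics $M$ with entries $m_i \in [0,1]$ (this is without loss of generality up to a symmetrization argument using the group of coordinate reflections, which the distribution is invariant under). A direct calculation on the pairs involving the anchor shows that the expected loss decomposes as $\frac{1}{4D}\sum_{i=1}^D f_\gamma(m_i^2, \sigma_i) + \text{(terms independent of $M$)} + O(1/D)$, where $f_\gamma$ is minimized (with margin $\Omega(\gamma)$) at $m_i^2 = \sigma_i$. Hence the excess error is at least $c\gamma/D$ times the Hamming distance between the algorithm's binarized choice and $\sigma$. Step (b): observing the paired sample gives at most $2m$ effective single observations, and each such observation pertains to a \emph{random} coordinate $i \in [D]$ and reveals a single noisy bit with bias $\gamma$. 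Step (c): draw $\sigma$ uniformly from $\{0,1\}^D$; by Assouad's lemma (or the standard coin-estimation lower bound in Anthony--Bartlett, Theorem~5.2), if $m \leq (D+1)/(512\epsilon^2)$ then $\E_\sigma \E_{S_m}[\text{Ham}(\hat{\sigma}, \sigma)] \geq c' D$, which via Markov yields an excess risk exceeding $\epsilon$ with probability greater than $\delta$ for at least one $\sigma^*$.

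\textbf{Main obstacle.} The most delicate step is (a), the reduction from the continuous, constrained metric class $\mathcal{M} = \{M : \sigma_{\max}(M) = 1\}$ to a discrete "one bit per coordinate" decision problem. I need to verify that no off-diagonal metric $M$ can exploit cross-coordinate structure in $\D_\sigma$ to beat the best diagonal metric --- this relies on the invariance of $\D_\sigma$ under sign-flips of coordinates and a convexity/averaging argument applied to the loss. A secondary subtlety is handling the $\sigma_{\max}(M) = 1$ constraint: since all proposed "good" metrics $M_\sigma$ have maximum singular value at most $1$, I would insert a single coordinate $e_0$ along which every $M$ is forced to have unit norm (a degenerate coordinate carrying no label information), so the constraint is satisfied without affecting the combinatorics. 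The remaining subtlety is that paired sampling could in principle reveal correlations across coordinates, but since each coordinate pair $(i,j)$ with $i,j \geq 1$ occurs with probability only $O(1/D^2)$, such pairs contribute $O(1/D)$ to the total information and do not change the rate.
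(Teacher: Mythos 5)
Your plan is recognizable as a biased-coins / Assouad-style lower bound, which is the same spirit as the paper's argument, but you take a genuinely different route in the key step: the paper supports the hard distributions on the $D+1$ vertices of a regular unit simplex (choosing $U=0$, $L=4/D$, $\lambda = D/4$), characterizes the optimal metrics as rank-one projections along the direction joining the centroids of the two ``label-majority'' groups (via Fact~\ref{fact:simplex_properties} and Lemma~\ref{lm:dist_lb_errall_eq_err01}), and then reduces to a binary classification problem on the \emph{product} space $\Delta_D\times\Delta_D$; whereas you place the data on $\{0,e_1,\dots,e_D\}$, fix $(U,L,\lambda)=(0,1,1)$, and try to reduce to a per-coordinate bit problem via a symmetrization over coordinate reflections.

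The gap in your proposal is in exactly the step you flag as delicate. The reduction to diagonal metrics does not follow from reflection invariance, because your distribution on pairwise differences is \emph{not} sign-flip invariant: the reflection group acts on $M$ by $M\mapsto M\,\mathrm{diag}(g)$, $g\in\{\pm1\}^D$, and for this to leave $\|M(x-x')\|^2$ equidistributed you would need $\mathrm{diag}(g)(x-x')\overset{d}{=}(x-x')$. That holds for anchor pairs ($\pm e_i$), but for $e_i$--$e_j$ pairs the orbit includes $e_i+e_j$, which never arises as a difference of two support points. So the averaged metric need not dominate every competitor, and the claim ``off-diagonal $M$ can't beat diagonal $M$'' is not established. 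Compounding this, the $e_i$--$e_j$ pairs cannot be dismissed as $O(1/D)$: each pair has probability $\Theta(1/D^2)$, but there are $\Theta(D^2)$ of them, so their total mass is $\Theta(1)$ and they directly couple the diagonal entries $m_i,m_j$ (and the off-diagonal entries of $M^\T M$) in the loss. You would need a quantitative argument that the anchor pairs (bias $\Theta(\gamma)$, mass $\Theta(1/D)$ per coordinate) dominate the cross pairs (bias only $\Theta(\gamma^2)$) in the excess-risk decomposition, \emph{and} that off-diagonal entries are uniformly unhelpful; neither is shown. The paper sidesteps both issues by design: with the regular simplex, all pairwise distances are identical, the centroid direction $a^{(1)}-a^{(2)}$ is orthogonal to every within-group vector $a^{(i)}-v_j$ (so an exact characterization of the error-minimizing $M\in\Mgap$ is available), and the product-space reduction never needs a per-coordinate decoupling. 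As written, your construction requires a nontrivial additional lemma to close the diagonalization step, and I don't see an obvious way to supply it.
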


While this may seem discouraging for large-scale applications of metric learning, note that here we made no assumptions about the underlying structure of
the data distribution $\mathcal{D}$, making this a worst-case analysis. As the individual features in real-world datasets contain
varying amounts of information for good classification performance, one hopes for a more relaxed dependence on $D$ for metric learning in these settings. This is explored in Section 
\ref{sec:discrim_complex}.

\subsection{Classifier-Based Error Analysis}

%%%\subsection{Upper bound}
%%%As before, we shall use a real valued relaxation of our base hypothesis class.
%%Fix a base real valued hypothesis class $\mathcal{H}$ on the underlying space
%%$X=\R^D$ as $\mathcal{H} \subset [0,1]^X$. We shall additionally assume
%%Lipschitz smoothness on each $h \in \mathcal{H}$, that is $\forall
%%h\in\mathcal{H}$, $x,x' \in X$, $|h(x)-h(x')| \leq \lambda \|x-x'\|$ for some
%%fixed $\lambda$. Note that the underlying labellings are still binary. We can get
%%the binary labellings from our real valued hypotheses by a simple thresholding at $1/2$.
%%Thus, we can redefine classification based error for a given weighting metric $M$ as:
%%\begin{eqnarray*}
%%\err_{\hypoth}(M,\D) := \inf_{h \in\mathcal{H}}\E_{(x,y)\sim \D} \big[ |h(Mx)-y| \geq 1/2 \big],
%%\end{eqnarray*}

In this setting, we can use an i.i.d.\ sequence of observations
$z_1,z_2,\ldots$ from $\D$ to obtain the sample $S_m = \{z_i\}_{i=1}^m$ of size
$m$ directly. To analyze the generalization ability of the weighting
metrics optimized with respect to an underlying hypothesis class $\mathcal{H}$, we need to 
effectively analyze the classification complexity of $\mathcal{H}$. The
scale sensitive version of VC-dimension, also known as the ``fat-shattering dimension", of a real-valued hypothesis class (denoted by
$\fat_\gamma(\mathcal{H})$) encodes the right notion of classification
complexity and provides an intuitive way to relate the generalization error to the
empirical error at a \emph{margin} $\gamma$ (see for instance the work of
\citet{lt:anthony_bartlett} for an excellent discussion).

In the context of metric learning with respect to a fixed hypothesis class, define the empirical error at a margin $\gamma$ as
%Let 
%The corresponding sample based error (for a given an i.i.d.\ sample $S_m = \{(x_i,y_i)\}_{i=1}^m$ of size $m$), at a 
%given \emph{margin} $\gamma$ is defined as
\begin{align*}
\err_{\hypoth}^\gamma&(M,S_m) := 
 \inf_{h \in\mathcal{H}}\frac{1}{m} \sum_{(x_i,y_i)\in S_m} \indicate[\mathsf{Margin}(h(Mx_i),y_i)<\gamma] ,
\end{align*}
where $\mathsf{Margin}(\hat{y},y) := \Big\{ \begin{array}{ll} \hat{y}-1/2 & \textrm{if $y=1$} \\ 1/2-\hat y & \textrm{otherwise} \end{array}.$
%\\

%As before, let $\mathcal{M}$ be the class of all $D\times D$ weighting matrices (with each entry in $[-1,1]$, and unit spectral norm) and define an $\epsilon$-covering of $\mathcal{M}$
% as $$\mathcal{M}_\epsilon := \big\{ M: M_{ij} \in
%\{-1,\ldots,-3\epsilon,-2\epsilon,-\epsilon,0,\epsilon,2\epsilon,3\epsilon,\ldots,1\}  
%\textrm{ for all $i,j$} \big\}.$$ 

Then for any bounded support distribution $\D$ (that is, each $(x,y)\sim \D$,
$\|x\|\leq B <\infty$), we have the following convergence result that relates
the estimation error rate of the weighting metrics with that of the
fat-shattering dimension of the underlying base hypothesis class.

\begin{lemma}
\label{lm:hypoth_ub}
Let $\mathcal{H}$ be a $\lambda$-Lipschitz base hypothesis class.
Pick any $0<\gamma<1/2$, and let $m\geq \fat_{\gamma/16}(\mathcal{H})\geq 1$.
Then with probability at least $1-\delta$ over an i.i.d.\ draw of sample $S_m$
(of size $m$) from a bounded unknown distribution $\D$ (with bound $B$) on
$X\times \{0,1\}$, 
%we have that for every $M \in \mathcal{M}$
\begin{align*}
\sup_{M\in \mathcal{M}} \Big[  \err_{\hypoth}(M,\D) - \err^\gamma_{\hypoth}(M,S_m) \Big]  
\leq O\Bigg( 
\sqrt{\frac{1}{m}\ln\frac{1}{\delta} + \frac{D^2}{m}\ln\frac{D}{\epsilon_0} + \frac{\fat_{\gamma/16}(\mathcal{H})}{m}\ln\Big(\frac{m}{\gamma}\Big) }
 \Bigg).
\end{align*}
where $\epsilon_0 := \min\{\frac{\gamma}{2},\frac{1}{2\lambda B} \}$, and $\fat_{\gamma/16}(\mathcal{H})$ is the \emph{fat-shattering dimension} of the
base hypothesis class $\mathcal{H}$ at margin $\gamma/16$.
\end{lemma}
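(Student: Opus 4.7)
The plan is a two-stage discretization: first cover the metric class $\mathcal{M}$ by a finite $\epsilon_0$-net, then apply a standard margin-based fat-shattering generalization bound at each net point, and finally pass back to an arbitrary $M\in\mathcal{M}$ by a Lipschitz perturbation argument.

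For the cover, I would build an $\epsilon_0$-net $N\subset\mathcal{M}$ in the operator norm at scale $\epsilon_0=\min\{\gamma/2,\,1/(2\lambda B)\}$. Every $M\in\mathcal{M}$ is a $D\times D$ matrix of spectral norm $1$, hence of Frobenius norm at most $\sqrt{D}$, so the usual volumetric estimate on a Euclidean ball in $\R^{D^2}$ gives $\log|N|\le O(D^2\log(D/\epsilon_0))$. For any fixed $M'\in N$, the composed class $\mathcal{H}_{M'}:=\{x\mapsto h(M'x):h\in\mathcal{H}\}$ satisfies $\fat_{\gamma/16}(\mathcal{H}_{M'})\le\fat_{\gamma/16}(\mathcal{H})$, since $M'$ is merely a fixed linear reparametrization of the input. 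Applying the classical Bartlett--Anthony margin-based uniform convergence theorem to $\mathcal{H}_{M'}$ yields, with probability at least $1-\delta/|N|$,
\[
\sup_{h\in\mathcal{H}}\Big(\E[\indicate[|h(M'x)-y|\ge 1/2]]-\tfrac{1}{m}\!\sum_{i=1}^{m}\!\indicate[\mathsf{Margin}(h(M'x_i),y_i)<\tfrac{\gamma}{2}]\Big)\;\le\; O\!\Bigg(\!\sqrt{\tfrac{\fat_{\gamma/16}(\mathcal{H})\log(m/\gamma)+\log(|N|/\delta)}{m}}\Bigg),
\]
and a union bound over $N$ makes this event hold uniformly at the cost of replacing $\log(1/\delta)$ by $\log(1/\delta)+\log|N|=\log(1/\delta)+O(D^2\log(D/\epsilon_0))$.

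For an arbitrary $M\in\mathcal{M}$, pick $M'\in N$ within $\epsilon_0$ of $M$ in operator norm. The $\lambda$-Lipschitz property of each $h\in\mathcal{H}$ together with $\|x\|\le B$ gives $|h(Mx)-h(M'x)|\le\lambda\epsilon_0 B$. The choice of $\epsilon_0$ is tuned so that this slack is small enough to supply the sandwich inequalities $\err_\hypoth(M,\D)\le\err^{\gamma/2}_\hypoth(M',\D)$ (a true-error point under $M$ has margin at most $\lambda\epsilon_0 B$ under $M'$, hence below $\gamma/2$) and $\err^{\gamma/2}_\hypoth(M',S_m)\le\err^{\gamma}_\hypoth(M,S_m)$ (any $\gamma$-margin point of $M$ retains margin at least $\gamma/2$ under $M'$). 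Combining these with the displayed bound and taking the supremum over $M\in\mathcal{M}$ yields the claim.

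The main obstacle is the last-step calibration: $\epsilon_0$ must be simultaneously small enough that the Lipschitz slack $\lambda\epsilon_0 B$ does not erode the margin $\gamma$ across the net transfer, yet large enough that $\log|N|=O(D^2\log(D/\epsilon_0))$ remains benign under the square root. The prescribed $\epsilon_0=\min\{\gamma/2,\,1/(2\lambda B)\}$ is designed to balance these competing demands; the remaining ingredients -- the covering-number estimate for the spectral-norm ball and the off-the-shelf fat-shattering bound of Bartlett and Anthony -- require no new machinery.
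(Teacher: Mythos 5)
Your proposal is correct and rests on exactly the same ingredients as the paper's proof --- an operator-norm $\epsilon_0$-net of $\mathcal{M}$ with $\log|N| = O(D^2\log(D/\epsilon_0))$, a fat-shattering/covering-number margin theorem for $\mathcal{H}$, and the Lipschitz slack $\lambda\epsilon_0 B$ bridging nearby metrics --- but you organize them in a slightly different order. The paper first assembles an explicit $\gamma$-cover of the composed class $\mathcal{F} = \{x\mapsto h(Mx) : M\in\mathcal{M}, h\in\mathcal{H}\}$ as the product of the spectral net $\mathcal{M}_\epsilon$ with a $(\gamma/2,m)$-cover of each $\mathcal{H}_M$, showing via the same Lipschitz bound you use that this is a cover of $\mathcal{F}$ at scale $\gamma/2 + \lambda\epsilon_0 B$, and then applies the covering-number uniform-convergence theorem (Anthony--Bartlett, Thm.~10.1 together with Thm.~12.8) once to $\mathcal{F}$. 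You instead invoke the off-the-shelf margin bound per net point $\mathcal{H}_{M'}$, union bound over $N$, and transfer to arbitrary $M$ by a margin shift; the extra $+\log|N|$ in your union bound is precisely the $\log$ of the matrix-cover factor in the paper's count of $|\bar F_\epsilon|$, so the two routes give identical rates. The observation that $\fat_\gamma(\mathcal{H}_{M'})\le\fat_\gamma(\mathcal{H})$ because $M'$ is a fixed (not necessarily injective) reparametrization is correct and does the same work as the paper's inequality $\mathcal{N}_\infty(\gamma/2,\mathcal{H}_M,m)\le\mathcal{N}_\infty(\gamma/2,\mathcal{H},m)$. The calibration of $\epsilon_0 = \min\{\gamma/2, 1/(2\lambda B)\}$ plays the same role in both arguments and carries the same constant-level subtlety (ensuring $\lambda\epsilon_0 B \lesssim \gamma$ rather than merely $\lesssim 1/2$), so that is not a defect unique to your version. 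In short: your proof is a valid reorganization --- ``cover $\mathcal{M}$, union bound, transfer'' instead of ``cover $\mathcal{F}$, apply once'' --- of the argument the paper actually gives.
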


Using a similar line of argument as before, we can bound the key quantity of interest (Eq.\ \ref{eq:erm_conv}) and conclude
%
%. For any
for any $0<\gamma<1/2$ and any $m\geq 1$, with probability $\geq 1-\delta$
%, fix any $\xbar M$ such
%that $\err_{\hypoth}^\gamma(\xbar M, \D) \leq \err_{\hypoth}^\gamma(M^*,\D) +
%\sqrt{\frac{\log(2/\delta)}{2m}}$. Then, with probability at least $1-\delta$,
%we have
\begin{align*}
 \err_{\hypoth}&(M^*_m, \D) - \err_{\hypoth}^\gamma(M^*,\D)  
= O\Bigg( 
\sqrt{\frac{D^2\ln(D/\epsilon_0)}{m} + \frac{\fat_{\gamma/16}(\mathcal{H}) \ln(m/ \delta \gamma)}{m} }
 \Bigg).
\end{align*}
%by noting (i) $\err_{\hypoth}^\gamma(M^*_m, S_m)  \leq
%\err_{\hypoth}^\gamma(\xbar M, S_m)$ since $M^*_m$ is empirical error
%minimizing on $S_m$, and (ii) by using Hoeffding's inequality on fixed $\xbar M$ to conclude
%that with probability at least $1-\delta/2$, 
%$\err_{\hypoth}^\gamma(\xbar M, S_m) - \err_{\hypoth}^\gamma(\xbar M,\D) \leq \sqrt{\frac{\log(2/\delta)}{2m}}$.
% %, and (ii) by applying Lemma \ref{lm:unif_conv_all}.
Here $\epsilon_0 = \min\{\frac{\gamma}{2},\frac{1}{2\lambda B} \}$ for a $\lambda$-Lipschitz hypothesis class $\mathcal{H}$.
Thus to achieve a specific estimation error rate $\epsilon$, the number of
samples $m = \Omega\Big(\frac{D^2\ln(\lambda DB/\gamma)+\fat_{\gamma/16}(\mathcal{H})\ln(1/\delta\gamma)}{\epsilon^2}\Big) $ suffices to
say, with confidence at least $1-\delta$, the empirical risk minimizing metric
$M^*_m$ will have estimation error at most $\epsilon$.

It is interesting to note that the task of finding an optimal metric
only additively increases the sample complexity over the complexity of finding
the optimal hypothesis from the underlying hypothesis class. 

%\subsection{Lower bound}

%Again, we want to establish that the upper bound on the convergence rate of classifier based metric learning is optimal (upto logarithmic factors) in the following sense:  
%there is a class of smooth ($\lambda$-Lipschitz) hypothesis class
%$\mathcal{H}$, such that it would necessarily take about $\geq
%\frac{D^2}{\epsilon^2}$ samples in order to find a weighting metric which with
%confidence at least $1-\delta $ has excess error of at most $\epsilon$ on an unknown arbitrary bounded support distribution. In
%particular we have the following.

In contrast to the sample complexity of distance-based framework (c.f.\ Lemma \ref{lm:unif_conv_all}), here we get a quadratic dependence 
on the representation dimension. 
The following lemma shows that a strong dependence on the representation dimension is necessary in absence of any specific assumptions on 
the underlying data distribution and the base hypothesis class.

\begin{lemma}
\label{lm:hypoth_lb}
Pick any $0<\gamma<1/8$.
Let $\mathcal{H}$ be a base hypothesis class of $\lambda$-Lipschitz
functions mapping from $X=\R^D$ into the
interval $[1/2-4\gamma,1/2+4\gamma]$ that is closed under addition of
constants. That is
$$h \in \mathcal{H} \implies h' \in \mathcal{H}, \textrm{ where }  h':x\rightarrow h(x)+c \;\;\; \textrm{ for all $c$.} $$
Then for any 
classification algorithm $\mathcal{A}$, and for any $B\geq 1$, there exists $\lambda\geq 0$, for all $0<\epsilon,\delta<1/64$, there exists a bounded support distribution $\D$ (with bound $B$) such that if $m \ln^2 m < O\big(\frac{D^2 + d}{\epsilon^2 \ln(1/\gamma^2) }\big)$ 
$$
\Pr_{S_m\sim\D}[\err_{\hypoth}(h^*,\D) > \err^\gamma_{\hypoth}(\mathcal{A}(S_m),\D) + \epsilon] > \delta,
$$
%where $d := \fat_{768\gamma}(\pi_{4\gamma}(\mathcal{H})) $ is the \emph{fat-shattering dimension} of 
%$\pi_{4\gamma}(\mathcal{H})$---the $(4\gamma)$-\emph{squashed} function class of $\mathcal{H}$, see Definition \ref{def:squash_fxn} below---at margin $768\gamma$.
where $d := \fat_{768\gamma}(\mathcal{H}) $ is the \emph{fat-shattering dimension} of $\mathcal{H}$ at margin $768\gamma$.
\end{lemma}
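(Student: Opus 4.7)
The plan is to prove the two additive terms $\Omega(d/\epsilon^2)$ and $\Omega(D^2/\epsilon^2)$ as separate lower bounds and combine them. The first is a direct reduction to a standard fat-shattering lower bound for $\mathcal{H}$; the second is the technical heart.

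For the $d$ part, I would restrict attention to distributions on which the identity matrix $M = I$ is optimal (for instance, distributions whose support is spread isotropically enough that no genuine reweighting helps). On such distributions the metric-learning problem reduces to agnostic real-valued learning from $\mathcal{H}$ at margin $\gamma$, and the classical fat-shattering lower bound (see, e.g., \citet{lt:anthony_bartlett}) gives $\Omega(\fat_{c\gamma}(\mathcal{H})/\epsilon^2) = \Omega(d/\epsilon^2)$ samples required. Any algorithm that succeeds in the metric-learning problem must succeed on these distributions as a special case, so the bound transfers.

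For the $D^2$ part, the plan is to construct a packing $\{\D_\sigma\}_{\sigma\in\{\pm1\}^N}$ of $2^{\Omega(D^2)}$ distributions with $N \sim D^2/\ln(1/\gamma^2)$, each of which requires a distinct optimal metric $M_\sigma$ encoding the sign pattern $\sigma$. Fix any reference hypothesis $h_0 \in \mathcal{H}$ and reference metric $M_0 \in \mathcal{M}$, and choose $N$ support points $\{x_k\}_{k=1}^N$ in the ball of radius $B$ arranged so that the differential of $M \mapsto (h_0(Mx_k))_{k=1}^N$ at $M_0$ has approximately coordinate-wise orthogonal sensitivity to the $D^2$ entries of $M$ at scale $\gamma$; the upper limit $N \sim D^2 \log(1/\gamma)$ on the number of $\gamma$-distinguishable coordinates from a $D^2$-dimensional space of matrices is the origin of the $\ln(1/\gamma^2)$ in the denominator. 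For each $\sigma$, let $\D_\sigma$ place mass on $(x_k,\cdot)$ with labels biased by $\Theta(\epsilon/\sqrt{N})$ in direction $\sigma_k$. Closure of $\mathcal{H}$ under additive constants ensures that a suitably shifted $h_0 + c_\sigma$ realizing the Bayes classifier for $\D_\sigma$ actually lies in $\mathcal{H}$. A direct calculation then gives pairwise divergence $\mathrm{KL}(\D_\sigma\|\D_{\sigma'}) = O(\epsilon^2 d_H(\sigma,\sigma')/N)$, where $d_H$ is Hamming distance, so Assouad's lemma yields $m = \Omega(N/\epsilon^2) = \Omega(D^2/(\epsilon^2\ln(1/\gamma^2)))$.

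The main obstacle is the packing construction itself: showing that the $D^2$ entries of $M$ can be made to independently control the outputs $h_0(Mx_k)$ at scale $\gamma$, while every realized function $h_0(M\,\cdot\,)$ remains in $\mathcal{H}$ after the appropriate constant shift and stays within the narrow range $[1/2-4\gamma,1/2+4\gamma]$ and Lipschitz budget. The tight range and Lipschitz constraints force the perturbation of $M$ around $M_0$ to be small (proportional to $\gamma/\lambda$), which in turn limits $N$ to $\sim D^2/\ln(1/\gamma^2)$; the closure-under-constants hypothesis is used precisely to absorb the per-pattern bias corrections needed to return each realized classifier to $\mathcal{H}$. The $\ln^2 m$ factor on the left-hand side of the stated inequality reflects the usual covering overhead in a ghost-sample-style minimax argument. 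Combining the $d$ and $D^2$ bounds then gives the stated $\Omega((D^2+d)/(\epsilon^2 \ln(1/\gamma^2)))$ lower bound.
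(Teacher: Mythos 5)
Your plan takes a genuinely different route from the paper, and it has two concrete gaps. The paper never splits $D^2$ and $d$ into separate constructions. Instead it forms the composed class $\mathcal{F} = \{x\mapsto h(Mx) : M\in\mathcal{M},\ h\in\mathcal{H}\}$, applies Theorem~13.5 of Anthony--Bartlett \emph{once} to $\mathcal{F}$ to get $m \geq \fat_{2\gamma}(\pi_{4\gamma}(\mathcal{F}))/320\epsilon^2$, and then lower-bounds $\fat_{2\gamma}(\pi_{4\gamma}(\mathcal{F}))$ by a chain of packing/covering inequalities. The heart of that chain is a single \emph{cross-product packing}: pair an $\epsilon$-spectral packing of $\mathcal{M}$ of size $\approx(1/\gamma)^{D^2}$ with a $48\gamma$-packing of $\pi_{4\gamma}(\mathcal{H})$ of size $\approx e^{\fat_{768\gamma}(\mathcal{H})/8}$. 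If two elements differ in the $h$-component the packing of $\mathcal{H}$ forces separation $\geq 48\gamma - \epsilon$; if they agree in $h$ but differ in $M$, bi-Lipschitzness of $h$ forces separation $\geq B\epsilon/2$. Taking $\epsilon = 32\gamma$ gives a $16\gamma$-packing of size $(1/32\gamma)^{D^2}\cdot\mathcal{P}_\infty(48\gamma,\pi_{4\gamma}(\mathcal{H}),m)$, and plugging this into the covering-number upper bound of Theorem~12.8 (which, crucially, carries the $\ln(m/\gamma^2)\ln(m/\gamma)$ factors) produces both the $D^2 + d$ numerator and the $\ln^2 m$ slack in one shot.

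By contrast, your two-piece decomposition has unresolved obstacles. (a) The $d$ part relies on ``restricting attention to distributions on which $M=I$ is optimal.'' But $\err_{\hypoth}(M^*,\D)$ infimizes over all contractions $M$, and for the standard Anthony--Bartlett fat-shattering lower-bound distributions it is neither obvious nor argued that no $M \neq I$ strictly lowers the best achievable error on the hard support set; if some $M$ can, your reduction loses $\epsilon$ and the bound doesn't transfer. The paper's argument never needs this: it packs $\mathcal{F}$ directly, inheriting the $\mathcal{H}$-packing without committing to $M=I$ being optimal. (b) For the $D^2$ part you yourself flag the packing construction as ``the main obstacle'' and never carry it out; you'd need to exhibit, entirely inside the narrow codomain $[1/2-4\gamma,1/2+4\gamma]$ and under the $\lambda$-Lipschitz budget, $\sim D^2/\ln(1/\gamma)$ points on which perturbing the $D^2$ matrix entries induces $\gamma$-scale, approximately independent output variation --- this is precisely what the paper's $\epsilon$-spectral packing of $\mathcal{M}$ plus bi-Lipschitz separation does for you in two lines, and your sketch has no substitute for it. Finally, the explanation of the $\ln^2 m$ factor as ``ghost-sample covering overhead'' is not right: Assouad gives clean $m=\Omega(N/\epsilon^2)$ bounds with no polylog in $m$, whereas the paper's $\ln^2 m$ arises specifically from the $(m/\gamma^2)^{\fat\cdot\ln(m/\gamma)}$ form of the covering-number upper bound (Theorem~12.8 / Lemma~\ref{lm:ab_cover_to_fat}); an Assouad proof would have to account for the slack very differently (or, if it could dispense with the slack entirely, you should say so explicitly).
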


\section{Data with Uninformative and Weakly Informative Features}
\label{sec:discrim_complex}
Different measurements have varying degrees of ``information content" for the
particular supervised classification task of interest.  Any algorithm or 
analysis that studies the design of effective comparisons between observations
must account for this variability.

To get a solid footing for our study, we introduce the concept of
\emph{metric learning complexity} of a given dataset.
Our key observation is that a metric that yields good generalization performance
should emphasize relevant features while suppressing the contribution of
spurious features. Thus, a good metric reflects the quality of individual feature measurements of data and their relative value for the learning task.
We can leverage this and define the metric learning complexity of
a given dataset as the \emph{intrinsic complexity} $d$ of the weighting metric that
yields the best generalization performance for that dataset (if multiple
metrics yield best performance, we select the one with minimum $d$).
A natural way to characterize the intrinsic complexity of a weighting metric $M$
is via the norm of the matrix representation of $M$. 
Using metric learning complexity as our gauge for the richness of the feature set in a 
given dataset, we can refine our analysis in both our canonical metric learning frameworks.

%We shall use this in our analysis of both the frameworks.

\subsection{Distance-Based Refinement}
We start with the following refinement of the distance-based metric learning
sample complexity for a class of Frobenius norm-bounded weighting metrics.

%\emph{intrinsic discriminative complexity} of a given dataset that
%provides an intuitive way to capture the classification complexity in terms of the features used to represent the data. 
%\begin{definition} Given a data distribution $\D$ with $D$ measurements, let $\xi$ be the best possible accuracy achievable by using all $D$ measurements. We 
%call the intrinsic discriminative complexity of $\D$ as $(d,\epsilon)$, if $d$ is the smallest number such that any subset of $d$ measurements can achieve an accuracy of at least $(1-\epsilon)\xi$.
%\end{definition}
%The `best possible accuracy' can be quantified more theoretically as the one achievable by the Bayes classifier, or more practically as the one achieved by some 
%classifier of interest (such as $k$NNs or SVMs). Notice that a small value of $d$ indicates that a few features contain the most discriminative power. One 
%hopes that the theoretical metric learning rates and some of the popular practical instantiations do adapt to the intrinsic complexity of data; this is explored 
%in Sections \ref{sec:theory_discrim_complex} and \ref{sec:experiments} respectively.

\begin{lemma}
\label{lm:unif_conv_dist_refi}
Let $\mathcal{M}$ be any class of weighting metrics on the feature space $X = \R^D$.
Fix any sample size $m$, and let $S_m$ be an i.i.d.\ \emph{paired} sample of
size $m$ from an unknown bounded distribution $\D$ on $X\times \{0,1\}$ (with bound $B$). For any distance-based loss function $\phi^\lambda$ that is $\lambda$-Lipschitz in the first argument, with probability at least $1-\delta$ over the draw of $S_m$,
\begin{align*}
 \sup_{M\in \mathcal{M}}  \big[ \err_{\dist}^\lambda  (M,\D) - &
\err_{\dist}^\lambda (M,S_m) \big]
%\\
%&
\leq O\Bigg( \lambda B^2  \sqrt{\frac{d\ln(1/\delta)}{m}}\Bigg),
\end{align*}
where $d$ is a uniform upperbound on the Frobenius norm of the quadratic form of weighting metrics in $\mathcal{M}$, that is, $\sup_{M\in\mathcal{M}} \|M^\mathsf{T} M\|^2_{_F }\leq d$.
\end{lemma}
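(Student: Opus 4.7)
The plan is to replay the argument behind Lemma \ref{lm:unif_conv_all} but swap its dimension-dependent capacity bound for one that exploits the Frobenius-norm restriction on $\mathcal{M}$. Since the paired sample $S_m$ consists of i.i.d.\ pairs $(z_{1,i},z_{2,i})$, I would first use the standard symmetrization inequality to upper bound the supremum on the left-hand side by twice the expected Rademacher complexity of the loss class
\[
\mathcal{L} := \bigl\{ (z_1,z_2) \mapsto \phi^\lambda(\distM(x_1,x_2),\indicate[y_1=y_2]) : M \in \mathcal{M} \bigr\},
\]
plus the usual $O(\sqrt{\ln(1/\delta)/m})$ concentration term obtained from McDiarmid's inequality (which applies because $\phi^\lambda$ is bounded on the bounded-support distribution $\D$).

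Second, since $\phi^\lambda$ is $\lambda$-Lipschitz in its first argument, Talagrand's contraction principle pulls $\lambda$ outside, reducing the task to bounding the Rademacher complexity of the distance class
\[
\mathcal{F} := \bigl\{ (x_1,x_2) \mapsto \|M(x_1-x_2)\|^2 : M \in \mathcal{M} \bigr\}.
\]

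The key observation that drives the refinement is that the squared Mahalanobis distance is \emph{linear} in the quadratic form $Q := M\T M$:
\[
\|M(x_1-x_2)\|^2 = (x_1-x_2)\T Q (x_1-x_2) = \langle Q,\, (x_1-x_2)(x_1-x_2)\T\rangle_{F}.
\]
Thus $\mathcal{F}$ is a class of linear functionals indexed by matrices $Q$ with $\|Q\|_F \le \sqrt{d}$, acting on the rank-one ``feature'' $\Phi(x_1,x_2) := (x_1-x_2)(x_1-x_2)\T$, whose Frobenius norm is $\|x_1-x_2\|^2 \le 4B^2$ under the bounded-support assumption. Applying the standard Cauchy--Schwarz / Jensen Rademacher bound for linear function classes then yields empirical Rademacher complexity $O\bigl(B^2\sqrt{d/m}\bigr)$, independently of $D$.

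Chaining these three steps --- symmetrization, contraction, and the linear-class Rademacher bound --- and reinstating the $\lambda$ factor and the McDiarmid concentration tail gives the desired $O\bigl(\lambda B^2 \sqrt{d\ln(1/\delta)/m}\bigr)$ bound. The main subtlety, rather than a real obstacle, is verifying that the relevant bounded-differences constant for McDiarmid depends only on the range of $\phi^\lambda$ (which is already controlled by its Lipschitz constant and the $4B^2$ diameter of the weighted distances), so that the concentration term does not reintroduce a dependence on $D$; everything else is routine and parallels the proof of Lemma \ref{lm:unif_conv_all}.
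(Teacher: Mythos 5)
Your proposal is correct and matches the paper's argument: the paper also bounds the Rademacher complexity of the squared-distance class via the Frobenius inner-product / Cauchy--Schwarz step (in effect treating $\|M(x-x')\|^2$ as a linear functional $\langle M\T M, (x-x')(x-x')\T\rangle_F$), uses Jensen's inequality to arrive at $\mathcal{R}_m(\mathcal{F}) \leq 4B^2\sqrt{d/m}$, and then invokes the Bartlett--Mendelson Rademacher generalization bound (Lemma~\ref{lm:rad_complexities_unif_bound}), which packages the symmetrization, Lipschitz contraction, and concentration steps you sketch separately. The only cosmetic difference is that you unbundle these three ingredients while the paper cites them as one lemma.
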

Observe that if our dataset has a low metric learning complexity (say, $d \ll D$), then
considering an appropriate class of norm-bounded weighting metrics can help sharpen the sample complexity
result, yielding a \emph{dataset-dependent} bound. We discuss how to automatically adapt
to the right complexity class in Section \ref{sec:auto_refine} below.

\subsection{Classifier-Based Refinement}

Effective data-dependent analysis of classifier-based metric learning requires accounting for 
potentially complex interactions between an arbitrary base hypothesis class and the
distortion induced by a weighting metric to the unknown underlying data
distribution. %, makes the analysis for a data-dependent metric learning bound prohibitive.
%
%It is unclear how to derive a data-dependent metric learning bound for an
%arbitrary base hypothesis class $\mathcal{H}$ since it is difficult to
%characterize interactions between an arbitrary hypothesis function and the
%distortion induced by a weighting metric to the unknown underlying data
%distribution. 
To make the analysis tractable while still keeping
our base hypothesis class $\mathcal{H}$ general, we shall assume that
$\mathcal{H}$ is a class of two layer feed-forward neural networks. Recall that for 
any smooth target function $f^*$, a two layer feed-forward neural network (with appropriate 
number of hidden units and connection weights) can approximate $f^*$ arbitrarily well \citep{nnet:univ_approx_bddfxn}, so
this class is flexible enough to incorporate most reasonable target hypotheses.

More formally, define the base hypothesis class of two layer feed-forward neural network with $K$ hidden units as
\begin{equation*}
\Hnet := \Big\{ x \mapsto \sum_{i=1}^K w_i \; \sigma^\gamma(v_i \; \cdot \; x) \; \Big| \; \|w\|_1 \leq 1, \|v_i\|_1 \leq 1  \Big\},
\end{equation*}
where $\sigma^\gamma: \R \rightarrow [-1,1]$ is a smooth, strictly monotonic, $\gamma$-Lipschitz activation function with $\sigma^\gamma(0)=0$. 
Then for the generalization error of a weighting metric $M$ defined with respect to any classifier-based $\lambda$-Lipschitz loss function $\phi^\lambda$
\begin{equation*}
\err^\lambda_{\hypoth}(M,D):= \inf_{h\in\mathcal{\Hnet}} \E_{(x,y)\sim \D} \big[ \phi^\lambda \big(h(Mx),y\big) \big],
\end{equation*}
we have the following.\footnote{Since we know the functional form of the base hypothesis class $\mathcal{H}$ (\ie~a two layer feed-forward neural net), we can provide a more precise bound than leaving it as $\fat(\mathcal{H})$.}

\begin{lemma}
\label{lm:unif_conv_clf_refi}
Let $\mathcal{M}$ be any class of weighting metrics on the feature space $X = \R^D$. For any $\gamma>0$, 
let $\Hnet$ be a two layer feed-forward neural network base hypothesis class (as defined above) and $\phi^\lambda$ be a classifier-based loss function that $\lambda$-Lipschitz in its first argument. Fix any sample size $m$, and let $S_m$ be an i.i.d.\ sample of
size $m$ from an unknown bounded distribution $\D$ on $X\times \{0,1\}$ (with bound $B$). Then with probability at least $1-\delta$,
\begin{align*}
%\lefteqn{
 \sup_{M\in \mathcal{M}}  \big[ \err_{\hypoth}^\lambda (M,\D) - 
&\err_{\hypoth}^\lambda(M,S_m)  \big]  
%\\
%&
\leq O\Bigg(B\lambda \gamma \sqrt{\frac{d\ln(D/\delta)}{m} } \Bigg),
\end{align*}
where $d$ is a uniform upperbound on the Frobenius norm of the quadratic form of weighting metrics in $\mathcal{M}$, that is, $\sup_{M\in\mathcal{M}} \|M^\mathsf{T} M\|^2_{_F }\leq d$.
\end{lemma}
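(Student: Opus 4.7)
The plan is to follow the standard Rademacher-complexity route of symmetrization followed by contraction. First, by a standard uniform-deviation bound for bounded loss classes, with probability at least $1-\delta$ over the draw of $S_m$ one has
$$\sup_{M \in \mathcal{M}} \big[\err^\lambda_{\hypoth}(M,\D) - \err^\lambda_{\hypoth}(M,S_m)\big] \;\leq\; 2\,\hat{R}_m(\mathcal{G}) + O\!\left(\sqrt{\ln(1/\delta)/m}\right),$$
where $\mathcal{G} := \{(x,y) \mapsto \phi^\lambda(h(Mx),y) : M \in \mathcal{M},\ h \in \Hnet\}$. The task reduces to controlling the empirical Rademacher complexity $\hat{R}_m(\mathcal{G})$.

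Second, I peel off layers by repeated applications of Talagrand's contraction lemma together with the convex-hull invariance of Rademacher complexity. Since $\phi^\lambda$ is $\lambda$-Lipschitz in its first argument, contraction gives $\hat{R}_m(\mathcal{G}) \leq \lambda \cdot \hat{R}_m(\mathcal{F})$ with $\mathcal{F} := \{x \mapsto h(Mx) : h \in \Hnet,\, M \in \mathcal{M}\}$. Because $\|w\|_1 \leq 1$, every $h \in \Hnet$ lies in the symmetric convex hull of single-neuron maps $\{x \mapsto \sigma^\gamma(v^\mathsf{T} Mx) : \|v\|_1 \leq 1\}$, so $\hat{R}_m(\mathcal{F})$ equals the Rademacher complexity of this single-neuron class. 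A second contraction, using that $\sigma^\gamma$ is $\gamma$-Lipschitz with $\sigma^\gamma(0)=0$, then reduces the problem (up to a factor of $\gamma$) to bounding $\hat{R}_m(\mathcal{L})$ for the linear class $\mathcal{L} := \{x \mapsto v^\mathsf{T} Mx : \|v\|_1 \leq 1,\, M \in \mathcal{M}\}$.

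Third, I compute $\hat{R}_m(\mathcal{L})$ directly. Writing $g := \sum_t \sigma_t x_t$ and using $\sup_{\|v\|_1 \leq 1} v^\mathsf{T}(Mg) = \|Mg\|_\infty$, one obtains $\hat{R}_m(\mathcal{L}) = \tfrac{1}{m}\,\E_\sigma \sup_{M \in \mathcal{M}} \|Mg\|_\infty$. For each row $M_j$ of $M$, the inner product $M_j \cdot g$ is a Rademacher sum with sub-Gaussian parameter at most $B\|M_j\|_2\sqrt{m}$, so a Hoeffding--Massart maximal inequality combined with a union bound over the $D$ rows yields $\E_\sigma \|Mg\|_\infty \leq O\!\big(B\sqrt{m\ln D}\big)\cdot \max_j \|M_j\|_2$. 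The Frobenius constraint $\|M^\mathsf{T}M\|_F^2 \leq d$ then bounds $\max_j \|M_j\|_2$ uniformly over $M \in \mathcal{M}$ through the elementary inequality $\|M_j\|_2^2 = (MM^\mathsf{T})_{jj} \leq \|MM^\mathsf{T}\|_F = \|M^\mathsf{T}M\|_F$. Collecting the peeled factors $\lambda$ and $\gamma$, and folding $\ln(1/\delta)$ into $\ln(D/\delta)$, produces the stated bound.

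The main technical obstacle is the final step: extracting the right exponent on $d$ requires packaging the Massart-style maximum over the $D$ coordinates of $Mg$ with the row-norm control supplied by $\|M^\mathsf{T}M\|_F$. A naive route via $\|Mg\|_2 \leq \|M\|_F\|g\|_2$ uses the wrong norm of $M$ and gives an inferior dependence on $D$. The other steps---symmetrization, convex-hull invariance, and the two rounds of Talagrand contraction---are routine Rademacher bookkeeping once the two-layer structure is unpacked.
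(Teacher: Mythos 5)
Your high-level plan---symmetrization, Talagrand contraction to peel off $\phi^\lambda$, convex-hull invariance of Rademacher complexity to collapse the two-layer network to the single-neuron class, and a second contraction to peel off $\sigma^\gamma$---is sound and is a genuine alternative to the paper's route, which works with Gaussian complexity, invokes a Slepian-type maximal inequality (Lemma 20 of Bartlett and Mendelson), and converts back to Rademacher complexity at the end. The gap is in your last step, where you must control $\mathcal{R}_m(\mathcal{L})=\tfrac{1}{m}\E_\sigma\sup_{M\in\mathcal{M}}\|Mg\|_\infty$ with $g:=\sum_t\sigma_t x_t$. Massart's maximal inequality gives, for \emph{each fixed} $M$, the bound $\E_\sigma\|Mg\|_\infty\leq O(B\sqrt{m\ln D})\,\max_j\|M_j\|_2$; you then substitute the uniform row-norm bound $\max_j\|M_j\|_2\leq\|M^\mathsf{T}M\|_F^{1/2}\leq d^{1/4}$ and claim the conclusion holds ``uniformly over $M\in\mathcal{M}$.'' But that chain only controls $\sup_M\E_\sigma\|Mg\|_\infty$, whereas you need $\E_\sigma\sup_M\|Mg\|_\infty$, and the interchange goes the wrong way for your purposes ($\E\sup\geq\sup\E$). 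Massart applies to a maximum over a finite list of fixed sub-Gaussian variables; here the uncountable supremum over $M$ sits inside the $\max_j$, so the argument as written does not close.

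The repair turns out to be simpler than the step it replaces, because the supremum over $M$ can be evaluated \emph{pointwise in $\sigma$} before any expectation is taken. For any realization of $g$, one has $\sup_M\max_j|M_j\cdot g| = \max_j\sup_M|M_j\cdot g|$, and for each fixed $j$ the constraint $\|M^\mathsf{T}M\|_F^2\leq d$ forces $\|M_j\|_2\leq d^{1/4}$ (indeed $\|M_j\|_2^2\leq\sigma_{\max}(M)^2\leq\|M^\mathsf{T}M\|_F$), so $\sup_M|M_j\cdot g|\leq d^{1/4}\|g\|_2$, with equality when $\mathcal{M}$ is the full Frobenius ball and $M$ concentrates its mass in row $j$. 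Hence $\sup_M\|Mg\|_\infty\leq d^{1/4}\|g\|_2$ \emph{deterministically}, and $\E_\sigma\sup_M\|Mg\|_\infty\leq d^{1/4}\,\E_\sigma\|g\|_2\leq d^{1/4}\big(\E_\sigma\|g\|_2^2\big)^{1/2}\leq d^{1/4}B\sqrt{m}$. This dispenses with Massart and with the $\ln D$ factor, and gives a $d^{1/4}$ rather than a $\sqrt{d}$ dependence---both at most what the lemma claims, so nothing is lost. The $\ln D$ in the paper's statement is an artifact of its Slepian-based route; your Rademacher route, once the interchange is handled correctly, yields a cleaner bound at this step.
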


\subsection{Automatically Adapting to Intrinsic Complexity}
\label{sec:auto_refine}

Note that while Lemmas \ref{lm:unif_conv_dist_refi} and \ref{lm:unif_conv_clf_refi} provide a
sample complexity bound that is tuned to the metric learning complexity of a given dataset, these results are \emph{not} useful directly since one cannot select the correct
norm bounded class $\mathcal{M}$ a priori (as the underlying
distribution $\D$ is unknown). 
 
Fortunately, by considering an appropriate sequence of norm-bounded 
classes of weighting metrics, we can provide a uniform bound that \emph{automatically adapts} to the intrinsic complexity of the unknown underlying data distribution $\D$.
In particular, we have the following.
\begin{corollary} 
\label{cor:unif_conv_refined}
Fix any $m$, and let $S_m$ be an i.i.d.\ sample of size $m$ from an unknown bounded distribution $\D$ (with bound $B$). 
Define $\mathcal{M}^d := \{M \;|\; \|M^\mathsf{T}M\|^2_{_F} \leq d \}$, and consider
the nested sequence of weighting metric class $\mathcal{M}^1 \subset
\mathcal{M}^2 \subset \cdots$. Let $\mu_d$ be any non-negative measure across
the sequence $\mathcal{M}^d$ such that $\sum_d \mu_d = 1$ (for $d=1,2,\cdots$).
Then for any $\lambda\geq 0$, with probability at least $1-\delta$, for all $d=1,2,\cdots$, and all $M^d \in \mathcal{M}^d$,
\begin{align}
%\nonumber
 \big[ \err^\lambda  (M^d,\D) - 
 \err^\lambda (M^d,S_m)  \big]  
%\\
%&
\leq O\Bigg(C \cdot B\lambda \sqrt{\frac{d\ln(1/\delta \mu_d)}{m}} \Bigg),
\label{eq:main_unif_refind_bound}
\end{align}
where $C:=B$ for distance-based error, or $C:=\gamma\sqrt{\ln{D}}$ for classifier-based error (with base hypothesis class $\Hnet$).

%%
%%\begin{figure*}[ht]
%%\vskip 0.2in
%%\begin{center}
%%\includegraphics[width=2.22in]{synth_MMC_errVSsample4}
%%\includegraphics[width=2.22in]{synth_MMC_errVSDIM2}
%%\includegraphics[width=2.22in]{synth_MMC_errVSiDIM2}
%%%\\
%%%\includegraphics[width=2.22in]{synth_MMC_errVSsample2}
%%%\includegraphics[width=2.22in]{synth_MMC_errVSDIM2}
%%%\includegraphics[width=2.22in]{synth_MMC_errVSiDIM2}
%%\caption{$k$-means (with $k$ set to the number of classes) clustering performance when key parameters are varied in the synthetic \alg{Simplex} dataset with MMC metric learning algorithm. 
%%\textbf{Left:} Scaling behavior as a function of training sample size for different ambient dimensions and fixed intrinsic dimension $d=1$; dashed red lines show average error rates 
%%for MMC, and solid blue lines show the error rates for regularized MMC.
%%\textbf{Center:} Scaling behavior as a function of ambient dimension for different sample sizes (and fixed intrinsic dimension $d=1$); dashed lines are represent scaling behavior at small sample sizes with no regularization (red) and with regularization (blue). 
%%\textbf{Right:} Scaling behavior as a function of intrinsic dimension for different sample sizes (for fixed ambient dimension $D=100$).
%%}
%%\label{fig:synth_MMC}
%%\end{center}
%%\vskip -0.2in
%%\end{figure*}
%%

In particular, for a data distribution $\D$ that has metric learning complexity
at most $d \in \mathbb{N}$, if there are $m \geq
\Omega\Big(\frac{d(CB\lambda)^2  \ln(1/\delta\mu_d)}{\epsilon^2} \Big)$ samples, then with probability at least
$1-\delta$
\begin{align*}
\nonumber \big[ \err^\lambda  (M_m^{\reg},\D) - \err^\lambda (M^*,\D)  \big] \; \leq \; O( \epsilon), 
%& \leq O\Bigg(C \cdot dB\lambda \sqrt{\frac{\ln(1/\delta \mu_d)}{m}} \Bigg).
\end{align*}
%for ${ M_m^{\reg}\!:=\! \argmin_{M\in \mathcal{M}}  \Big[ \err^\lambda(M,S_m) + \Lambda_{_M}  \|M^\mathsf{T}M\|_{_F} \Big] }$, where $\Lambda_{_M} := C B \lambda \sqrt{{\ln(\frac{1}{\delta\mu_{d'}})}/{m}},\;$ for ${d'\!=\!\big\lceil\|M^\mathsf{T}M\|_{_F}\big\rceil}$. 
for ${ M_m^{\reg}\!:=\! \argmin_{M\in \mathcal{M}}  \Big[ \err^\lambda(M,S_m) + \Lambda_{_M} d_{_M} \Big]}$, 
where 
$\Lambda_{_M} := C B \lambda \sqrt{{\ln\Big(\frac{1}{\delta\mu_{_{d_{M}}}}\Big)}/{m}}\;$
and 
$d_{_M}:= \big\lceil\; \|M^\mathsf{T}M\|^2_{_F} \big\rceil $ 
. 
\end{corollary}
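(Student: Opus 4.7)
The plan is to derive Corollary \ref{cor:unif_conv_refined} via a standard structural risk minimization (SRM) argument layered on top of the norm-bounded uniform convergence guarantees of Lemmas \ref{lm:unif_conv_dist_refi} and \ref{lm:unif_conv_clf_refi}. The whole argument lives on a single high-probability event assembled in two stages: a $\mu_d$-weighted union bound across the nested complexity ladder $\mathcal{M}^1 \subset \mathcal{M}^2 \subset \cdots$ to get Eq.\ \ref{eq:main_unif_refind_bound}, followed by a Hoeffding-style concentration at the single comparator $M^*$ to carry the SRM comparison.

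For the uniform bound (Eq.\ \ref{eq:main_unif_refind_bound}), for each fixed $d \in \{1,2,\ldots\}$ I apply the appropriate refined lemma to $\mathcal{M}^d$ with confidence parameter $\delta_d := \delta \mu_d$. This gives, with probability at least $1 - \delta\mu_d$, that $\sup_{M \in \mathcal{M}^d}[\err^\lambda(M,\D) - \err^\lambda(M,S_m)]$ is $O(CB\lambda\sqrt{d\ln(1/\delta\mu_d)/m})$, where $C := B$ in the distance-based case (directly from Lemma \ref{lm:unif_conv_dist_refi}) and $C := \gamma\sqrt{\ln D}$ in the classifier-based case (folding the $\ln D$ factor from Lemma \ref{lm:unif_conv_clf_refi} into $C$). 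Summing failure probabilities via $\sum_d \mu_d = 1$, the union bound fails with probability at most $\delta$. Since any $M \in \mathcal{M}$ sits in $\mathcal{M}^{d_M}$ with $d_M = \lceil \|M^\mathsf{T} M\|_F^2 \rceil$, evaluating the bound at $d = d_M$ yields Eq.\ \ref{eq:main_unif_refind_bound} simultaneously for all $M$.

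For the sample-complexity claim, I chain three inequalities on the same event. First, the uniform bound applied to $M_m^{\reg}$ gives $\err^\lambda(M_m^{\reg},\D) \leq \err^\lambda(M_m^{\reg},S_m) + O(\Lambda_{M_m^{\reg}}\sqrt{d_{M_m^{\reg}}}) \leq \err^\lambda(M_m^{\reg},S_m) + O(\Lambda_{M_m^{\reg}} d_{M_m^{\reg}})$, using $d_M \geq \sqrt{d_M}$ for $d_M \geq 1$. Second, the defining optimality of $M_m^{\reg}$ in its regularized objective gives $\err^\lambda(M_m^{\reg},S_m) + \Lambda_{M_m^{\reg}} d_{M_m^{\reg}} \leq \err^\lambda(M^*,S_m) + \Lambda_{M^*} d_{M^*}$. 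Third, Hoeffding applied to the fixed $M^*$ gives $\err^\lambda(M^*,S_m) \leq \err^\lambda(M^*,\D) + O(\sqrt{\ln(1/\delta)/m})$. Combining these and using $d_{M^*} \leq d$ together with the hypothesized sample size absorbs $\Lambda_{M^*} d_{M^*}$ into the advertised $O(\epsilon)$ slack.

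The main obstacle is that $d_{M_m^{\reg}}$ is data-dependent, so a single-class tail bound is insufficient---the uniform bound must hold across all $d$ simultaneously before one can legitimately evaluate it at $d = d_{M_m^{\reg}}$, which is precisely what the $\mu_d$-weighted union bound secures. A secondary bookkeeping point is that replacing the $\sqrt{d_M}$-type tail by $d_M$ inside the regularizer (cheap for $d_M \geq 1$) is what allows the direct SRM comparison to go through without nesting a square root inside an $\argmin$; the mild slack this introduces is absorbed into the final constants when inverting for $m$.
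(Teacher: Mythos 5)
Your proof matches the paper's argument in structure: the $\mu_d$-weighted union bound over the nested ladder to get Eq.~\eqref{eq:main_unif_refind_bound}, followed by the three-step SRM chain (uniform bound at the data-dependent $M_m^{\reg}$, optimality of $M_m^{\reg}$ in the regularized objective, concentration at the fixed comparator $M^*$), with the same $\sqrt{d_M}\leq d_M$ absorption into the regularizer. The one place you deviate is the third inequality, where you invoke Hoeffding at the fixed $M^*$; the paper instead cites Eq.~\eqref{eq:main_unif_refind_bound} applied to $M^*$, but that bound as written is one-sided (it controls $\err^\lambda(M,\D)-\err^\lambda(M,S_m)$, the opposite direction from what the SRM chain needs at $M^*$). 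Your Hoeffding step cleanly supplies the direction actually required and is the more careful choice; the paper's citation only works if one reads the underlying Rademacher bound as implicitly two-sided. This is a small bookkeeping improvement, not a different proof.
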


Observe that the measure $(\mu_d)$ above encodes our prior belief 
on the complexity class $\mathcal{M}^d$ from which a target metric is selected by a metric learning algorithm given the training sample $S_m$. In absence of any prior beliefs, $\mu_d$ can be simply set to $1/D$ (for $d=1,\ldots,D$) for unit spectral-norm weighting metrics.

Thus, for an unknown underlying data distribution $\D$ with metric learning
complexity $d$, with number of samples just proportional to $d$, we can find a good weighting metric. 

This result also highlights that the generalization error of \emph{any} weighting metric returned by an
algorithm is proportional to the (smallest) norm-bounded class to which it
belongs (cf.\ Eq.\ \ref{eq:main_unif_refind_bound}). If two metrics $M_1$ and
$M_2$ have similar empirical errors on a given sample, but have different 
intrinsic complexities, then the expected risk of the two metrics 
can be considerably different. We expect the metric with lower intrinsic complexity to yield better
generalization error.
This partly explains the observed empirical success of various types of norm-regularized optimization criteria for
finding the optimal weighting metric \citep{met_learn:alg_rob_stru_mcfee, met_learn:alg_fantope_regularization}.

Using this as a guiding principle, we can design an improved optimization
criteria for metric learning problems that jointly minimizes the sample error and
a Frobenius norm regularization penalty. In particular, 
\begin{align}
\min_{M\in\mathcal{M}} \hspace{0.2in} \err(M,S_m) \hspace{0.1in} + \hspace{0.1in} \Lambda  \; \|M^\mathsf{T}M\|^2_{_F}
\label{eq:opt_regularize}
\end{align}
for any error criteria `$\err$' used in a downstream prediction task of interest and a regularization hyper-parameter $\Lambda$ proportional to $m^{-1/2}$. 
We explore the practical efficacy of this augmented optimization on some representative applications below.

% a metric learning  
%\TODO{Discuss the motivation for a bias-variance  algorithm. ERM algorithm would overfit on limited on samples. So we should incorporate expected variance}
%

%Taking cue from 

%Thus if a algorithm
%
%It is also beneficial to validate our intuition that datasets from different application domains do in fact have a low discriminative complexity. Section \ref{sec:expt_discrim_complex} below explores the intrinsic discriminative complexity of some representative datasets.
%
%\subsection{Experiments with Dimension}
%\label{sec:expt_discrim_complex}
%
%OCR MNIST dataset, 20newsgroups text dataset, speech MFCC dataset
%
%\subsection{Adapted Learning Rates}
%\label{sec:theory_discrim_complex}
%
%
%

\begin{figure*}[t]
%\vskip 0.2in
\begin{center}
\includegraphics[width=1.81in]{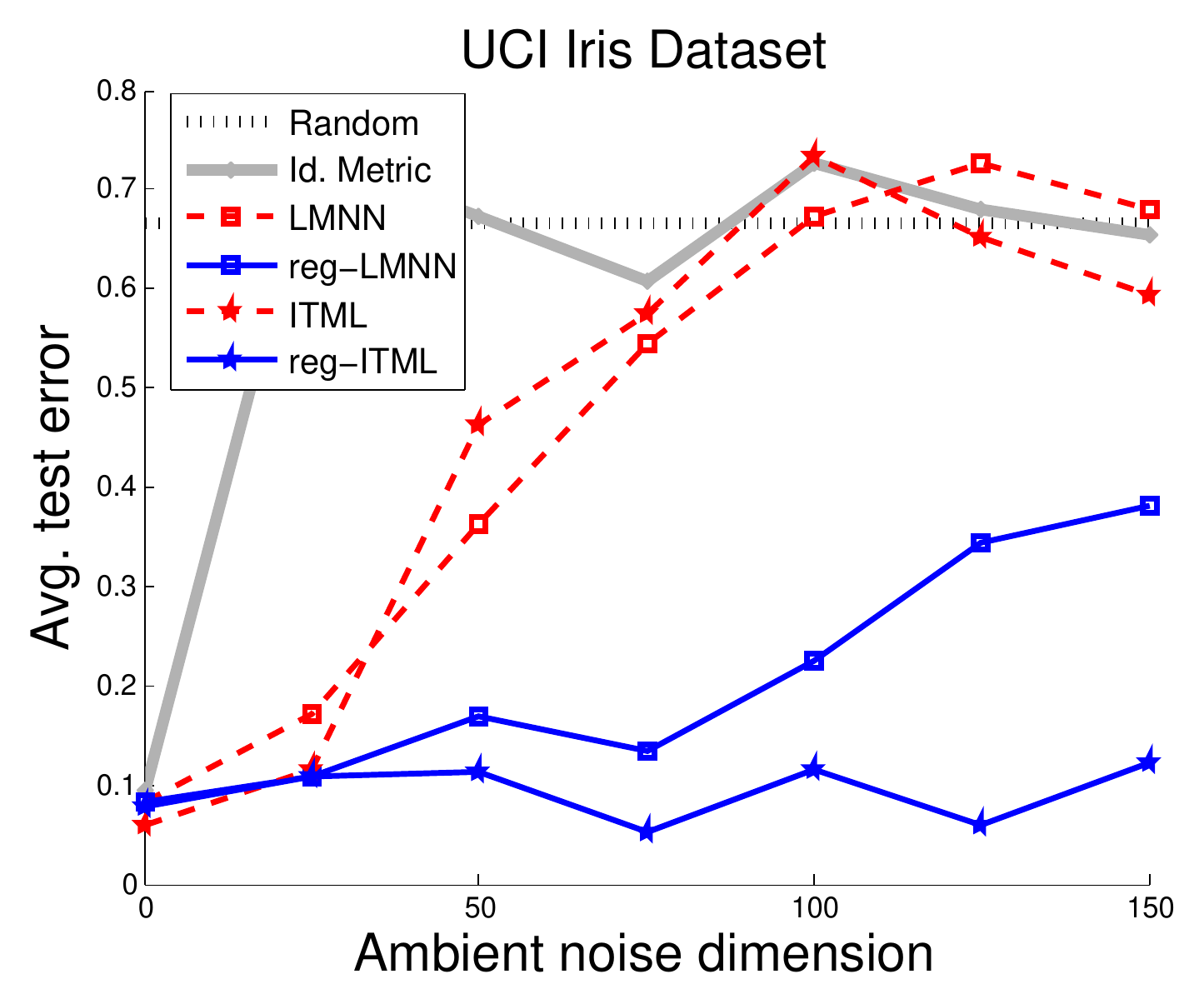}
\includegraphics[width=1.81in]{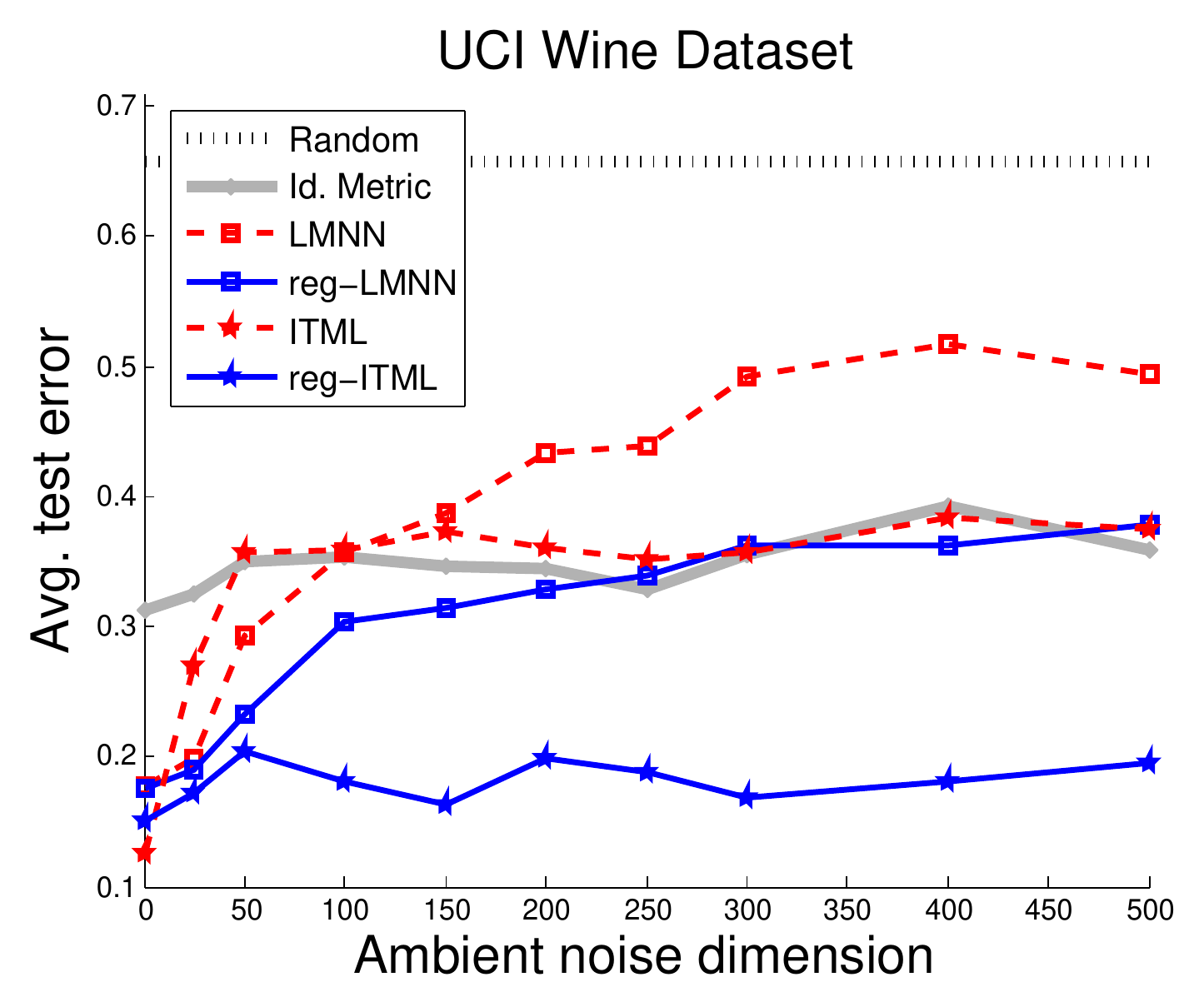}
\includegraphics[width=1.81in]{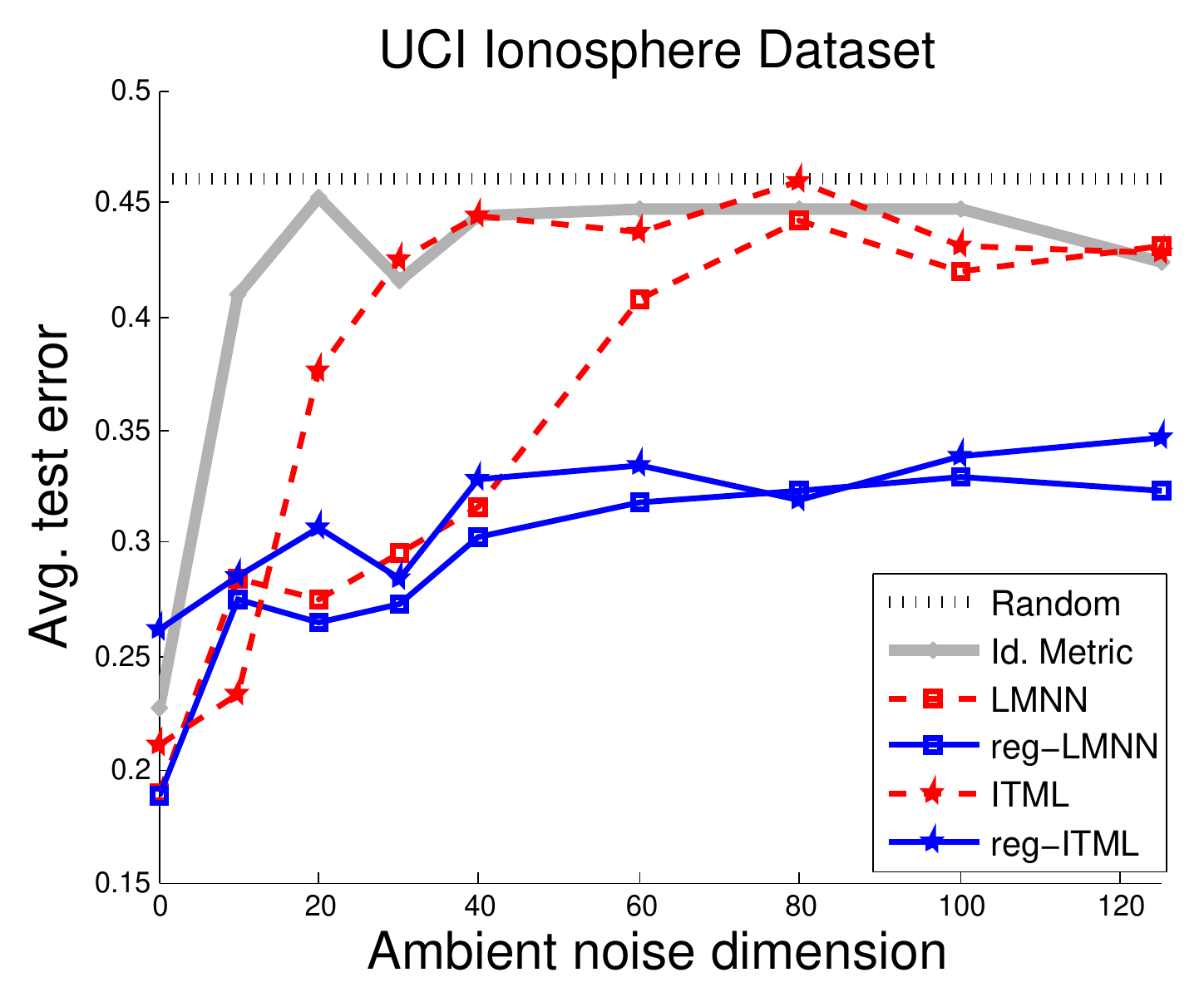}
\caption{\small Nearest-neighbor classification performance of LMNN and ITML metric learning algorithms without regularization (dashed red lines) and with regularization (solid blue lines) on benchmark UCI datasets.
The horizontal dotted line is the classification error of random label assignment drawn according to the class proportions, and solid gray line shows classification error of $k$-NN performance with respect to identity metric (no metric learning) for baseline reference.
}
\label{fig:UCI_NN}
\end{center}
\vskip -0.2in
\end{figure*}

\section{Empirical Evaluation}
\label{sec:experiments}

Our analysis shows that the generalization error of metric learning 
can scale with the representation dimension, and regularization can help mitigate this 
by adapting to the intrinsic \emph{metric learning complexity} of the given
dataset. We want to explore to what degree these effects manifest in practice. 

We select two popular metric learning algorithms, LMNN by \citet{met_learn:alg_LMNN} and ITML by \citet{met_learn:alg_ITML}, that are designed to find
metrics that improve nearest-neighbor classification quality. 
These algorithms have varying degrees of regularization built into their optimization criteria:
LMNN implicitly regularizes the metric
via its ``large margin" criterion, while ITML allows for explicit regularization by letting the practitioners specify a ``prior" weighting metric.  
We modified the LMNN optimization criteria as per Eq.\ \eqref{eq:opt_regularize} to also allow for an explicit norm-regularization controlled by the trade-off parameter $\Lambda$.

We can evaluate how the unregularized criteria (\ie~unmodified LMNN, or ITML with the prior set to the identity matrix) compares to the regularized criteria (\ie~modified LMNN with best $\Lambda$, or ITML with the prior set to a low-rank matrix).
\\

%We select two popular learning tasks, clustering and classification, which metric learning
%has been shown to benefit. The MMC algorithm is designed to learn metrics that
%improve $k$-means clustering quality \citep{met_learn:alg_xing}, and the LMNN and
%ITML algorithms are designed to find metrics that improve nearest-neighbor classification quality
%\citep{met_learn:alg_LMNN,met_learn:alg_ITML}. Our goal is to evaluate how these algorithms fare
%when key parameters (such as representation dimension and intrinsic complexity)
%of a given dataset are changed. 
%These algorithms have varying degrees of regularization built into their optimization criteria. 
%ITML allows for explicit regularization by letting the practitioners specify a ``prior" weighting metric. LMNN implicitly regularizes the metric
%via its ``large margin" criteria. MMC, on the other hand, doesn't include any regularization. 
%We can thus study how much does norm-regularization help these algorithms. 

\noindent \textbf{Datasets.} 
%For the $k$-means clustering task, we use a carefully
%designed synthetic dataset that serves a two-fold goal: understanding how changing
%the ambient and intrinsic dimension affect performance, and studying the
%interplay between sample size and regularization. We create the synthetic
%\alg{Simplex} dataset as follows. For any ambient dimension $D$, we sample
%uniformly from the vertices of a regular unit side-length $D$-simplex that is
%centered at the origin. We add mild independent Gaussian noise to each
%sample, giving us our $D$ dimensional measurements. To set the intrinsic
%dimension $d$ we assign one of the $d+1$ labels to each of the vertex clusters. 
%%For the nearest-neighbor classification task, we create a grid dataset as follows.
%See Appendix \ref{appendix:synth_simplex_data_gen} for more details.
We use the UCI benchmark datasets for our experiments:
\alg{Iris} (4 dim., 150 samples), \alg{Wine} (13 dim., 178 samples) and \alg{Ionosphere} (34
dim., 351 samples) datasets \citep{dataset:UCI_ML_repo}. Each dataset has a
fixed (unknown) intrinsic dimension; we can vary the representation dimension by 
augmenting each dataset with synthetic correlated noise of varying dimensions,
simulating regimes where datasets contain
large numbers of uninformative features. 

Each UCI dataset is augmented with synthetic $D$-dimensional correlated noise as follows. We first sample a covariance matrix $\Sigma_D$ from
unit-scale Wishart distribution (that is, let $A$ be a $D\times D$ Gaussian random matrix with
entry $A_{ij} \sim N(0,1)$ drawn i.i.d., and set $\Sigma_D := A^\mathsf{T}A$). Then each sample $x_i$ from the dataset is appended independently by drawing
noise vector $x_\sigma \sim N(0,\Sigma_D)$.
\\

%
%
%by appending the observations with synthetic $D$-dimensional correlated noise that is sampled according to 
%covariance matrix $\Sigma_D$ drawn from a unit-scale Wishart distribution. See Appendix \ref{appendix:synth_noise_gen} for details.

\noindent \textbf{Experimental setup.}
We varied the ambient noise dimension $D$ between 0 and 500 dimensions and added it to the UCI datasets, creating the noise-augmented datasets.
Each noise-augmented dataset was randomly split between 70\% training, 10\%
validation, and 20\% test samples. 

We used the default settings for each algorithm. 
For regularized LMNN, we picked the best performing trade-off parameter $\Lambda$ from $\{0,0.1,0.2,...,1\}$ on the validation set.
For regularized ITML, we seeded with the rank-one
discriminating metric, \ie~we set the prior as the matrix with all zeros,
except the diagonal entry corresponding to the most
discriminating coordinate set to one.

All the reported results were averaged over 20 runs. 
\\

%We modified the MMC and LMNN optimization criteria as per Eq.\
%\eqref{eq:opt_regularize} to include an explicit norm-regularization with
%trade-off parameter $\lambda$. To regularize ITML, we seeded with a rank one
%discriminating metric\footnote{The rank one metric is chosen as a matrix with all zeros,
%except the diagonal entry corresponding to the most
%discriminating coordinate set to one.} as the
%``prior". 
%
%For the $k$-means experiments, we sampled from the \alg{Simplex} dataset with varying sample sizes $m$, ambient dimensions $D$, and intrinsic dimensions $d$. % (see Figure \ref{fig:synth_MMC}). 
%For the nearest-neighbor experiments, we varied the ambient dimension $D$
%for a fixed sample size $m$ that was randomly split between 70\% training, 10\%
%validation and 20\% test. % (see Figure \ref{fig:UCI_NN}).
%To pick the trade-off parameter $\lambda$ for regularization, we selected the best performing $\lambda \in \{0,0.1,0.2,...,1\}$ on the validation set.
%All the reported results were averaged over 20 runs. 
%

%for MMC and LMNN the best regularization parameter was ($\lambda \in \{0, 0.1, 0.2,\ldots, 1\}$) was 
%ITML seeded with identity metric (no regularization) vs. rank 1 metric (regularization) 
%Split training 70\%, validation 10\%, test 20\%.

\noindent \textbf{Results.}
%Figure \ref{fig:synth_MMC} shows the $k$-means clustering performance of MMC on the \alg{Simplex} dataset.
%The left plot shows the error rates with respect to the sample size. Notice that even when the unregularized MMC (dashed red curves) doesn't seem
%to achieve good accuracy, the regularized version (solid blue curves) can quickly converge with small sample sizes. The center plot shows the same
%experiment data as a function of ambient dimension. Here we uncover that regularization \emph{does not} seem to adapt to intrinsic dimension for very small sample sizes (blue dashed lines) but it helps significantly once the sample size grows (solid blue lines). The right plot shows the expected scaling behavior as a function of intrinsic dimension.
%
Figure \ref{fig:UCI_NN} shows the nearest-neighbor performance (with $k=3$) of LMNN and ITML on noise-augmented UCI datasets.
Notice that the unregularized versions of both algorithms (dashed red lines) scale poorly when noisy features are introduced. 
As the number of uninformative features grows, the performance of both
algorithms quickly degrades to that of classification performance in the original unweighted space with no metric learning (solid gray line),
showing poor adaptability to the signal in the data.

Interestingly, neither of the unregularized algorithms performs consistently 
better than the other on datasets with high noise:
ITML yields better results on \alg{Wine}, whereas LMNN seems better for
\alg{Ionosphere}, and both algorithms yield similar performance on \alg{Iris}.

The regularized versions of both algorithms (solid blue lines) significantly
improve the classification performance. Remarkably, regularized ITML shows almost no
degradation in classification performance, even in very high noise regimes, demonstrating a strong robustness to noise. 
%
%
%The relatively flat curves of regularized ITML shows that that  , the classification results improve significantly.
%In addition, regularized ITML typically performs better that regularized LMNN in high noise regimes.

% once
%regularization is introduced. In addition, regularized ITML typically performs better that regularized LMNN in high noise regimes.

 These results underscore the value of regularization in metric learning, showing that regularization encourages adaptability to the
intrinsic complexity and improved robustness to noise.

% improve  clustering We shall systematically study \alg{Mmc} \alg{Lmnn} \alg{Itml}.
%
%

%We can validate . Our two-fold goal 
%We experimentally validate the degree to which 
%
%Algorithms to use: \alg{Erm} vs.\ $\alg{Erm-Reg}$, \alg{Mmc} vs.\ \alg{Mmc-Reg}, \alg{Lmnn} vs.\ \alg{Lmnn-Reg}.
%
%Synthetic Gaussian dataset controlling the intrinsic and representation dimension.
%Faces, ImageNet dataset

\section{Discussion and Conclusion}
\label{sec:related_work}

Previous theoretical work on metric learning has focused almost exclusively on
analyzing the generalization error of variants of the optimization criteria for
the distance-based metric learning framework. 

\citet{met_learn:analysis_regularized}, for instance, analyzed the generalization ability of
regularized, convex-loss optimization criteria for pairwise distances via an algorithmic \emph{stability} analysis. They derive an
interesting sample complexity result that is sublinear in $\sqrt{D}$ 
for datasets of representation dimension $D$. They discuss that the sample complexity can potentially be 
independent of $D$, but do not characterize specific instances or classes of problems where this may be possible.

%\citet{met_learn:analysis_conv_loss} analyze lipschitz convex losses.

Likewise, recent work by \citet{met_learn:analysis_robust_bellet} uses algorithmic
\emph{robustness} to analyze the generalization ability for pairwise- and triplet-based
distance metric learning. Their analysis relies on the existence of a partition of the input space, such that 
in each cell of the partition, the training loss and test loss does not deviate
much (robustness criteria). Note that their sample
complexity bound scales with the partition size, which in general can be
exponential in the representation dimension. 

Perhaps the works most similar to our approach are the sample complexity
analyses by \citet{met_learn:analysis_conv_loss} and
\citet{met_learn:analysis_cao_guo_ying}.
\citet{met_learn:analysis_conv_loss} analyze the consistency of the ERM
criterion for metric learning. They show a $O(m^{-1/2})$ rate of convergence for the ERM with $m$ samples to the expected risk 
for thresholds on bounded convex losses for distance-based metric
learning. Our upper-bound in Lemma \ref{lm:unif_conv_all} generalizes this
result by considering arbitrary (possibly non-convex) distance-based Lipschitz losses and explicitly shows the 
dependence on the representation dimension $D$.
\citet{met_learn:analysis_cao_guo_ying} provide an alternate analysis based on norm regularization of the weighting metric for distance-based metric learning.
%on the other hand analyze the sample complexity of
%norm-regularized metric learning. 
Their result parallels our
norm-regularized criterion in Lemma \ref{lm:unif_conv_dist_refi}. While they focus on
analyzing a specific optimization criterion -- thresholds on the hinge loss with
norm-regularization, our result holds for general Lipschitz losses. 
%A key
%conceptual difference between their and our work is the motivation for
%analyzing norm-regularized metric learning: they approach the problem to understand  They approach the problem

%A key difference is
%that they analyze the thresholds on the hinge loss whereas our result is for general
%Lipschitz losses. 

It is worth emphasizing that none of these related works discuss the importance of or leverage the intrinsic structure in data for the metric learning problem. 
%exploit are in some sense \emph{algorithm-centric}: the sample complexity rates either only hold for \emph{stable}, \emph{robust} metric learning algorithms. Our results, on
%the other hand, 
%are algorithm agnostic and characterize a \emph{data-centric} complexity of metric learning.
% 
Our results in Section \ref{sec:discrim_complex} formalize an intuitive notion of dataset's intrinsic complexity for metric learning and show
%give an intuitive characterization of data complexity in terms of the \emph{metric learning complexity}, 
%and show 
sample complexity rates that are
finely tuned to this \emph{metric learning complexity}. 
%and
%only mildly depend on its representation dimension.

The classifier-based framework we discuss has parallels with the kernel
learning literature. The typical focus in kernel learning is to analyze the generalization
ability of the hypothesis class of linear separators in general Hilbert spaces
\citep{kern_learn:analysis_ying_campbell, kern_learn:analysis_cortes_mohri}.
Our work provides a complementary analysis for learning explicit linear transformations of the given representation space for arbitrary hypotheses classes.

Our theoretical analysis partly justifies the empirical success of norm-based regularization as well. Our empirical results show that such regularization not only helps 
in designing new metric learning algorithms \citep{met_learn:alg_rob_stru_mcfee, met_learn:alg_fantope_regularization},
but can even benefit existing metric learning algorithms in high-noise regimes.

%\section{Discussion and Conclusion}

% Acknowledgements should only appear in the accepted version. 
%\section*{Acknowledgments} 
%
%The authors would like to thank Aditya Menon for helpful discussions. 
%%%\textbf{Do not} include acknowledgements in the initial version of
%%%the paper submitted for blind review.
%%%
%%%If a paper is accepted, the final camera-ready version can (and
%%%probably should) include acknowledgements. In this case, please
%%%place such acknowledgements in an unnumbered section at the
%%%end of the paper. Typically, this will include thanks to reviewers
%%%who gave useful comments, to colleagues who contributed to the ideas, 
%%%and to funding agencies and corporate sponsors that provided financial 
%%%support.  

% In the unusual situation where you want a paper to appear in the
% references without citing it in the main text, use \nocite

{
\bibliographystyle{icml2015}
\bibliography{mlearn}

\nocite{lt:anthony_bartlett}
\nocite{met_learn:analysis_robust_bellet}
\nocite{met_learn:analysis_conv_loss}
\nocite{met_learn:analysis_regularized}
\nocite{met_learn:analysis_cao_guo_ying}
\nocite{met_learn:analysis_guo_ying}
\nocite{met_learn:survey_bellet}
\nocite{met_learn:alg_ITML}
\nocite{met_learn:alg_LMNN}
\nocite{met_learn:alg_rob_stru_mcfee}
\nocite{kern_learn:analysis_cortes_mohri}
\nocite{kern_learn:analysis_ying_campbell}
\nocite{dataset:UCI_ML_repo}
}

%\newpage
\clearpage

\appendix
\section{Appendix: Various Proofs}
%Throughout the text we shall use the following notation. Define $\mathcal{M}$ to be the 
%set of all $D \times D$ weighting matrices $M$ such that each entry $M_{ij} \in [-1,1]$.
%
%Define $\mathcal{M}_\epsilon$ as element-wise $\epsilon$-cover of $\mathcal{M}$ as
%\begin{align*}
%\mathcal{M}_\epsilon := \big\{ M : M_{ij} \in \{ -1,\ldots, -2\epsilon, -\epsilon, 0, \epsilon, 2\epsilon,& \ldots, 1\} \\ &\textrm{ for all $i,j$}  \big\}
%\end{align*}
%
%The following fact about the size of $\mathcal{M}_\epsilon$ will he helpful in our proofs. 
%\begin{lemma}
%\label{lm:eps_net_sz}
%For all $\epsilon>0$, $ \epsilon^{-D^2} \leq |\mathcal{M}_\epsilon| \leq (\epsilon^{-1} +3)^{D^2}.$
%\end{lemma}

\subsection{Proof of Lemma \ref{lm:unif_conv_all}}

Let $\mathcal{P}$ be the probability measure induced by the random variable
$(\mathbf{X}, Y)$, where $\mathbf{X}:= (x,x')$, $Y:=\indicate[y=y']$, st. $((x,y),(x',y')) \sim (\D \times \D)$.

Define function class 
\begin{align*}
& \mathcal{F} := 
%\\
%&
 \Bigg\{ f_M \!: \mathbf{X} \mapsto \|M(x-x')\|^2 \Bigg|\!  \begin{array}{c} M \in \mathcal{M} \\ \mathbf{X} = (x,x') \in (X \times X) \end{array} \!\! \Bigg\},
\end{align*}
and consider any loss function $\phi^\lambda(\rho, Y)$ that is $\lambda$-Lipschitz in the first argument.
Then, we are interested in bounding the quantity 
\begin{align*}
\sup_{f_M \in \mathcal{F}} \E_{(\mathbf{X},Y)\sim \mathcal{P}} [\phi^\lambda(f_M(\mathbf{X}), Y)] - \frac{1}{m} \sum_{i=1}^m \phi^\lambda(f_M(\mathbf{X}_i),Y_i),
\end{align*}
where $\mathbf{X}_i:=(x_{1,i},x_{2,i}) $, $Y_i:=\indicate[y_{1,i}=y_{2,i}]$
from the paired sample $S_m = \{((x_{1,i},y_{1,i}),(x_{2,i},y_{2,i}))
\}_{i=1}^m$.

Define $\bar{x}_i := x_{1,i} - x_{2,i}$ for each $ \mathbf{X}_i=(x_{1,i},x_{2,i})$. 
Then, the Rademacher complexity\footnote{See the definition of Rademacher complexity in the statement of Lemma \ref{lm:rad_complexities_unif_bound}.} of our function class $\mathcal{F}$ (with respect to the distribution $\mathcal{P}$) is bounded, since (let $\sigma_1,\ldots, \sigma_m$ denote
independent uniform $\{\pm 1\}$-valued random variables)
\begin{align*}
\mathcal{R}_m(\mathcal{F}, \mathcal{P}) 
& := \E_{\substack{\mathbf{X}_i,\sigma_i \\ i\in[m]}}  \Bigg[ \sup_{f_M \in \mathcal{F}} \frac{1}{m} \sum_{i=1}^m \sigma_i f_M(\mathbf{X}_i) \Bigg]  \\
&= \frac{1}{m} \E_{\substack{\mathbf{X}_i,\sigma_i \\ i\in[m]}} \sup_{M \in \mathcal{M}} \Big[ \sum_{i=1}^m \sigma_i \bar{x}_i^\mathsf{T} M^\mathsf{T}M \bar{x}_i   \Big] 
\\
&
= \frac{1}{m} \E_{\substack{\mathbf{X}_i,\sigma_i \\ i\in[m]}} \sup_{\substack{M \in \mathcal{M}, \textrm{ s.t.} \\ [a^{jk}]_{jk} := M^\mathsf{T}M }} \Bigg[ \sum_{j,k} a^{jk} \sum_{i=1}^m \sigma_i \bar{x}_i^j \bar{x}_i^k \Bigg] \\
&\leq \frac{1}{m} \E_{\substack{\mathbf{X}_i,\sigma_i \\ i\in[m]}} \sup_{M \in \mathcal{M}} \Bigg[ \|M^\mathsf{T}M\|_{_\textrm{F}} \Bigg( \sum_{j,k} \Big(\sum_{i=1}^m \sigma_i \bar{x}_i^j \bar{x}_i^k \Big)^2 \Bigg)^{1/2} \Bigg] \\
&\leq \frac{\sqrt{D}}{m} \E_{\substack{\mathbf{X}_i,  i\in[m]}} \Bigg( \E_{\substack{\sigma_i,i\in[m]}}  \sum_{j,k} \Big(\sum_{i=1}^m \sigma_i \bar{x}_i^j \bar{x}_i^k \Big)^2 \Bigg)^{1/2} \\
&= \frac{\sqrt{D}}{m} \E_{\substack{\mathbf{X}_i,  i\in[m]}} \Bigg( \sum_{j,k} \sum_{i=1}^m \Big(\bar{x}_i^j \Big)^2 \Big(\bar{x}_i^k \Big)^2 \Bigg)^{1/2} \\
&= \frac{\sqrt{D}}{m} \E_{\substack{\mathbf{X}_i,  i\in[m]}} \Bigg( \sum_{i=1}^m \|\bar{x}_i\|^4 \Bigg)^{1/2} \\
&= \frac{\sqrt{D}}{m} \E_{\substack{ (x_i, x'_i) \sim (\D|_X \times \D|_X) ,\\  i\in[m]}} \Bigg( \sum_{i=1}^m \|x_i - x'_i\|^4 \Bigg)^{1/2} \\
&\leq \sqrt{\frac{D}{m}} \Bigg (\E_{\substack{ (x, x') \sim (\D|_X \times \D|_X)}} \|x - x'\|^4 \Bigg)^{1/2} \\
&\leq 4B^2 \sqrt{\frac{D}{m}}, 
%& ==== \\
%&\leq \frac{1}{m} \Bigg( \E_{\substack{\mathbf{X}_i,\sigma_i \\ i\in[m]}} \Bigg[ \sup_{f_M \in \mathcal{F}} \Big[ \sum_{i=1}^m \sigma_i f_M(\mathbf{X}_i)\Big]^2 \Bigg] \Bigg)^{1/2}  \\
%& = \frac{1}{m} \Bigg( \E_{\mathbf{X}_i, i\in[m]} \sup_{f_M \in \mathcal{F}} \sum_{i=1}^m f^2_M(\mathbf{X}_i) \Bigg)^{1/2} \\
%&\leq \frac{1}{\sqrt{m}} \Bigg( \E_{\substack{x_{1,i},x_{2,i},\\i\in[m]}} \Big\|\bar{M}(x_{1,i} - x_{2,i} ) \Big\|^2 \Bigg)^{1/2}\\
%&= \frac{1}{\sqrt{m}} \Bigg(2 \tr \big( (\bar{M}^{\mathsf{T}}\bar{M}) \; \E_x(x x\T) \big)  \Bigg)^{1/2} = \sqrt{\frac{2d}{m}}.
%& \Big( \E_{\sigma_1,\ldots, \sigma_m} \Big[ \Big| \frac{1}{m} \sum_{i=1}^m \sigma_i f_\lambda(z_i,z_i; M) \Big| \Big]  \Big)^2 \\
%& \leq \lambda^2 \sigma^2_{\max}(M) \Big(  \E_{\sigma_1,\ldots, \sigma_m} \Big[ \frac{1}{m} \sum_{i=1}^m \sigma_i \|[x_i x'_i] \| \Big] \Big)^2 \\
%& \leq \lambda^2 \sigma^2_{\max}(M) \cdot  \E_{\sigma_1,\ldots, \sigma_m} \Big[ \frac{1}{m} \sum_{i=1}^m \sigma_i \|[x_i x'_i] \| \Big]^2 \\
%& \leq \frac{ \lambda^2 \sigma^2_{\max}(M) }{m^2} % \cdot  \E_{\sigma_1,\ldots, \sigma_m} \Big[ \sum_{i=1}^m \sigma_i \|[x_i x'_i] \|^2 \Big] 
\end{align*}
where the second inequality is by noting that $\sup_{M\in\mathcal{M}} \|M^\mathsf{T}M\|_{_F} \leq \sqrt{D} $  for the class of weighting metrics $\mathcal{M} := \big\{M \;|\; M\in\R^{D\times D}, \sigma_{\max}(M)=1 \big\}$. 

Recall that $\D$ has bounded support (with bound $B$). Thus, by noting that $\phi^\lambda$ is $8B^2$ bounded function that is $\lambda$-Lipschitz in the first argument, we can apply Lemma \ref{lm:rad_complexities_unif_bound} and get the desired 
uniform deviation bound. \qed

\begin{lemma} \textbf{\emph{[Rademacher complexity of bounded Lipschitz loss functions \cite{lt:bartlett_mendelson_radgauss_complexities}] } }
\label{lm:rad_complexities_unif_bound}
Let $\D$ be a fixed unknown distribution over $X \times \{-1,1\}$, and
let $S_m$ be an i.i.d.\ sample of size $m$ from $\D$.
Given
a hypothesis class $\mathcal{H}\subset\R^{X}$ and a loss function 
$\ell : \R \times \{-1,1\} \rightarrow \R$, such that 
$\ell$ is $c$-bounded, and is $\lambda$-Lipschitz in the first argument, that
is, $\sup_{(y',y)\in\R\times\{-1,1\}} | \ell(y',y) | \leq c$, and $|\ell(y',y)
- \ell(y'',y)|\leq \lambda |y' - y''|$, we have the following:

%define
%$\mathcal{H}_\ell := \{ \ell(y,h(x)) | h\in\mathcal{H}, x\in\mathcal{X}, y\in\{0,1\} \}$.
for any $0<\delta<1$, with probability at least $1-\delta$, every $h\in \mathcal{H}$ satisfies
\begin{equation*}
\err(\ell \circ h,\D) \leq \err(\ell \circ h,S_m) + 2\lambda \mathcal{R}_m(\mathcal{H}, \D) + c\sqrt{\frac{2 \ln(1/\delta)}{m}},
\end{equation*}
where 
\begin{itemize}
\item $\err(\ell \circ h,\D) := \E_{{x,y}\sim \D} [\ell(h(x),y)]$,
\item $\err(h,S_m) := \frac{1}{m} \sum_{(x_i,y_i)\in S_m} \ell(h(x_i),y_i)   $,
\item 
$\mathcal{R}_m(\mathcal{H}, \D)$ is the Rademacher complexity of the function class $\mathcal{H}$ with respect to the distribution
$\D$ given $m$ i.i.d.\ samples, and is defined as:
\begin{equation*}
\mathcal{R}_m(\mathcal{H},\D) := \E_{\substack{ x_i\sim \D|_X, \\ \sigma_i \sim \mathrm{unif}\{\pm 1\}, \\ i\in[m] }} \Bigg[\sup_{{h}\in\mathcal{H}} \frac{1}{m} \sum_{i=1}^m \sigma_i h(x_i)   \Bigg],
\end{equation*}
where $\sigma_i$ are independent uniform $\{\pm 1\}$-valued random variables.
\end{itemize}
\end{lemma}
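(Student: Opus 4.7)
The plan is to establish this classical uniform deviation bound via the standard three-step route: bounded-differences concentration, symmetrization, and a Lipschitz contraction argument.

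First, I would define the centered supremum $\Phi(S_m) := \sup_{h\in \mathcal{H}} \bigl[\err(\ell\circ h,\D) - \err(\ell\circ h,S_m)\bigr]$ and observe that, because $\ell$ is $c$-bounded, replacing any single sample $(x_i,y_i)$ by an independent copy shifts each empirical average $\err(\ell\circ h,S_m)$ by at most $2c/m$, and hence shifts $\Phi$ by at most $2c/m$ uniformly in $h$. McDiarmid's bounded-differences inequality then yields, with probability at least $1-\delta$,
\begin{equation*}
\Phi(S_m) \;\leq\; \E[\Phi(S_m)] + c\sqrt{\tfrac{2\ln(1/\delta)}{m}}.
\end{equation*}

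Next, I would bound $\E[\Phi(S_m)]$ by the standard symmetrization trick: introduce an independent ``ghost'' sample $S'_m$ of size $m$, rewrite $\err(\ell\circ h,\D) = \E_{S'_m}[\err(\ell\circ h,S'_m)]$, push the supremum inside the expectation over $S'_m$, and then insert i.i.d.\ Rademacher signs $\sigma_i\in\{\pm 1\}$, which is legitimate because the paired differences $\ell(h(x'_i),y'_i)-\ell(h(x_i),y_i)$ are symmetric. This produces
\begin{equation*}
\E[\Phi(S_m)] \;\leq\; 2\,\E_{S_m,\sigma}\Bigl[\sup_{h\in\mathcal{H}}\tfrac{1}{m}\sum_{i=1}^m \sigma_i\,\ell(h(x_i),y_i)\Bigr].
\end{equation*}

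Finally, I would strip off the loss via the Ledoux--Talagrand contraction principle: conditional on $S_m$, each map $u\mapsto \ell(u,y_i)$ is $\lambda$-Lipschitz, so the Rademacher average of $\{\ell(h(x_i),y_i)\}_i$ is at most $\lambda$ times the Rademacher average of $\{h(x_i)\}_i$. Taking expectation over $S_m$ identifies the latter with $\mathcal{R}_m(\mathcal{H},\D)$, giving the $2\lambda\,\mathcal{R}_m(\mathcal{H},\D)$ term. Combining with the McDiarmid tail yields the claimed bound on $\err(\ell\circ h,\D)-\err(\ell\circ h,S_m)$ that holds simultaneously for every $h\in\mathcal{H}$.

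The only real subtlety, rather than an obstacle, is the contraction step: the cleanest versions of Ledoux--Talagrand assume the contractions vanish at zero, so one must either pre-center by subtracting $\ell(0,y_i)$ (whose contribution cancels after symmetrization because of the Rademacher signs) or invoke the offset-tolerant variant of the lemma. Once this is handled, the remainder is mechanical.
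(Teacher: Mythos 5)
The paper does not prove this lemma at all; it is stated as a cited result from Bartlett and Mendelson (2002) and invoked as a black box in the proofs of Lemmas \ref{lm:unif_conv_all}, \ref{lm:unif_conv_dist_refi}, and \ref{lm:unif_conv_clf_refi}. Your three-step sketch (McDiarmid on the centered supremum, ghost-sample symmetrization, Ledoux--Talagrand contraction) is the standard and correct derivation of exactly this statement, and the constants match: the $2c/m$ bounded-difference gives the $c\sqrt{2\ln(1/\delta)/m}$ tail, symmetrization contributes the factor $2$, and contraction contributes the factor $\lambda$. You are also right that the one genuine caveat is the zero-at-origin hypothesis in the contraction lemma, and your fix of re-centering by $\ell(0,y_i)$ (whose contribution vanishes in expectation over the Rademacher signs before the supremum is taken) is the usual way to dispatch it. So the proposal is correct; since the paper offers no proof, there is nothing to contrast it with beyond noting that this is the argument the cited source itself uses.
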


\subsection{Proof of Lemma \ref{lm:lb_dist}}

We shall exhibit a finite class of bounded support distributions $\classD$, such that if $\D$ is chosen uniformly at random from $\classD$, the expectation (over the random choice of $\D$) of the probability of failure (that is, generalization error of the metric returned by $\mathcal{A}$ compared to that of the optimal metric exceeds the specified tolerance level $\epsilon$) is at least $\delta$. This implies that for some distribution in $\classD$ the probability of failure is at least $\delta$ as well.

Let $\Delta_D := \{x_0,\ldots, x_{D}\}$ be a set of $D+1$ points that from the vertices of a regular unit-simplex from the underlying space $X=\R^D$ as per Definition
\ref{def:reg_simplex} (see below). For a fixed parameter $0<\alpha<1$ (exact value determined later), define $\classD$ as the class of all distributions $\D$
on $X \times \{0,1\}$ such that:
\begin{itemize}
\item $\D$ assigns zero probability to all sets not intersecting $\Delta_D \times \{0,1 \}$.
\item for each $i = 0,\ldots,D$, either
  \begin{itemize}
  \item $\Pr[(x_i,1)] = (1+\sqrt{\alpha})/2 $ and $\Pr[(x_i,0)] = (1-\sqrt{\alpha})/2$, or
  \item $\Pr[(x_i,1)] = (1-\sqrt{\alpha})/2 $ and $\Pr[(x_i,0)] = (1+\sqrt{\alpha})/2$.
  \end{itemize}
\end{itemize}

For concreteness, we shall use a specific instantiation of $\phi^\lambda_{L,U}$ in $\err_{\dist}^\lambda$ with $U=0$, $L=4/D$ and $\lambda=D/4$.
\\

\noindent \textbf{Proof overview.} We first show, by the construction of the
distributions under consideration in $\classD$, the sample error and the
generalization error minimizing metrics over any $\D \in \classD$ belong
to a restricted class of weighting matrices (Eq.\ \ref{eq:Mall_M01}). We then make a second simplification by noting that finding these (sample- and generalization-) error minimizing metrics (in the restricted class) is equivalent to solving a binary classification problem (Eq.\ \ref{eq:M01_binf}). This reduction to binary classification enables us to use VC-style lower bounding techniques to give a lower bound on the sample complexity. We now fill in the details.
\\

Consider a subset of weighting metrics $\Mgap$ that map points in $\Delta_D$ to exactly one of two possible points that are 
(squared) distance at least $4/D$ apart, that is,
\begin{align*}
\Mgap :=  \{ & M \;|\; M\in \mathcal{M}, \exists z_0, z_1 \in \R^D, \forall x\in\Delta_D, \\
& Mx \in \{z_0,z_1\} \textrm{ and }   \|z_0 - z_1\|^2\geq 4/D  \}.
\end{align*}

Now pick any $\D \in \classD$, let $S_m$ be an i.i.d.\ paired sample from $\D$. Observe that 
both the sample-based and the distribution-based error minimizing weighting metric from $\mathcal{M}$ on $\D$ also belongs to $\Mgap$. That is,
(c.f.\ Lemma \ref{lm:dist_lb_errall_eq_err01})
\begin{align}
\nonumber \argmin_{M\in\M}\err_{\dist}(M,\D) &= \argmin_{M\in\Mgap}\err_{\dist}(M,\D) \\
\argmin_{M\in\M}\err_{\dist}(M,S_m) &= \argmin_{M\in\Mgap}\err_{\dist}(M,S_m).
\label{eq:Mall_M01}
\end{align}

\noindent \textbf{A reduction to binary classification on product space.}
For each $M \in \Mgap$, we associate a classifier $f_M : (\Delta_D \times \Delta_D) \rightarrow \{0,1\}$ defined as
$(x_i,x_j) \mapsto \indicate[Mx_i = Mx_j]$. Now, consider the probability
measure $\mathcal{P}$ induced by the random variable $(\Xb, Y)$, where $\Xb := (x,x')$, $Y:=\indicate[y = y']$, s.t.\ $((x,y),(x',y')) \sim \big(\D|_{(\Delta_D \times \{0,1\})} \times \D|_{(\Delta_D \times \{0,1\})}\big)$. It is easy to check that for all $M \in \Mgap$
\begin{align}
\nonumber \err_{\dist}^\lambda(M,\D)  &=  \E_{(\Xb,Y) \sim \mathcal{P}}\big[ \indicate[f_M(\Xb)\neq Y] \big]\\
\err_{\dist}^\lambda(M,S_m) &=  \frac{1}{m} \!\! \sum_{((x,y),(x',y'))\in S_m} \!\!\!\!\!\!\!\!\!\!\!\!\! \indicate\big[f_M((x,x')) \neq \indicate[y=y']\big]. 
\label{eq:M01_binf}
\end{align}
Define 
\begin{eqnarray}
\nonumber
\eta(\Xb) &:=& \Pr_{Y\sim\mathcal{P}|_{Y|\Xb}}[Y=1|\Xb]  \\
\nonumber
&=& \Pr_{ (y,y') \sim (\D\times \D)|_{(y,y')|(x,x')}}[y=y'|x,x']  \\
&=&
\Bigg\{\begin{array}{ll} 
\vspace{1mm}
         \frac{1}{2}+\frac{\alpha}{2} & \textrm{if $\Pr(y|x)=\Pr(y'|x')$} \\
         \frac{1}{2}-\frac{\alpha}{2} & \textrm{if $\Pr(y|x)\neq \Pr(y'|x')$} 
      \end{array} . 
\label{eq:eta_val}
\end{eqnarray}
Observe that $\eta(\Xb)$ is the Bayes error rate at $\Xb$ for distribution
$\mathcal{P}$. Since, by construction of $\Mgap$, the class
$\{f_M\}_{M\in\Mgap}$ contains a classifier that achieves the Bayes error rate,
the optimal classifier $f^* := \argmin_{f_M}\E_{(\Xb,Y) \sim \mathcal{P}}\indicate[f_M(\Xb)\neq
Y]$ necessarily has $f^*(\Xb) = \indicate[\eta(\Xb)>\frac{1}{2}]$ (for all $\Xb$).
Then, for any $f_M$,
\begin{align}
\nonumber
%\lefteqn{
\E_{(\Xb,Y) \sim \mathcal{P}}\big[ & \indicate[f_M(\Xb)\neq Y] \big]  -
\E_{(\Xb,Y) \sim \mathcal{P}}\big[ \indicate[f^*(\Xb)\neq Y] \big]
%}
\\
\nonumber
&= \E_{\Xb \sim \mathcal{P}|_{\Xb}} \big[ \eta(\Xb) \big( \indicate[f^*(\Xb)=1] -\indicate[f_M(\Xb)=1] \big) 
\\
\nonumber
&\;\;\;\;\;\;\;\;\;\;\;\;\;\;\;\;\;
+ (1-\eta(\Xb)) \big( \indicate[f^*(\Xb)=0] -\indicate[f_M(\Xb)=0] \big) \big] \\
\nonumber
&= \E_{\Xb \sim \mathcal{P}|_{\Xb}} \big[ (2\eta(\Xb)-1) \big( \indicate[f^*(\Xb)=1] -\indicate[f_M(\Xb)=1] \big) \big] \\
\nonumber
&=  \E_{\Xb \sim \mathcal{P}|_{\Xb} }\big[ 2|\eta(\Xb)-{1}/{2}| \cdot \indicate[f_M(\Xb) \neq f^*(\Xb)]\big] \\
&= \frac{2\alpha}{(D+1)^2} \sum_{i>j} \big[\indicate[f_M((x_i,x_j)) \neq f^*((x_i,x_j))]\big], 
\label{eq:binf_valf}
\end{align}
where (i) the second to last equality is by noting that $f^*(\Xb) \neq 1 \iff \eta(\Xb)\leq 1/2$, and (ii) the last equality is by noting Eq.\ \eqref{eq:eta_val}, $f_M((x_i,x_i)) = f^*((x_i,x_i))=1$ for all $i$ and $f((x_i,x_j)) = f((x_j,x_i))$ for all $f$. 
For notational simplicity, we shall define $\Xb_{i,j} := (x_i,x_j)$. 

Now, for a given paired sample $S_m$, let $N(S_m) := (N_{i})_{i}$ (for all $0\leq i\leq D$),
where $N_{i}$ is the number of occurrences of the point
$x_i$ in $S_m$. %, that is, total number of occurrences of
%$((x_i,y),(x_j,y'))$ or  $((x_j,y),(x_i,y'))$ (for any $y,y'\in\{0,1\}$), or equivalently
%total number of $(\Xb_{ij},Z)$ with $\Xb_{ij} = $ $(x_i,x_j)$ or $(x_j,x_i)$ and $Z\in \{0,1\}$.
Then for any $f_M$,
\begin{align*}
%\lefteqn{
\E_{S_m}& \Bigg[  \frac{1}{(D+1)^2}\sum_{i>j}  \indicate[f_M(\Xb_{i,j}) \neq f^*(\Xb_{i,j})] \Bigg]
%}
\\
&= 
\frac{1}{(D+1)^2}\sum_{i>j} \Pr_{S_m}[f_M(\Xb_{i,j}) \neq f^*(\Xb_{i,j})] \\
&= 
\frac{1}{(D+1)^2}\sum_{i>j} \sum_{N\in\mathbb{N}^{D+1}} \Pr_{S_m}[f_M(\Xb_{i,j}) \neq f^*(\Xb_{i,j}) | N(S_m) = N] \cdot 
%\\ 
%& \;\;\;\;\;\;\;\;\;\;\;\;\;\;\;\;\;\;\;\;\;\;\;\;\;\;\;\;\;\;\;\; \;\;\;\;\; 
\Pr[N(S_m) = N]  \\
&= 
\frac{1}{(D+1)^2}\sum_{N\in\mathbb{N}^{D+1}} \Pr[N(S_m) = N] \cdot 
%\\
%& \;\;\;\;\;\;\;\;\;\;\;\;\;  
\sum_{i>j} \Pr_{S_m}[f_M(\Xb_{i,j}) \neq f^*(\Xb_{i,j}) | N_{i}, N_{j}]  \\
& \geq
\frac{1}{(D+1)^2}\sum_{N\in\mathbb{N}^{D+1}} \Pr[N(S_m) = N] \cdot 
%\\
%& \;\;\; \;\;\; 
\sum_{i>j} \frac{1}{4} \Bigg(1-\sqrt{1-\exp{\Bigg( \frac{-(\max\{N_{i},N_{j}\} +1) \alpha^2}{1-\alpha^2}\Bigg)}} \Bigg) \\
&\geq 
\frac{1}{4}\frac{D}{D+1} \Bigg(1-\sqrt{1-\exp{\Bigg( \frac{-((2m/(D+1)) +1) \alpha^2}{1-\alpha^2}\Bigg)}} \Bigg)\\
&\geq 
\frac{1}{8} \Bigg(1-\sqrt{1-\exp{\Bigg( \frac{-((2m/(D+1)) +1) \alpha^2}{1-\alpha^2}\Bigg)}} \Bigg),
\end{align*}
where (i) the first inequality is by applying Lemma \ref{lm:two_coin_lb}, (ii) the second inequality is by assuming WLOG $N_i \geq N_j$, and  
noting that the expression above is convex in $N_{i}$ so one can apply Jensen's inequality and by observing that $\E[N_{i}]=2m/(D+1)$ and that there are total $D(D+1)$ summands for $i>j$, and (iii) the last inequality is by noting that $D\geq1$. Now, let $B$ denote the r.h.s.\ quantity above.
Then by recalling that for any $[0,1]$-valued random variable $Z$, $\Pr(Z>\gamma) > \E Z - \gamma$ (for all $0<\gamma<1$), we have
\begin{align*}
\Pr_{S_m}\Big[ \frac{1}{(D+1)^2}\sum_{i>j} \indicate[f_M((x_i,x_j))  \neq f^*&((x_i,x_j))]  > \gamma B \Big] 
%\\ 
%&
> (1-\gamma) B.
\end{align*}
Or equivalently, by combining Eqs.\ \eqref{eq:Mall_M01}, \eqref{eq:M01_binf} and \eqref{eq:binf_valf}, we have
\begin{align*}
%\Pr_{S_m}\big[  \err_{\dist}(\mathcal{A}(S_m),\D) - \err_{\dist}(M^*_{\D},\D) > \gamma B \big] > (1-\gamma) B \\
\E_{\D \sim \textrm{unif($\classD$)}} \Pr_{S_m \sim \D} & \Big[  \err_{\dist}(\mathcal{A}(S_m),\D) 
%\\ 
%& 
- \err_{\dist}(M^*_{\D},\D)   > 2\alpha \gamma B \Big] > (1-\gamma) B,
\end{align*}
where $M^*_{\D} := \argmin_{M\in\mathcal{M}} \err_{\dist}(M,\D)$ and $\mathcal{A}(S_m)$ is any metric returned by empirical error minimizing
algorithm. 
Now, if (cond.\ 1) $B\geq \delta/1-\gamma$ and (cond.\ 2) $\epsilon \leq 2
\gamma \alpha B$, it follows that for some $\D\in\classD$
\begin{eqnarray}
\Pr_{S_m \sim \D} \Big[ \err_{\dist}(\mathcal{A}(S_m),\D) - \err_{\dist}(M^*_{\D},\D) > \epsilon \Big] > \delta.
\label{eq:lb_main}
\end{eqnarray}
Now, to satisfy cond.\ 1 \& 2, we shall select $\gamma = 1-16\delta$. Then cond.\ 1 follows
if
$$
m\leq \frac{(D+1)}{2} \Bigg( \frac{1-\alpha^2}{\alpha^2} \ln(4/3) -1 \Bigg).
$$
Choosing parameter $\alpha = 8\epsilon/\gamma$ (and by noting $B\geq 1/16$ by cond.\ 1 for choice of $\gamma$ and $m$), cond.\ 2 is satisfied as well. Hence, 
$$
m\leq \frac{(D+1)}{2} \Bigg( \frac{(1-16\delta)^2-(8\epsilon)^2}{64\epsilon^2} \ln(4/3) -1 \Bigg)
$$
implies Eq.\ \eqref{eq:lb_main}. Moreover, if $0<\epsilon,\delta<1/64$ then $m\leq \frac{(D+1)}{512\epsilon^2}$ would suffice. \qed
\\

\begin{definition} %\emph{\textbf{[Vertices of a regular simplex in $\R^n$]}}
\label{def:reg_simplex}
Define $n+1$ vectors $\Delta_n = \{v_0,\ldots,v_n \}$, with each $v_i \in \R^n$  as 
\begin{align*}
v_{0,j} &= \frac{-1}{\sqrt{n}}  &\textrm{for $1\leq j \leq n$} \\
v_{i,j} &= \Bigg\{ \begin{array}{ll} \frac{(n-1)\sqrt{n+1}+1}{n\sqrt{n}} & \textrm{if $i=j$} \\ \frac{-(\sqrt{n+1} -1)}{n\sqrt{n}} & \textrm{otherwise} \end{array} &\textrm{for $1\leq i,j \leq n$}
\end{align*}
\end{definition}

\begin{fact} 
\label{fact:simplex_properties}
\emph{\textbf{[properties of vertices of a regular $n$-simplex]}}
Let $\Delta_n = \{v_0,\ldots, v_n\}$ be a set of $n+1$ vectors in $\R^n$ as per Definition \ref{def:reg_simplex}. Then, $\Delta_n$ defines vertices of a regular $n$-simplex circumscribed in a unit $(n-1)$-sphere, with
\begin{enumerate}[(i)]
\item $\|v_i\|^2=1$ (for all $i$), and 
\item $\|v_i-v_j\|^2=2(n+1)/n$ (for $i\neq j$). 
\end{enumerate}
Moreover, for any non-empty bi-partition of $\Delta_n$ into $\Delta^{(1)}_n$ and
$\Delta^{(2)}_n$ with $|\Delta^{(1)}_n|=k$ and $|\Delta^{(2)}_n| = n+1-k$, define $a^{(1)}$ and $a^{(2)}$ the means (centroids) of the
points in $\Delta^{(1)}_n$ and $\Delta^{(2)}_n$ respectively. Then, we also have
\begin{enumerate}[(i)]
\item $(a^{(1)} - a^{(2)}) \cdot (a^{(i)} - v_j ) = 0$ (for $i\in\{1,2\}$, and $v_j \in \Delta^{(i)}_n$).
\item $\|a^{(1)} - a^{(2)} \|^2 = \frac{(n+1)^2}{kn(n+1-k)} \geq \frac{4}{n}$, for $1\leq k\leq n$.
\end{enumerate}
\end{fact}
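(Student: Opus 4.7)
The plan is to handle parts (i) and (ii) by a direct coordinate calculation plus a symmetry argument, and to handle the ``moreover'' claims by exploiting the centroid property $\sum_{i=0}^n v_i = 0$ and the constancy of pairwise inner products.

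First, I would verify $\|v_0\|^2 = n\cdot(1/\sqrt{n})^2 = 1$ directly, and for $v_i$ with $i\ge1$ I would set $s:=\sqrt{n+1}$ so the squared norm reduces to $\bigl([(n-1)s+1]^2 + (n-1)(s-1)^2\bigr)/n^3$; expanding collapses the numerator to $(n-1)(n^2+n+1)+1 = n^3$, giving $\|v_i\|^2=1$. Next I would check $\sum_i v_i = 0$ coordinate-wise: in coordinate $j\ge1$, the contribution $-1/\sqrt{n}$ from $v_0$ cancels the sum $\bigl[(n-1)s+1 - (n-1)(s-1)\bigr]/(n\sqrt{n}) = 1/\sqrt{n}$ coming from $v_1,\ldots,v_n$. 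From this, $v_0 \cdot v_i = -(1/\sqrt{n})\cdot (1/\sqrt{n}) = -1/n$ follows because $\sum_{j=1}^n v_{i,j} = 1/\sqrt{n}$; and the $S_n$ symmetry permuting coordinates $1,\ldots,n$ acts transitively on ordered pairs from $\{v_1,\ldots,v_n\}$, so $v_i\cdot v_j$ is constant on pairs of distinct indices. Combined with $0 = \|\sum_i v_i\|^2 = (n+1) + \sum_{i\ne j} v_i\cdot v_j$ this pins the common value to $-1/n$. Part (ii) then drops out as $\|v_i - v_j\|^2 = 2 - 2(-1/n) = 2(n+1)/n$.

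For the moreover claims, the key observation is that $\sum_i v_i = 0$ forces $k\,a^{(1)} + (n+1-k)\,a^{(2)} = 0$, so $a^{(1)}$ and $a^{(2)}$ are antiparallel and
\[
a^{(1)} - a^{(2)} \;=\; \frac{n+1}{\,n+1-k\,}\,a^{(1)} \;=\; -\frac{n+1}{k}\,a^{(2)}.
\]
Hence $(a^{(1)} - a^{(2)})\cdot(a^{(1)} - v_j) = 0$ reduces to proving $a^{(1)}\cdot v_j = \|a^{(1)}\|^2$ for every $v_j\in\Delta_n^{(1)}$. Using only $\|v_i\|^2 = 1$ and $v_i\cdot v_\ell = -1/n$ for $i\ne\ell$, a short expansion gives $a^{(1)}\cdot v_j = \tfrac{1}{k}\bigl(1 + (k-1)(-1/n)\bigr) = (n-k+1)/(kn)$, and $\|a^{(1)}\|^2 = \tfrac{1}{k^2}\bigl(k + k(k-1)(-1/n)\bigr) = (n-k+1)/(kn)$, matching exactly. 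The analogous statement for $v_j\in\Delta_n^{(2)}$ follows by swapping the two parts.

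Finally, substituting the parallel relation into the squared distance gives
\[
\|a^{(1)} - a^{(2)}\|^2 \;=\; \frac{(n+1)^2}{(n+1-k)^2}\,\|a^{(1)}\|^2 \;=\; \frac{(n+1)^2}{k\,n\,(n+1-k)},
\]
and the claimed bound $\tfrac{(n+1)^2}{kn(n+1-k)}\ge \tfrac{4}{n}$ is equivalent to $4k(n+1-k)\le (n+1)^2$, which is AM--GM on the positive quantities $k$ and $n+1-k$. The entire argument is elementary linear algebra; I do not anticipate a substantial obstacle, but the mildest pitfall is bookkeeping the split between the special vertex $v_0$ and the vertices $v_1,\ldots,v_n$ and justifying that $v_i\cdot v_j = -1/n$ genuinely holds for \emph{every} pair of distinct indices without redoing the full coordinate expansion for the $(i,j)$ case with $i,j\ge1$.
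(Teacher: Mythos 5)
The paper states Fact~\ref{fact:simplex_properties} without proof, so there is no argument to compare yours against; your proposal is a correct and self-contained filling of that gap. All the computations check out: $\|v_0\|^2=1$ is immediate, the numerator $[(n-1)s+1]^2 + (n-1)(s-1)^2$ does collapse to $n^3$, the coordinate-wise sum $\sum_i v_i = 0$ holds, and the combination of the $S_n$-symmetry argument with $0 = \|\sum_i v_i\|^2 = (n+1) + \sum_{i\neq j}v_i\cdot v_j$ correctly pins the common pairwise inner product at $-1/n$ (you could equally get it by subtracting off the two $v_0\cdot v_i = -1/n$ pairs explicitly, which is slightly more direct than invoking transitivity and then back-solving, but both are fine). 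The centroid relation $k\,a^{(1)} + (n+1-k)\,a^{(2)} = 0$ does reduce the orthogonality claim to $a^{(1)}\cdot v_j = \|a^{(1)}\|^2$, and your computation $\frac{n-k+1}{kn}$ for both sides is correct, as is the AM--GM step for the final bound. One small notational care: the inner product $v_0\cdot v_i$ lives in $\R^n$ so it is $\sum_{j=1}^n v_{0,j}v_{i,j} = -\frac{1}{\sqrt n}\sum_{j=1}^n v_{i,j}$, and you should make explicit that $\sum_{j=1}^n v_{i,j} = 1/\sqrt n$ (which you essentially do via the cancellation argument); beyond that the bookkeeping concern you flagged is handled by the $0 = \|\sum v_i\|^2$ identity, so no further work is needed.
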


\begin{lemma} 
\label{lm:dist_lb_errall_eq_err01}
Let $\Delta_D$ be a set of $D+1$ points $\{X_0,\ldots,X_D\}$ in $\R^D$ as per
Definition \ref{def:reg_simplex}, and let $\D$ be an arbitrary distribution
over $\Delta_D \times \{0,1\}$. Define $P_i := \indicate[\Pr_{\D}[(X_i,1)]>1/2]$.
Define $\Pi := \{\pi: \Delta_D \rightarrow
\R^D\}$ be the collection of all functions that maps points in $\Delta_D$ to
arbitrary points in $\R^D$. 
Define 
\begin{align*}
f((x,y),&(x',y');\pi) := 
%\\ & 
\Bigg\{ 
  \begin{array}{ll} 
     \min\{1,\frac{D}{4} \|\pi(x) - \pi(x')\|^2\} & \textrm{if $y=y'$} \\
     \min\{1,[1-\frac{D}{4} \|\pi(x) - \pi(x')\|^2]_{_+}\} & \textrm{if $y\neq y'$} 
   \end{array}.
\end{align*}
Let $\mathcal{E}(\pi) := \E_{(x,y),(x',y')\sim \D\times \D}[f((x,y),(x',y');\pi)] $ and $\mathcal{E}^* := \inf_\pi \mathcal{E}(\pi)$. Then,
for any $\bar \pi \in \Pi$ such that
\begin{enumerate}[(i)]
\item $\bar \pi(X_i) = \bar \pi(X_j)$, if $P_i = P_j$
\item $\|\bar \pi(X_i) - \bar \pi(X_j)\|^2 \geq \frac{4}{D}$, if $P_i \neq P_j$,
\end{enumerate}
we have that $\mathcal{E}(\bar \pi) = \mathcal{E}^*$.  
Moreover, define $\bar A$ as
\begin{itemize}
\item $\bar A := \frac{A_1-A_0}{\|A_1-A_0 \|}$, where $A_0 := \mean(X_i)$ such that $P_i=0$, and $A_1 := \mean(X_i)$ such that $P_i = 1$ (if exists at least one $P_i=0$ and at least one $P_i = 1$). 
\item $\bar A := 0$, i.e.\ the zero vector in $\R^D$ (otherwise).
\end{itemize}
And let $M$ be a $D\times D$ matrix (with $\sigma_{\max}(M)=1$) defined as
$$
M := {\bar A} {\bar A}^{\T}.
$$
Then the map $\pi_M: x \mapsto M x$ constitutes a map that satisfies conditions (i) and (ii) and thus $\mathcal{E}(\pi_M) = \mathcal{E}^*$.
\end{lemma}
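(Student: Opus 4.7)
The plan is to reduce the global minimization of $\mathcal{E}(\pi)$ to a per-pair optimization over the squared distances $s_{ij} := \|\pi(X_i) - \pi(X_j)\|^2$, and then verify that the proposed map $\pi_M$ attains every per-pair infimum simultaneously. Writing $q_i := \Pr_\D[y = 1 \mid x = X_i]$ and $p_{ij} := q_i q_j + (1-q_i)(1-q_j)$ for the probability of label agreement at the vertex pair $(X_i, X_j)$, a direct expansion of the expectation over the product measure $\D \times \D$ gives
\[
\mathcal{E}(\pi) = \sum_{i,j = 0}^{D} \Pr{}_\D[x=X_i]\,\Pr{}_\D[x=X_j]\, L_{ij}\!\bigl(\tfrac{D}{4}\, s_{ij}\bigr),
\]
where $L_{ij}(t) := p_{ij}\min\{1,t\} + (1-p_{ij})\min\{1,[1-t]_+\}$. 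Since each summand depends on $\pi$ only through $s_{ij}$, the infimum over $\pi$ is lower-bounded by the sum of per-pair infima, with equality whenever a single $\pi$ realizes every per-pair infimum.

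Next I would compute those infima by inspecting the piecewise-linear structure of $L_{ij}$: on $[0,1]$ it equals $(1-p_{ij}) + (2p_{ij}-1)t$, and on $[1,\infty)$ it equals $p_{ij}$. Hence $\min_{t \ge 0} L_{ij}(t) = \min\{p_{ij},\, 1-p_{ij}\}$, attained at $t=0$ whenever $p_{ij} \ge 1/2$ and at any $t \ge 1$ whenever $p_{ij} \le 1/2$. The elementary identity
\[
p_{ij} - \tfrac{1}{2} = 2\bigl(q_i - \tfrac{1}{2}\bigr)\bigl(q_j - \tfrac{1}{2}\bigr)
\]
shows that $p_{ij} \ge 1/2 \iff P_i = P_j$ and $p_{ij} \le 1/2 \iff P_i \ne P_j$ (ties when some $q_i = 1/2$ are harmless, since $L_{ij}$ is then constant). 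Therefore any $\bar\pi$ satisfying (i) (so $s_{ij} = 0$ when $P_i = P_j$) and (ii) (so $(D/4) s_{ij} \ge 1$ when $P_i \ne P_j$) is per-pair optimal for every $(i,j)$, yielding $\mathcal{E}(\bar\pi) = \mathcal{E}^*$.

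The last step is to verify that $\pi_M(x) = (\bar A \cdot x)\,\bar A$ with $M = \bar A \bar A^\mathsf{T}$ actually realizes (i) and (ii); this is where the simplex geometry of Fact \ref{fact:simplex_properties} enters. In the non-degenerate case the ``moreover'' part of that Fact asserts $(A_1 - A_0) \cdot (A_k - X_j) = 0$ whenever $P_j = k \in \{0,1\}$, so $\bar A \cdot X_j = \bar A \cdot A_{P_j}$ depends on $j$ only through $P_j$; this immediately gives $\pi_M(X_i) = \pi_M(X_j)$ whenever $P_i = P_j$, which is (i). When $P_i \ne P_j$,
\[
\|\pi_M(X_i) - \pi_M(X_j)\|^2 = \bigl(\bar A \cdot (A_1 - A_0)\bigr)^2 = \|A_1 - A_0\|^2 \ge 4/D
\]
by the Fact's lower bound on $\|a^{(1)} - a^{(2)}\|^2$, giving (ii); and since $\bar A$ is a unit vector, $M = \bar A \bar A^\mathsf{T}$ has $\sigma_{\max}(M) = 1$, placing $\pi_M$ in $\mathcal{M}$. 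The degenerate case where all $P_i$ agree is handled by $\bar A := 0$, which makes $\pi_M$ the zero map and satisfies (i) trivially (with (ii) vacuous). No step poses a real obstacle: the only nontrivial ingredient is the one-line identity $p_{ij} - 1/2 = 2(q_i-1/2)(q_j-1/2)$, with the rest being piecewise-linear bookkeeping and a direct appeal to the simplex geometry.
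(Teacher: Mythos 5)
Your proof is correct, and it fills in precisely the steps that the paper's one-line proof (``follows from the geometric properties of $\Delta_D$ and Fact \ref{fact:simplex_properties}'') leaves implicit: the per-pair decomposition of $\mathcal{E}$, the piecewise-linear minimization of each $L_{ij}$, the identity $p_{ij} - \tfrac12 = 2(q_i-\tfrac12)(q_j-\tfrac12)$ linking label-agreement probability to the sign pattern $(P_i,P_j)$, and the verification via Fact \ref{fact:simplex_properties} that $\pi_M$ realizes every per-pair optimum simultaneously. One small note: the paper writes $P_i := \indicate[\Pr_\D[(X_i,1)]>1/2]$ in terms of the \emph{joint} probability, whereas your $q_i$ is the \emph{conditional} $\Pr[y=1\mid x=X_i]$; the conditional is what the argument actually needs (and what the construction in the proof of Lemma \ref{lm:lb_dist} supplies), so your reading is the intended one.
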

\begin{proof}
The proof follows from the geometric properties of $\Delta_D$ and Fact \ref{fact:simplex_properties}.
\end{proof}

\begin{lemma} 
\label{lm:two_coin_lb}
Given two random variables $\alpha_1$ and $\alpha_2$, each uniformly distributed on
$\{\alpha_-,\alpha_+\}$ independently, where $\alpha_- = 1/2-\epsilon/2$ and $\alpha_+ =
1/2+\epsilon/2$ with $0<\epsilon<1$. Suppose that $\xi_1^1,\ldots,\xi_m^1$ and $\xi_1^2,\ldots,\xi_m^2$ are
two i.i.d.\ sequences of $\{0,1\}$-valued random variables with $\Pr(\xi^1_i=1)=\alpha_1$ and $\Pr(\xi^2_i=1)=\alpha_2$ for all
$i$. Then, for any likelihood maximizing function $f$ from $\{0,1\}^m$ to $\{\alpha_-,\alpha_+\}$ that estimates the bias $\alpha_1$ and $\alpha_2$ from the samples, 
\begin{align*}
&\Pr\Big[ \big( f(\xi^1_1,\ldots,\xi^1_m)\neq \alpha_1 \textup{ and } f(\xi^2_1,\ldots,\xi^2_m) = \alpha_2 \big), 
\\
&\;\;\;\;\; 
\textup{or} 
\big( f(\xi^1_1,\ldots,\xi^1_m) = \alpha_1 \textup{ and } f(\xi^2_1,\ldots,\xi^2_m) \neq \alpha_2 \big)
 \Big] 
%\\
%& \;\;\;\;\;\;\;\;\;\;\;\;\;\;\;\;\;\;\;\;\;\;\;\;\; 
> \frac{1}{4}\Bigg(1-\sqrt{1-\exp{\Big( \frac{-2 \lceil m/2\rceil \epsilon^2}{1-\epsilon^2}\Big)}} \Bigg).
\end{align*}
\end{lemma}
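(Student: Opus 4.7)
The plan is to reduce the ``exactly one estimate wrong'' event to two independent copies of a binary hypothesis testing problem, and then lower-bound the per-coin error via a Le Cam / Bhattacharyya argument. Since $\alpha_1, \alpha_2$ are independent and the sequences $\xi^1, \xi^2$ are conditionally independent given $(\alpha_1, \alpha_2)$, they are unconditionally independent; because $f$ is applied identically to each sequence, the two error events are i.i.d., each with some probability $p := \Pr[f(\xi^i_1,\ldots,\xi^i_m) \neq \alpha_i]$. Thus the probability of interest equals $2p(1-p)$, and since $p \leq 1/2$ (Bayes error under a uniform two-point prior) this is at least $p$. It remains to lower-bound $p$.

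Here $p$ is exactly the Bayes error for distinguishing $P_-^m$ from $P_+^m$, where $P_\pm := \mathrm{Bernoulli}(\alpha_\pm)$, under a uniform prior. Le Cam's two-point inequality gives $p = \tfrac{1}{2}(1 - \|P_-^m - P_+^m\|_{TV})$. Combining this with the standard inequality $\|P - Q\|_{TV} \leq \sqrt{1 - \mathrm{BC}(P,Q)^2}$ (which follows from Cauchy--Schwarz applied to the factorization $P(x) - Q(x) = (\sqrt{P(x)} - \sqrt{Q(x)})(\sqrt{P(x)} + \sqrt{Q(x)})$) and the tensorization identity $\mathrm{BC}(P^m, Q^m) = \mathrm{BC}(P,Q)^m$ yields
$$p \;\geq\; \tfrac{1}{2}\bigl(1 - \sqrt{1 - \mathrm{BC}(P_-, P_+)^{2m}}\bigr).$$
A direct computation gives $\mathrm{BC}(P_-, P_+) = \sqrt{\alpha_-\alpha_+} + \sqrt{(1-\alpha_-)(1-\alpha_+)} = 2\sqrt{(1-\epsilon^2)/4} = \sqrt{1-\epsilon^2}$, and applying $-\ln(1-x) \leq x/(1-x)$ at $x = \epsilon^2$ gives $(1-\epsilon^2)^m \geq \exp(-m\epsilon^2/(1-\epsilon^2))$. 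Chaining the estimates,
$$2p(1-p) \;\geq\; p \;\geq\; \tfrac{1}{2}\Bigl(1 - \sqrt{1 - \exp\!\bigl(-m\epsilon^2/(1-\epsilon^2)\bigr)}\Bigr),$$
which is at least as strong as the claimed bound since $m \leq 2\lceil m/2 \rceil$ (shrinking the exponent and thus the square root) and $\tfrac{1}{2} \geq \tfrac{1}{4}$.

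The main obstacle is really only the two ingredients feeding into the Le Cam step: verifying the total-variation-to-Bhattacharyya conversion and the multiplicative tensorization of $\mathrm{BC}$ under product measures; both are classical but must be invoked carefully. An alternative route that would produce the $2\lceil m/2 \rceil$ exponent more naturally is to pair consecutive draws into $\lceil m/2 \rceil$ independent trials and observe that only $(0,0)$ and $(1,1)$ pairs are informative (the $(0,1)$ and $(1,0)$ outcomes have identical likelihood under either bias), then apply Le Cam to the reduced informative observations. This alternative is not needed, however, because the Bhattacharyya-based bound above is already stronger than the stated one.
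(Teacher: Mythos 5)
Your argument is correct and it shares the paper's skeleton for the first two steps: use independence to write the probability in question as $2p(1-p)$ with $p := \Pr[f(\xi^1)\neq\alpha_1]$, and lower-bound by $p$ using $p\leq 1/2$ (the paper phrases this as $p(1-q)+(1-p)q \geq \tfrac12(p+q)$, but it is the same observation). Where you depart is in the final, quantitative ingredient. The paper simply invokes Lemma 5.1 of Anthony and Bartlett (reproduced as Lemma~\ref{lm:coin_lb}) as a black box to lower-bound $p$, and that cited lemma is exactly where the $2\lceil m/2\rceil$ exponent originates; you instead rederive a one-coin bound from first principles via Le Cam's two-point identity, the Hellinger/Cauchy--Schwarz bound $\|P-Q\|_{TV}\leq\sqrt{1-\mathrm{BC}(P,Q)^2}$, and multiplicative tensorization of the Bhattacharyya coefficient, arriving at the cleaner $p\geq\tfrac12\bigl(1-\sqrt{1-\exp(-m\epsilon^2/(1-\epsilon^2))}\bigr)$. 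All three of those sub-steps are computed correctly ($\mathrm{BC}(P_-,P_+)=\sqrt{1-\epsilon^2}$, $\mathrm{BC}(P^m,Q^m)=\mathrm{BC}(P,Q)^m$, and $-\ln(1-x)\leq x/(1-x)$), and your dominance argument $m\leq 2\lceil m/2\rceil$ together with $\tfrac12>\tfrac14$ correctly establishes the claimed (strict) inequality, since the parenthetical factor is strictly positive for finite $m$ and $\epsilon>0$. In short: same decomposition, but a self-contained and slightly stronger replacement for the cited coin-flipping lemma; your route buys independence from the external reference at the cost of importing a few classical information-theoretic inequalities, and it makes the $\lceil m/2\rceil$ in the stated bound visibly an artifact of the particular cited lemma rather than something essential.
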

\begin{proof} Note that
\begin{align*}
&\Pr\Big[ \big( f(\xi^1_1,\ldots,\xi^1_m)\neq \alpha_1 \textup{ and } f(\xi^2_1,\ldots,\xi^2_m) = \alpha_2 \big), 
%\\
%&\;\;\;\;\; 
\textup{or} 
\big( f(\xi^1_1,\ldots,\xi^1_m) = \alpha_1 \textup{ and } f(\xi^2_1,\ldots,\xi^2_m) \neq \alpha_2 \big)
\Big] 
\\
&\;\;\;
= \;\;
\Pr[f(\xi^1_1,\ldots,\xi^1_m)\neq \alpha_1]\cdot \Pr[f(\xi^2_1,\ldots,\xi^2_m) = \alpha_2 ]
%\\
%&\;\;\;\;\;\;
+ \Pr[ f(\xi^1_1,\ldots,\xi^1_m) = \alpha_1] \cdot \Pr[f(\xi^2_1,\ldots,\xi^2_m) \neq \alpha_2 ] \\
&\;\;\;\geq \;
\frac{1}{2}\Pr\big[f(\xi^1_1,\ldots,\xi^1_m)\neq \alpha_1\big] + \frac{1}{2} \Pr\big[f(\xi^2_1,\ldots,\xi^2_m) \neq \alpha_2 \big] \\
&\;\;\;> \;\;
\frac{1}{4}\Bigg(1-\sqrt{1-\exp{\Big( \frac{-2 \lceil m/2\rceil \epsilon^2}{1-\epsilon^2}\Big)}} \Bigg),
\end{align*}
where the first inequality is by noting that a likelihood maximizing $f$ will select the correct bias better than random (which has probability $1/2$), and
the second inequality is by applying Lemma \ref{lm:coin_lb}.
\end{proof}

\begin{lemma} \emph{\textbf{[Lemma 5.1 of \citet{lt:anthony_bartlett}]}}
\label{lm:coin_lb}
Suppose that $\alpha$ is a random variable uniformly distributed on
$\{\alpha_-,\alpha_+\}$, where $\alpha_- = 1/2-\epsilon/2$ and $\alpha_+ =
1/2+\epsilon/2$, with $0<\epsilon<1$. Suppose that $\xi_1,\ldots,\xi_m$ are
i.i.d.\ $\{0,1\}$-valued random variables with $\Pr(\xi_i=1)=\alpha$ for all
$i$. Let $f$ be a function from $\{0,1\}^m$ to $\{\alpha_-,\alpha_+\}$. Then
$$
\Pr\big[ f(\xi_1,\ldots,\xi_m)\neq \alpha \big] \! > \! \frac{1}{4}\Bigg(\!1-\sqrt{1-\exp{\Big( \frac{-2 \lceil m/2\rceil \epsilon^2}{1-\epsilon^2}\Big)}} \Bigg).
$$
\end{lemma}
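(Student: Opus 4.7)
The plan is to reduce the bound to the Bayes error of a two-hypothesis test and then lower-bound that error via a Bhattacharyya / Le Cam-type calculation.

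First, since $\alpha_++\alpha_-=1$, the likelihood ratio based on the sufficient statistic $N:=\sum_{i=1}^m \xi_i$ is
\[
\Bigl(\tfrac{\alpha_+}{\alpha_-}\Bigr)^{N}\Bigl(\tfrac{1-\alpha_+}{1-\alpha_-}\Bigr)^{m-N} \;=\; \Bigl(\tfrac{1+\epsilon}{1-\epsilon}\Bigr)^{2N-m},
\]
so any likelihood-maximizing $f$ must threshold $N$ at $m/2$ (ties broken arbitrarily). Under the uniform prior on $\{\alpha_-,\alpha_+\}$ this $f$ is Bayes-optimal for distinguishing $P_+^m := \mathrm{Bern}(\alpha_+)^{\otimes m}$ from $P_-^m := \mathrm{Bern}(\alpha_-)^{\otimes m}$, hence
\[
\Pr[f(\xi_1,\ldots,\xi_m) \neq \alpha] \;=\; \tfrac{1}{2}\bigl(1 - \mathrm{TV}(P_+^m, P_-^m)\bigr).
\]

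Next, I would compute the Bhattacharyya coefficient: $\mathrm{BC}(P_+, P_-) = \sqrt{\alpha_+\alpha_-} + \sqrt{(1-\alpha_+)(1-\alpha_-)} = 2\sqrt{(1-\epsilon^2)/4} = \sqrt{1-\epsilon^2}$, so by tensorization $\mathrm{BC}(P_+^m,P_-^m) = (1-\epsilon^2)^{m/2}$. The Fuchs--van der Graaf inequality $\mathrm{TV}(P,Q) \le \sqrt{1 - \mathrm{BC}(P,Q)^2}$ then gives
\[
\Pr[f \neq \alpha] \;\ge\; \tfrac{1}{2}\Bigl(1 - \sqrt{1 - (1-\epsilon^2)^m}\,\Bigr).
\]

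Finally, I would convert this into the stated exponential form. The elementary inequality $\ln(1-x) \ge -x/(1-x)$ on $x \in (0,1)$, applied with $x=\epsilon^2$, gives $(1-\epsilon^2)^m \ge \exp(-m\epsilon^2/(1-\epsilon^2))$, and since $2\lceil m/2 \rceil \ge m$ one also has $(1-\epsilon^2)^m \ge \exp(-2\lceil m/2\rceil \epsilon^2/(1-\epsilon^2))$. Substituting into the previous display and using $\tfrac{1}{2} \ge \tfrac{1}{4}$ yields the claimed bound.

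The hard part is really just bookkeeping: the tie case $N = m/2$ when $m$ is even must be tracked through the MLE reduction, though any tie-breaking rule is absorbed by the TV-based error formula because both hypotheses assign equal mass to the tie event, preserving the Bayes-error identity. The somewhat loose constants in the lemma's statement (the prefactor $\tfrac{1}{4}$ rather than $\tfrac{1}{2}$, and the exponent $2\lceil m/2\rceil$ rather than $m$) suggest that Anthony and Bartlett's original derivation proceeds instead through Slud's inequality combined with a Gaussian-tail lower bound $\Pr[Z \ge t] \ge \tfrac{1}{2}(1 - \sqrt{1 - e^{-t^2}})$; reproducing that exact chain is a routine alternative to the Bhattacharyya / Le Cam route above, and either suffices to prove the stated inequality.
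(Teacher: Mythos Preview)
The paper does not supply a proof of this lemma; it simply quotes it as Lemma~5.1 of Anthony and Bartlett. Your argument is therefore being compared against the textbook derivation, not against anything in the paper itself.

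Your Le~Cam/Bhattacharyya route is correct and in fact yields the sharper bound
\[
\Pr[f\neq\alpha]\;\ge\;\tfrac{1}{2}\Bigl(1-\sqrt{1-(1-\epsilon^2)^m}\Bigr),
\]
from which the stated inequality follows by the monotone relaxations you list. One expository point deserves tightening: the lemma is stated for an \emph{arbitrary} decision rule $f$, whereas your first paragraph restricts attention to likelihood-maximizing (equivalently, Bayes-optimal) $f$ and writes an equality $\Pr[f\neq\alpha]=\tfrac12(1-\mathrm{TV})$. The missing sentence is simply that for any $f$ one has $\Pr[f\neq\alpha]\ge\min_g\Pr[g\neq\alpha]=\tfrac12(1-\mathrm{TV}(P_+^m,P_-^m))$, after which your chain of inequalities applies verbatim. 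This is implicit in your opening plan (``reduce to the Bayes error''), but the body of the proposal never states it, so make it explicit.

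Your closing guess is accurate: Anthony and Bartlett argue by symmetry that the optimal rule thresholds the count at $m/2$, apply Slud's inequality to pass from the binomial tail to a Gaussian tail, and then invoke the elementary bound $\Pr[Z\ge t]\ge\tfrac12\bigl(1-\sqrt{1-e^{-t^2}}\bigr)$; this is precisely the origin of the $\lceil m/2\rceil$ and of the prefactor $\tfrac14$. Your approach bypasses the binomial-to-Gaussian step entirely, trading Slud's inequality for the Fuchs--van~de~Graaf bound on total variation; both are standard, and yours is arguably the cleaner of the two.
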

%%\begin{lemma}
%%\label{lm:dist_lb_opt_err}
%%Let $\Delta_D := \{x_0,\ldots, x_{D}\}$ be a set of $D+1$ points from the underlying space $\R^D$ as per Definition
%%\ref{def:reg_simplex}. Pick any $0<\epsilon<1$, and define $\classD$ be the class of all distributions $\D$ on $\R^D \times \{0,1\}$
%%such that:
%%\begin{itemize}
%%\item $\D$ assigns zero probability to all sets not intersecting $\Delta_D \times \{0,1 \}$.
%%\item for each $i = 1,\ldots,D+1$, either
%%  \begin{itemize}
%%  \item $\Pr[(x_i,1)] = (1+\sqrt{\epsilon}) $ and $\Pr[(x_i,0)] = (1-\sqrt{\epsilon})$, or
%%  \item $\Pr[(x_i,1)] = (1-\sqrt{\epsilon}) $ and $\Pr[(x_i,0)] = (1+\sqrt{\epsilon})$.
%%  \end{itemize}
%%\end{itemize}
%%
%%
%%\end{lemma}

%\begin{corollary}
%Let $\classD$ be as per Lemma \ref{lm:lb_dist}. There exists a distribution $\D \in \classD$ such that 
%$$
%\Pr_{S_m \sim \D} \Big[ \err_\textrm{dist}(\mathcal{A}(S_m),\D) - \err_\textrm{dist}(M^*_{\D},\D) > \epsilon \Big] > \delta.
%$$
%\end{corollary}

\subsection{Proof of Lemma \ref{lm:hypoth_ub}}
For any $M \in \mathcal{M}$ define real-valued hypothesis class on domain $X$ as $\mathcal{H}_M := \{x\mapsto h(Mx) : h\in \mathcal{H}\}$ and define
\begin{eqnarray*}
\mathcal{F} := \{ x \mapsto h(Mx) : M\in \mathcal{M}, h \in \mathcal{H} \} = \bigcup_M \mathcal{H}_M.
\end{eqnarray*}

Observe that a uniform convergence of errors induced by the functions in
$\mathcal{F}$ implies convergence of the class of weighted matrices as well.

Now for any domain $X$, real-valued hypothesis class $\mathcal{G} \subset [0,1]^X$, margin $\gamma>0$, and a sample $S\subset X$, define
\begin{equation*}
\cov_\gamma(\mathcal{G},S) := \Bigg\{C\subset \mathcal{G} \Big| \begin{array}{cc} \forall
g\in\mathcal{G}, \exists g'\in C,  \\ \max_{s\in S}|g(s)-g'(s)| \leq \gamma \end{array} \Bigg\}
\end{equation*}
as the set of $\gamma$-covers of $S$ by $\mathcal{G}$. Let $\gamma$-covering number of $\mathcal{G}$ for any integer $m>0$ be defined as
\begin{equation*}
\mathcal{N}_\infty(\gamma,\mathcal{G},m) := \max_{S\subset X: |S|=m} \min_{C\in \cov_\gamma(\mathcal{G},S)}|C|,
\end{equation*}
with the minimizing cover $C$ called as the minimizing $(\gamma,m)$-cover of $\mathcal{G}$
\\

Now, for the given $\gamma$, we will first estimate the $\gamma$-covering number of $\mathcal{F}$, that is, $\mathcal{N}_\infty(\gamma,\mathcal{F},m)$. 

For any $M\in \mathcal{M}$, let $H_M$ be the minimizing $(\gamma/2,m)$-cover of
$\mathcal{H}_M$. 
Note that $|H_M| = \mathcal{N}_\infty(\gamma/2,\mathcal{H}_M,m) \leq \mathcal{N}_\infty(\gamma/2,\mathcal{H},m)$ (because $MX \subset X$). 

Now
let $\mathcal{M}_\epsilon$ be an $\epsilon$-spectral cover of $\mathcal{M}$ (that is, for every $M\in \mathcal{M}$, exists $M' \in \mathcal{M}_\epsilon$ such that $\sigma_{\max}(M-M')\leq \epsilon$), and define 
\begin{eqnarray*}
\bar{F}_\epsilon := \{x\mapsto h(Mx): M \in \mathcal{M}_\epsilon, h\in \mathcal{H}_M  \}.
\end{eqnarray*}
Note that $|\bar F_\epsilon| \leq |\mathcal{M}_\epsilon| |H_I| \leq \mathcal{N}_\infty(\gamma/2,\mathcal{H},m) (1+2D/\epsilon)^{D^2}$ (c.f.\ Lemma \ref{lm:matrix_cover}). Observe that $\bar F_\epsilon$ is a 
$(\gamma/2 + B \lambda \epsilon)$-cover of $\mathcal{F}$, since (i) for any $f\in
F$ (formed by combining, say, $M_0 \in \mathcal{M}$ and $h_0\in \mathcal{H}$),
exists $\bar{f} \in \bar F_\epsilon$, namely the $\bar{f}$ formed by $\bar{M}_0$ such
that $\sigma_{\max}(M_0 - \bar{M}_0) \leq \epsilon$, and (ii) $\bar h_0 \in H_{\bar M_0}$ such that $|h_0(\bar M_0 x) - \bar
h_0(\bar M_0 x)|\leq \gamma/2$ (for all $x \in X$). So, (for any $x\in X$)
\begin{eqnarray*}
|f(x) - \bar f(x)| 
&=& |h_0(M_0 x) - \bar h_0(\bar M_0 x)|  \\
&\leq& |h_0(M_0 x) - h_0(\bar M_0 x)|\\&& + |h_0(\bar M_0 x)  - \bar h_0(\bar M_0 x)|  \\
&\leq& \lambda \|M_0 x - \bar M_0 x\| + \gamma/2  \\
&\leq& \lambda \sigma_{\max}(M_0 - \bar M_0) \|x\| + \gamma/2  \\
&\leq& \lambda \epsilon B + \gamma/2.  
\end{eqnarray*} 
So, if we pick $\epsilon = \min\{\frac{1}{2\lambda B}, \frac{\gamma}{2}\}$, it follows that
\begin{eqnarray*}
\mathcal{N}_\infty(\gamma,\mathcal{F},m) \leq |\bar F_\epsilon| \leq \mathcal{N}_\infty(\gamma/2,\mathcal{H},m) (1+2D/\epsilon)^{D^2}.
\end{eqnarray*}
By noting Lemmas \ref{lm:ab_cover_to_fat} and \ref{lm:ab_unif_conv_cover}, it follows that
\begin{align*}
%\lefteqn{
\Pr_{S_m\sim\D} \Big[ \exists f \in \mathcal{F} : \err(f) & \geq \err_\gamma(f,S_m) + \alpha \Big]  
\\
&
\leq  4 \Big(1+\frac{2D}{\epsilon}\Big)^{\! D^2}\!\! \Big(\frac{128m}{\gamma^2}\Big)^{\fat_{\gamma/16}(\mathcal{H})\ln\big(\frac{32 e m}{\fat_{\gamma/16}(\mathcal{H})\gamma}\big)} e^{-\alpha^2m/8}.
\end{align*}
The lemma follows by bounding this failure probability with at most $\delta$.
\qed

\begin{lemma} \textbf{\emph{[$\epsilon$-spectral coverings of $D\times D$ matrices]}}
\label{lm:matrix_cover}
Let $\mathcal{M} := \{M \;|\; M\in \R^{D\times D}, \sigma_{\max}(M)=1\}$ be the set of matrices with unit spectral norm. 
Define $\mathcal{M}_\epsilon$ as the $\epsilon$-cover of $\mathcal{M}$, that is, for every $M \in \mathcal{M}$, there exists $M' \in \mathcal{M}_\epsilon$ such that $\sigma_{\max}(M-M') \leq \epsilon$.
Then for all $\epsilon>0$, there exists $\mathcal{M}_\epsilon$ such that
$|\mathcal{M}_\epsilon| \leq  \big(1 + \frac{2D}{\epsilon} \big)^{D^2}$.
\end{lemma}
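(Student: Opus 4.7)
The plan is to reduce the spectral-norm covering problem to the standard Euclidean (Frobenius-norm) covering problem on $\R^{D^2}$, and then apply the classical volume-ratio bound.

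First I would observe the elementary comparison $\sigma_{\max}(M-M') \leq \|M-M'\|_{_F}$, which holds because the operator norm is dominated by the Frobenius norm. This means any set of matrices that is an $\epsilon$-cover of $\mathcal{M}$ in the Frobenius metric is automatically an $\epsilon$-cover in the spectral metric, so it suffices to build the cover in Frobenius norm. Next, I would vectorize matrices so that $\mathcal{M}$ becomes a subset of the Euclidean space $\R^{D^2}$. Since every $M\in\mathcal{M}$ has $\sigma_{\max}(M)=1$, its singular values $\sigma_1,\ldots,\sigma_D$ all lie in $[0,1]$, and so $\|M\|_{_F}^2 = \sum_{i=1}^D \sigma_i(M)^2 \leq D$. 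Thus $\mathcal{M}$ is contained in the Euclidean ball of radius $\sqrt{D}$ in $\R^{D^2}$.

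Then I would invoke the standard volume-ratio argument: for any radius $R$ and any $\epsilon>0$, the Euclidean ball of radius $R$ in $\R^n$ admits an $\epsilon$-cover of size at most $(1+2R/\epsilon)^n$. The short proof is to take a maximal $\epsilon$-separated subset $\mathcal{M}_\epsilon$ (which is automatically an $\epsilon$-cover, by maximality), note that the open balls of radius $\epsilon/2$ centered at its points are pairwise disjoint and all contained in the ball of radius $R+\epsilon/2$, and then compare volumes: $|\mathcal{M}_\epsilon|\cdot (\epsilon/2)^n \leq (R+\epsilon/2)^n$, giving $|\mathcal{M}_\epsilon|\leq (1+2R/\epsilon)^n$. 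Applying this with $n=D^2$ and $R=\sqrt{D}$ yields $|\mathcal{M}_\epsilon| \leq (1+2\sqrt{D}/\epsilon)^{D^2}$, which is even sharper than the claimed bound, since $\sqrt{D}\leq D$ for $D\geq 1$.

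There is essentially no obstacle here: the argument is purely geometric and invokes only (i) the operator-vs-Frobenius norm inequality, (ii) the bound $\|M\|_{_F}\leq \sqrt{D}$ on a unit spectral ball in $\R^{D\times D}$, and (iii) the textbook volume covering bound for Euclidean balls. The only mild care needed is to note that $\mathcal{M}$ as defined (unit spectral norm, not $\leq 1$) sits inside the unit-spectral-norm ball, so the cover for the ball still covers $\mathcal{M}$; the cover points themselves can then be projected back to $\mathcal{M}$ if one insists on $\mathcal{M}_\epsilon \subset \mathcal{M}$, at the cost of at most doubling $\epsilon$ (which does not affect the stated order of the bound).
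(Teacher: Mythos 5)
Your proof is correct and in fact yields a slightly sharper constant than the paper's. The paper builds the cover \emph{column by column}: since each column of a unit-spectral-norm matrix lies in the unit Euclidean ball of $\R^D$, it takes an $(\epsilon/D)$-net of that ball (of size $(1+2D/\epsilon)^D$) and forms $\mathcal{M}_\epsilon$ as the set of matrices whose columns all come from this net; the $\epsilon/D$ granularity per column is chosen so that the Frobenius (hence spectral) deviation summed across columns stays below $\epsilon$. You instead vectorize the whole matrix into $\R^{D^2}$, observe that unit spectral norm forces $\|M\|_{_F}\leq\sqrt{D}$, and apply the volume argument once to the Frobenius ball of radius $\sqrt{D}$, getting $(1+2\sqrt{D}/\epsilon)^{D^2}$. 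Both reductions are valid and both appeal to the same volume-ratio bound and the same inequality $\sigma_{\max}\leq\|\cdot\|_{_F}$; the difference is purely in how the matrix is decomposed before covering. Your whole-matrix approach avoids the lossy per-column budgeting and so gives the marginally better bound, while the paper's column-wise construction has the minor virtue of producing an explicit product structure for the net. Your final remark about projecting cover points back onto $\mathcal{M}$ (at the cost of a factor of two in $\epsilon$) correctly addresses the same issue the paper dismisses with its ``without loss of generality'' line.
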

\begin{proof}
Fix any $\epsilon > 0$ and let $\mathcal{N}_{\epsilon/D}$ be a minimal size $(\epsilon/D)$-cover of Euclidean unit ball $\mathbf{B}_D$ in $\R^D$. That is,
for any $v\in \mathbf{B}_D$, there exists $v' \in \mathcal{N}_{\epsilon/D}$ such that $\|v-v'\|\leq \epsilon/D$.
Using standard volume arguments (see e.g.\ proof of Lemma 5.2 of \citet{matrix:vershynin}), we know that
$|\mathcal{N}_{\epsilon/D}| \leq \big(1+\frac{2D}{\epsilon}\big)^D$. 
Define 
\begin{align*}
\mathcal{M}_\epsilon := \Big\{ M' \; \big| \; M' = [v'_1 \; \cdots \; v'_D] \in \R^{D\times D}, v'_i \in \mathcal{N}_{\epsilon/D} \Big\}.
\end{align*}
Then $\mathcal{M}_\epsilon$ constitutes as an $\epsilon$-cover of
$\mathcal{M}$, since for any $M = [v_1 \cdots v_D] \in \mathcal{M}$ there
exists $M' = [v'_1 \cdots v'_D] \in \mathcal{M}_\epsilon$, in particular $M'$ such that $\|v_i - v'_i\| \leq \epsilon/D$ (for all $i$). Then
\begin{align*}
\sigma_{\max}(M-M') \leq \|M-M'\|_{_F} = \sum_i \|v_i - v'_i\| \leq \epsilon.
\end{align*}
Without loss of generality we can assume that each $M'\in \mathcal{M}_\epsilon$, $\sigma_{\max}(M')=1$.
Moreover, by construction, $| \mathcal{M}_\epsilon| \leq \big(1+\frac{2D}{\epsilon}\big)^{D^2}$.
\end{proof}

\begin{lemma} \textbf{\emph{[extension of Theorem 12.8 of \citet{lt:anthony_bartlett}] }}
\label{lm:ab_cover_to_fat}
Let $\mathcal{H}$ be a set of real functions from a domain $X$ to the interval
$[0,1]$. Let $\gamma>0$. Then for all $m\geq 1$,
\begin{eqnarray*}
\mathcal{N}_\infty(\gamma,\mathcal{H},m) < c_0 (4m/\gamma^2)^{\fat_{\gamma/4}(\mathcal{H}) \ln \frac{4em}{\fat_{\gamma/4}(\mathcal{H})\gamma}}.
\end{eqnarray*}
for some universal constant $c_0$.
\end{lemma}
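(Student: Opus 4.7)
The plan is to follow the classical two-step argument that reduces an $\ell_\infty$ covering-number estimate to a packing bound on a finite sample, and then bounds the packing via the fat-shattering dimension. This is exactly the template of Theorem 12.8 of \citet{lt:anthony_bartlett}; the ``extension'' here amounts only to absorbing small constant factors (including a $\log_2 \to \ln$ conversion) into the universal constant $c_0$.

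First, I discretize. Fix a sample $S = (x_1, \ldots, x_m)$ that realizes the maximum in the definition of $\mathcal{N}_\infty(\gamma, \mathcal{H}, m)$. Partition $[0,1]$ into a uniform grid of width $\gamma/2$ (so at most $O(1/\gamma)$ levels), and for each $h \in \mathcal{H}$ let $\tilde h|_S \in \R^m$ be the vector obtained by rounding each $h(x_i)$ to the nearest grid point. Any two $h, h'$ whose rounded restrictions coincide must satisfy $\max_{x \in S}|h(x) - h'(x)| \leq \gamma$, so $\mathcal{N}_\infty(\gamma, \mathcal{H}, m)$ is bounded above by the number $\mathcal{B}(\mathcal{H}, S)$ of distinct rounded restrictions on $S$.

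Second, I bound $\mathcal{B}(\mathcal{H}, S)$ via the fat-shattering dimension. This is the content of the Alon--Ben-David--Cesa-Bianchi--Haussler (ABCH) packing lemma, which asserts that for any class of $[0,1]$-valued functions on $m$ points with $\fat_{\gamma/4}(\mathcal{H}) = d$,
\begin{equation*}
\mathcal{B}(\mathcal{H}, S) \;\leq\; 2\,(4m/\gamma^2)^{d\,\log_2(4em/(d\gamma))}.
\end{equation*}
Combining this with the discretization step and converting $\log_2 \to \ln$ (which costs only a constant factor in the exponent, absorbed into $c_0$) yields the stated inequality.

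The main obstacle is the ABCH packing bound itself: its proof is a delicate inductive Sauer--Shelah style argument on real-valued shatter coefficients, showing that any large $\gamma/2$-separated family on $m$ points must $\gamma/4$-shatter a set whose size grows logarithmically with the family. This combinatorial core is carried out in full in Chapter 12 of \citet{lt:anthony_bartlett}. Since the present lemma is explicitly framed as an extension of their Theorem 12.8, I would invoke the ABCH bound as a black box and confine the remaining work to elementary constant-tracking.
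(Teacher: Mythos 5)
Your proposal heads in a reasonable direction for re-proving Theorem 12.8 (discretize on the sample, then invoke the Alon--Ben-David--Cesa-Bianchi--Haussler packing bound), but it misses what the lemma's ``extension'' actually is, and it contains a technical error.

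The paper's own proof is a two-line observation: Theorem 12.8 of \citet{lt:anthony_bartlett} already gives the stated bound whenever $m \geq \fat_{\gamma/4}(\mathcal{H}) \geq 1$, and the sole content of the ``extension'' is to handle the remaining range $1 \leq m < \fat_{\gamma/4}(\mathcal{H})$, where Theorem 12.8 (and the ABCH packing bound you plan to invoke) does not apply. In that regime the paper uses the trivial grid-counting bound $\mathcal{N}_\infty(\gamma,\mathcal{H},m) \leq (c'/\gamma)^m \leq (c'/\gamma)^{\fat_{\gamma/4}(\mathcal{H})}$, which no longer requires any fat-shattering machinery. Your proof never addresses the case $m < \fat_{\gamma/4}(\mathcal{H})$; since the lemma asserts the bound ``for all $m\geq 1$,'' this is a genuine gap, not a cosmetic omission.

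Separately, your claim that the $\log_2 \to \ln$ conversion ``costs only a constant factor in the exponent, absorbed into $c_0$'' does not work as stated: if the classical bound is of the form $A^{\,d\log_2 B}$ and the target is $c_0\, A^{\,d\ln B}$, then $\log_2 B = (\ln B)/\ln 2 > \ln B$, so the classical bound is the \emph{larger} quantity, and the discrepancy factor $A^{\,d(\log_2 B - \ln B)}$ grows with $m$, $d$, and $1/\gamma$---it cannot be captured by a universal multiplicative $c_0$. (The paper sidesteps this by asserting Theorem 12.8 directly in the $\ln$ form with $c_0=2$; whatever the resolution of that bookkeeping, it is not the ``constant absorption'' you describe.) So the main fix needed is to treat the two ranges of $m$ separately as the paper does, and to drop the erroneous claim about absorbing an exponent-level change of base into a multiplicative constant.
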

\begin{proof}
Theorem 12.8 of \citet{lt:anthony_bartlett} asserts this for $m\geq \fat_{\gamma/4}(\mathcal{H})\geq 1 $ with $c_0 = 2$. Now, if $1\leq m< \fat_{\gamma/4}(\mathcal{H})$, 
for some universal constant $c'$, we have $\mathcal{N}_\infty(\gamma,\mathcal{H},m) \leq (c'/\gamma)^m \leq (c'/\gamma)^{\fat_{\gamma/4}(\mathcal{H})}$. 
\end{proof}

\begin{lemma} \textbf{\emph{ [Theorem 10.1 of \citet{lt:anthony_bartlett}] }}
\label{lm:ab_unif_conv_cover}
Suppose that $\mathcal{H}$ is a set of real-valued functions defined on domain $X$. Let $\D$ be any probability distribution on
$Z = X \times \{0,1\}$, $0\leq \epsilon \leq 1$, real $\gamma>0$ and integer $m\geq 1$. Then,
\begin{align*}
\Pr_{S_m\sim\D}  \Big[ \exists h \in \mathcal{H} : \err(h) \geq & \err_\gamma(h,S_m) + \epsilon \Big]  
%\\
%&
\leq 2 \mathcal{N}_\infty\Big(\frac{\gamma}{2},\mathcal{H},2m\Big) e^{-\epsilon^2m/8},
\end{align*}
where $S_m$ is an i.i.d.\ sample of size $m$ from $\D$.
\end{lemma}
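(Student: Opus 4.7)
The plan is to establish this uniform deviation bound via the classical three-step argument: symmetrization by a ghost sample, a conditional swap/permutation step, and then reduction to a finite sup-norm cover followed by Hoeffding plus a union bound.

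First I would introduce an i.i.d.\ ghost sample $S'_m$ of size $m$ from $\D$, independent of $S_m$. For any fixed $h$ with $\err(h) \geq \err_\gamma(h,S_m) + \epsilon$, the zero-margin empirical error $\err_0(h,S'_m)$ is unbiased for $\err(h)$ and has variance at most $1/(4m)$, so a one-sided Chebyshev-type inequality forces $\err_0(h,S'_m) \geq \err(h) - \epsilon/2$ with probability at least $1/2$ (the degenerate small-$m$ regime where $m\epsilon^2$ is too small is absorbed into the leading constants since the bound becomes vacuous there). Transferring the $\exists h$ outside the probability yields the symmetrization inequality
\begin{equation*}
\Pr_{S_m}\big[\exists h \in \mathcal{H}: \err(h) \geq \err_\gamma(h,S_m) + \epsilon\big]
\leq 2\Pr_{S_m, S'_m}\big[\exists h: \err_0(h,S'_m) \geq \err_\gamma(h,S_m) + \epsilon/2\big].
\end{equation*}

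Next I would condition on the combined multiset $T = S_m \cup S'_m$ of size $2m$. By exchangeability, the conditional joint law of $(S_m, S'_m)$ is invariant under independently swapping the $i$-th element of $S_m$ with the $i$-th element of $S'_m$; equivalently one may replace the sampling randomness with $m$ independent uniform Rademacher swap variables $\tau_i$. On this conditioned event-space, pass from $\mathcal{H}$ to a minimal sup-norm $(\gamma/2)$-cover of $\mathcal{H}$ on $T$, of cardinality at most $\mathcal{N}_\infty(\gamma/2,\mathcal{H},2m)$. For any $h$ and its cover representative $\hat h$, the bound $|h(x) - \hat h(x)| \leq \gamma/2$ on $T$ implies pointwise that the indicator ``$\hat h$ misclassifies $x$'' dominates ``$h$ has margin $< \gamma/2$ at $x$,'' and dually the indicator ``$h$ has margin $< \gamma$'' dominates ``$\hat h$ has margin $< \gamma/2$.'' These two comparisons let the symmetrized event be rewritten purely in terms of $\hat h$, collapsing the supremum over $\mathcal{H}$ into a union bound over the finite cover. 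For each fixed $\hat h$, the $\tau$-weighted sum of the $m$ paired indicator differences has mean zero under the swap and terms in $[-1,1]$, so Hoeffding's inequality yields tail $\exp(-(\epsilon/2)^2 m /2) = \exp(-\epsilon^2 m / 8)$. Multiplying by the cover cardinality and the factor $2$ from symmetrization recovers the advertised $2\mathcal{N}_\infty(\gamma/2,\mathcal{H},2m)\,e^{-\epsilon^2 m/8}$.

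The main obstacle is the three-way margin bookkeeping: the scales (margin $\gamma$ on $S_m$, margin $0$ on the ghost $S'_m$, cover resolution $\gamma/2$) must be calibrated so that replacing $h$ by its cover representative $\hat h$ preserves the direction of both inequalities in the symmetrized event and drops the supremum onto a finite set. Once this calibration is verified, the remainder is standard symmetrization-and-covering machinery, with the only remaining care being to ensure the trivially-vacuous small-$m$ regime is handled by the constants rather than by an additional hypothesis.
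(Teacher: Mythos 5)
This lemma is imported verbatim as Theorem~10.1 of \citet{lt:anthony_bartlett}; the paper supplies no proof of its own, so there is no in-paper argument to compare against. Your sketch reproduces the standard argument behind that theorem (double-sample symmetrization, a random swap conditioned on the pooled sample, reduction to an $L_\infty$ cover at half the margin, then Hoeffding plus a union bound), and the scale calibration $(\gamma \text{ on } S_m,\ 0 \text{ on } S'_m,\ \gamma/2 \text{ cover})$ together with the $e^{-\epsilon^2 m/8}$ tail and the factor $2\mathcal{N}_\infty(\gamma/2,\mathcal{H},2m)$ are all correctly assembled. Your observation that the $m \ge 2/\epsilon^2$ condition used in the Chebyshev step for symmetrization can be dropped because the stated bound exceeds~$1$ whenever $m\epsilon^2 < 8\ln 2$ is also sound.

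One direction in your covering step is stated backwards. You write that the indicator \emph{``}$\hat h$ misclassifies $x$\emph{''} dominates the indicator \emph{``}$h$ has margin $<\gamma/2$ at $x$\emph{''}; that implication is false (take $\hat h = h$ with margin $\gamma/4$). What the argument actually requires on the ghost sample is the reverse: from $|h-\hat h|\le \gamma/2$ on $T$, \emph{``}$h$ misclassifies $x$\emph{''} implies \emph{``}$\hat h$ has margin $<\gamma/2$ at $x$\emph{''} (up to the usual $\le$ versus $<$ boundary bookkeeping), so the $\gamma/2$-margin indicator of $\hat h$ dominates the misclassification indicator of $h$, giving $\widehat{\err}(h,S'_m)\le\err_{\gamma/2}(\hat h,S'_m)$; combined with your correctly stated second implication $\err_{\gamma/2}(\hat h,S_m)\le\err_\gamma(h,S_m)$, this lets you replace $h$ by $\hat h$ at the common margin $\gamma/2$ on both halves before applying the swap. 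That boundary bookkeeping ($\err$ is defined via margin $\le 0$ while $\err_\gamma$ uses strict inequality $<\gamma$) is the one place where a fully rigorous version needs a little more care than your sketch allows, exactly as Anthony \& Bartlett handle it.
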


\subsection{Proof of Lemma \ref{lm:hypoth_lb}}
For any fixed $0<\gamma<1/8$ and the given bounded class of distributions with bound $B\geq 1$, consider a $(1/B)$-bi-Lipschitz base hypothesis class $\mathcal{H}$ that maps hypothesis from the domain $X$ to $[1/2-4\gamma, 1/2+4\gamma]$, and define 
$$
  \mathcal{F} := \{x \mapsto h(Mx): M\in \mathcal{M}, h\in \mathcal{H}\}.
$$
Note that finding $M$ that minimizes $\err_{\hypoth}$ is equivalent to finding $f$ that minimizes error on $\mathcal{F}$.
Using Lemma \ref{lb:ab_fat_lb}, we have for any $0<\gamma<1/2$, the sample complexity of 
$\mathcal{F}$ is (for all $0<\epsilon,\delta<1/64$)
\begin{eqnarray}
m \geq \frac{\fat_{2\gamma}(\pi_{4\gamma}(\mathcal{F}))}{320\epsilon^2},
\label{eq:lb_hypoth_lbF}
\end{eqnarray}
where $\pi_{4\gamma}(\mathcal{F})$ is the $(4\gamma)$-\emph{squashed} function class of $\mathcal{F}$ (see Definition \ref{def:squash_fxn} below).
We lower bound $\fat_{2\gamma}(\pi_{4\gamma}(\mathcal{F}))$ in terms of fat-shattering dimension of $\mathcal{H}$ to yield the lemma. 

To this end
we shall first define the $(\gamma,m)$-covering and packing number of a generic real-valued hypothesis class $\mathcal{G}$.
For any domain $X$, real-valued hypothesis class $\mathcal{G} \subset [0,1]^X$, margin $\gamma>0$, and a sample $S\subset X$, define
\begin{align*}
\cov_\gamma(\mathcal{G},S) &:= \Bigg\{C\subset \mathcal{G} \Big| \begin{array}{cc} \forall
g\in\mathcal{G}, \exists g'\in C,  \\ \max_{s\in S}|g(s)-g'(s)| \leq \gamma \end{array} \Bigg\}, \\
\pak_\gamma(\mathcal{G},S) &:= \Bigg\{P\subset \mathcal{G} \Big| \begin{array}{cc} \forall
g \neq g' \in P,  \\ \max_{s\in S}|g(s)-g'(s)| \geq \gamma \end{array} \Bigg\} 
\end{align*}
as the set of $\gamma$-covers (resp.\ $\gamma$-packings) of $S$ by $\mathcal{G}$. Let $\gamma$-covering number (resp.\ $\gamma$-packing number) of $\mathcal{G}$ for any integer $m>0$ be defined as
\begin{align*}
\mathcal{N}_\infty(\gamma,\mathcal{G},m) & := \max_{S\subset X: |S|=m} \min_{C\in \cov_\gamma(\mathcal{G},S)}|C|, \\
\mathcal{P}_\infty(\gamma,\mathcal{G},m) & := \max_{S\subset X: |S|=m} \max_{P\in \pak_\gamma(\mathcal{G},S)}|P|
\end{align*}
with the minimizing cover $C$ (resp.\ maximizing packing $P$) called as the minimizing $(\gamma,m)$-cover (resp.\ maximizing $(\gamma,m)$-packing) of $\mathcal{G}$.
\\

With these definitions, we have the following (for some universal constant $c_0$).
%{\allowdisplaybreaks
\begin{align}
 \nonumber c_0  \Big(\frac{m}{16\gamma^2}\Big)^{\fat_{2\gamma}(\pi_{4\gamma}(\mathcal{F})) \ln(em/2\gamma)} %& \\
& 
\nonumber \geq \mathcal{N}_\infty(8\gamma, \pi_{4\gamma}(\mathcal{F}),m) & \textrm{[Lemma \ref{lm:ab_cover_to_fat}]}\\
& \nonumber \geq \mathcal{P}_\infty(16\gamma, \pi_{4\gamma}(\mathcal{F}),m) & \textrm{[Lemma \ref{lm:ab_covering_packing}]}\\
& \nonumber \geq \Big(\frac{1}{32\gamma}\Big)^{D^2} \mathcal{P}_\infty(48\gamma, \pi_{4\gamma}(\mathcal{H}),m)  & \textrm{[see (*) below]}\\
& \nonumber = \Big(\frac{1}{32\gamma}\Big)^{D^2} \mathcal{P}_\infty(48\gamma, \mathcal{H},m)  & \textrm{[by the choice of $\mathcal{H}$]}\\
& \nonumber \geq \Big(\frac{1}{32\gamma}\Big)^{D^2} \mathcal{N}_\infty(48\gamma, \mathcal{H},m) & \textrm{[Lemma \ref{lm:ab_covering_packing}]}\\
& \geq \Big(\frac{1}{32\gamma}\Big)^{D^2} e^{\fat_{768\gamma}(\mathcal{H})/8}. & \textrm{[Lemma \ref{lm:ab_cover_to_fat_lb}]}
%\end{array}
\label{eq:hypoth_fat_lb}
\end{align}
%}
%where $\mathcal{N}_\infty(\gamma,\mathcal{G},m)$ and $\mathcal{P}_\infty(\gamma,\mathcal{G},m)$ are the
%$\gamma$-covering and -packing numbers of $\mathcal{G}$ for $m$ samples.
%\\

\noindent (*) We show that $\mathcal{P}_\infty(16\gamma, \pi_{4\gamma}(\mathcal{F}),m) \geq (1/32\gamma)^{D^2}
\mathcal{P}_\infty(48\gamma, \pi_{4\gamma}(\mathcal{H}),m) $, by exhibiting a
set $\mathcal{S}\subset\pi_{4\gamma}(\mathcal{F})$ of size $ (1/32\gamma)^{D^2}
\mathcal{P}_\infty(48\gamma, \pi_{4\gamma}(\mathcal{H}),m)$ that is a
$(16\gamma)$-packing of $\pi_{4\gamma}(\mathcal{F})$.

Let $\pi_{4\gamma}(\mathcal{H}_{48\gamma}) \subset \pi_{4\gamma}(\mathcal{H})$ be a maximal $(32\gamma)$-packing of
$\pi_{4\gamma}(\mathcal{H})$ (that is, a maximal set such that for all distinct $(\pi_{4\gamma}\circ h),(\pi_{4\gamma} \circ h')\in
\pi_{4\gamma}(\mathcal{H}_{48\gamma})$, exists $x\in X$ such that $|\pi_{4\gamma}(h(x)) - \pi_{4\gamma}(h'(x))| \geq
48\gamma$). Fix $\epsilon$ (exact value determined later), and define 
$$\mathcal{S}_\epsilon := \Bigg\{x\mapsto (\pi_{4\gamma}\circ h)(Mx) \; \Big| \;
\begin{array}{cc} (\pi_{4\gamma}\circ h)\in\pi_{4\gamma}(\mathcal{H}_{48\gamma}), \\ M\in \mathcal{M}_\epsilon \end{array} \Bigg\},$$
where $\mathcal{M}_\epsilon$ is a $\epsilon$-spectral net of $\mathcal{M}$, that is, for all $M \in \mathcal{M}$, exists $M' \in \mathcal{M}_\epsilon$ such
that $\sigma_{\max}(M-M') \leq \epsilon$, and for all distinct $M', M'' \in \mathcal{M}_\epsilon$, $\sigma_{\max}(M'-M'') \geq \epsilon/2$.

 Then for any two distinct
$f,f' \in \mathcal{S}_\epsilon$, such that $f(x) = (\pi_{4\gamma}\circ h)(Mx)$ and $f'(x) = (\pi_{4\gamma}\circ h')(M'x)$, we have
\begin{itemize}
\item (case 1) $h$ and $h'$ are distinct. In this case, there exists $x\in X$, s.t.\
\begin{align*}
|f(x) - f'(x)| =  & |\pi_{4\gamma}(h(Mx)) - \pi_{4\gamma}(h'(M'x))|\\
\geq & \; |\pi_{4\gamma}(h(Mx)) - \pi_{4\gamma}(h'(Mx))| \\ &-  |\pi_{4\gamma}(h'(Mx)) - \pi_{4\gamma}(h'(M'x))|\\
\geq &\; 48\gamma - (1/B) \sigma_{\max}(M-M') \|x\| \\
\geq &\; 48\gamma - (1/B) \epsilon B  = 48\gamma - \epsilon .
\end{align*}

\item (case 2) $h$, $h'$ same but $M$ and $M'$ distinct. In this case, there exists $x$ (with $\|x\|=1$) s.t.\
\begin{eqnarray*}
|f(x) - f'(x)| &=&  |\pi_{4\gamma}(h(Mx)) - \pi_{4\gamma}(h(M'x))|\\
&=&  |h(Mx) - h(M'x)| \\
&\geq& B \|(M - M')x\| \\
&\geq& B \cdot \min_{M\neq M' \in \mathcal{M}_\epsilon} \sigma_{\max} (M - M') \\
&\geq& B (\epsilon/2). %  = \epsilon. 
\end{eqnarray*}
\end{itemize}
Thus, by setting $\epsilon = 32\gamma$, distinct classifiers $f,f' \in
\mathcal{S}_{32\gamma}$ are at least $16\gamma$ apart (since $B\geq 1$). Hence $\mathcal{S}_{32\gamma}$ forms a $(16\gamma)$-packing of
$\pi_{4\gamma}(\mathcal{F})$. Therefore, the packing number
\begin{align*}
\mathcal{P}_\infty(16\gamma,\pi_{4\gamma}(\mathcal{F}),m) &\geq
|\mathcal{S}_{32\gamma}| = |\mathcal{M}_{32\gamma}| |\mathcal{H}_{48\gamma}| 
%\\
%&
\geq
(1/32\gamma)^{D^2}\mathcal{P}_\infty(48\gamma,\pi_{4\gamma}(\mathcal{H}),m). 
\end{align*}

\noindent Thus, from Eq.\ \eqref{eq:hypoth_fat_lb}, it follows that 
$$\fat_{2\gamma}(\pi_{4\gamma}(\mathcal{F})) \geq \Omega\Big( \frac{D^2 \ln(1/\gamma) + \fat_{768\gamma}(\mathcal{H})}{\ln(m/\gamma^2)\ln(m/\gamma)}\Big).$$
Combining this with Eq.\ \eqref{eq:lb_hypoth_lbF}, the lemma follows.
\qed

\begin{lemma} \textbf{\emph{[$\epsilon$-spectral packings of $D\times D$ matrices]}}
\label{lm:matrix_packing}
Let $\mathcal{M} := \{M \;|\; M\in \R^{D\times D}, \sigma_{\max}(M)=1\}$ be the set of matrices with unit spectral norm. 
Define $\mathcal{M}_\epsilon \subset \mathcal{M}$ as the $\epsilon$-packing of
$\mathcal{M}$, that is, for every distinct $M,M' \in \mathcal{M}_\epsilon$,
$\sigma_{\max}(M-M') \geq \epsilon$.
Then for all $\epsilon>0$, there exists $\mathcal{M}_\epsilon$ such that
$|\mathcal{M}_\epsilon| \geq  \big(\frac{1}{2\epsilon} \big)^{D^2}$.
\end{lemma}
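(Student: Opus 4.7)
The plan is to invoke a standard volume-based packing argument in the $D^2$-dimensional normed space $(\R^{D\times D}, \sigma_{\max})$, which is essentially the mirror of the covering bound in Lemma~\ref{lm:matrix_cover} but with the volume comparison run in the reverse direction.

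First, I would interpret $\mathcal{M}$ as the closed spectral unit ball $B_\sigma(1) := \{M\in\R^{D\times D} : \sigma_{\max}(M) \leq 1\}$; the paper's unit-singular-value convention is only used to pin down the scale, and the same reading is implicit in the proof of Lemma~\ref{lm:matrix_cover}. I would then take $\mathcal{M}_\epsilon$ to be any \emph{maximal} $\epsilon$-packing of $B_\sigma(1)$ under the spectral metric; existence follows from compactness of $B_\sigma(1)$ in $\R^{D^2}$.

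The key observation is that any such maximal packing is automatically an $\epsilon$-cover of $B_\sigma(1)$: if some $M\in B_\sigma(1)$ satisfied $\sigma_{\max}(M-M') \geq \epsilon$ for every $M'\in\mathcal{M}_\epsilon$, then $M$ could be adjoined to the packing, contradicting maximality. Since $\sigma_{\max}$ is a genuine norm on $\R^{D^2}$, the Lebesgue volume of a spectral ball of radius $r$ scales as $r^{D^2}$, so
\[ \textrm{vol}(B_\sigma(1)) \;\leq\; |\mathcal{M}_\epsilon| \cdot \textrm{vol}(B_\sigma(\epsilon)) \;=\; |\mathcal{M}_\epsilon| \cdot \epsilon^{D^2} \cdot \textrm{vol}(B_\sigma(1)), \]
which immediately yields $|\mathcal{M}_\epsilon| \geq (1/\epsilon)^{D^2} \geq (1/(2\epsilon))^{D^2}$.

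The only subtlety, and what I expect to be the main sticking point, is matching the literal condition $\sigma_{\max}(M)=1$ in the statement. I would handle this by a small radial perturbation: any packing point with $\sigma_{\max}(M)<1$ can be scaled outward onto the spectral unit sphere, and for $\epsilon$ bounded away from $1$ this distorts pairwise spectral distances only slightly, so the $\epsilon$-packing property survives. The factor-of-two slack in the stated bound $(1/(2\epsilon))^{D^2}$, compared with the tighter $(1/\epsilon)^{D^2}$ produced by the volume argument, comfortably absorbs any such constant-factor loss.
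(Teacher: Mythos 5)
Your volume-based argument is a genuinely different route from the paper's, and the core of it is sound. The paper gives an explicit, coordinate-wise construction: it takes a maximal $\epsilon$-packing $\mathcal{P}_\epsilon$ of the Euclidean unit ball in $\R^D$ (with $|\mathcal{P}_\epsilon| \geq (1/2\epsilon)^D$), forms $\mathcal{M}_\epsilon := \{[v'_1 \cdots v'_D] : v'_i \in \mathcal{P}_\epsilon\}$, and uses the inequality $\sigma_{\max}(M-M') \geq \max_i \|v_i - v'_i\|$ (the operator norm dominates every column norm) to conclude that distinct elements are $\epsilon$-separated, yielding $|\mathcal{M}_\epsilon| \geq (1/2\epsilon)^{D^2}$. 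Your route instead works directly in the normed space $(\R^{D\times D}, \sigma_{\max})$, using ``maximal packing is a cover'' plus volume comparison, and gets the slightly sharper $(1/\epsilon)^{D^2}$. The tradeoff is that the paper's version is concrete and self-contained, while yours is shorter and recovers the constant-free form of the bound.

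The one step I would flag as a genuine gap is the ``radial perturbation.'' Projecting $M \mapsto M/\sigma_{\max}(M)$ onto the spectral unit sphere is not a small distortion of pairwise distances, and this fails badly even if you discard points near the origin: any two matrices that differ only radially (\eg~$M$ and $cM$ for $c \in (0,1]$) collapse to the \emph{same} point after projection, so their separation drops to zero, not by ``a constant factor.'' The factor-of-two slack in the bound therefore cannot absorb this loss. If you actually needed to land on the unit sphere, you would instead have to run the volume argument on the sphere itself (using surface measure and spherical caps), or pack a great-circle-type slice. That said, the paper's proof has essentially the same unresolved seam: its constructed matrices have columns in the Euclidean unit ball and so have spectral norm anywhere in $[0, \sqrt{D}]$, and the issue is waved away with ``without loss of generality $\sigma_{\max}(M)=1$.'' Both proofs are really packing the spectral unit \emph{ball}, which is the reading that matters downstream (in the proof of Lemma~\ref{lm:hypoth_lb}); so your interpretation of $\mathcal{M}$ as the closed spectral ball is the right call, and the honest fix is to drop the radial-perturbation paragraph rather than try to repair it.
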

\begin{proof}
Fix any $\epsilon > 0$ and let $\mathcal{P}_{\epsilon}$ be a maximal size $\epsilon$-packing of Euclidean unit ball $\mathbf{B}_D$ in $\R^D$. That is,
for all distinct $v, v'\in \mathbf{B}_D$, $\|v-v'\|\geq \epsilon$.
Using standard volume arguments (see e.g.\ proof of Lemma 5.2 of \citet{matrix:vershynin}), we know that
$|\mathcal{P}_{\epsilon}| \geq \big(\frac{1}{2\epsilon}\big)^D$. 
Define 
\begin{align*}
\mathcal{M}_\epsilon := \Big\{ M' \; \big| \; M' = [v'_1 \; \cdots \; v'_D] \in \R^{D\times D}, v'_i \in \mathcal{P}_{\epsilon} \Big\}.
\end{align*}
Then $\mathcal{M}_\epsilon$ constitutes as an $\epsilon$-packing of
$\mathcal{M}$, since for any distinct $M,M'\in \mathcal{M}_\epsilon$ such that $M = [v_1 \cdots v_D]$ and $ M' = [v'_1 \cdots v'_D]$, we have
\begin{align*}
\sigma_{\max}(M-M') \geq \max_{i} \|v_i -v'_i \| \geq \epsilon.
\end{align*}
Without loss of generality we can assume that each $M\in \mathcal{M}_\epsilon$, $\sigma_{\max}(M)=1$.
Moreover, by construction, $| \mathcal{M}_\epsilon| \geq \big(\frac{1}{2\epsilon}\big)^{D^2}$.
\end{proof}

\begin{lemma} \textbf{\emph{[follows from Theorem 12.1 of \citet{lt:anthony_bartlett}] }}
\label{lm:ab_covering_packing}
For any real valued hypothesis class $\mathcal{H}$ into $[0,1]$, all $m\geq 1$, and $0<\gamma<1/2$, 
$$
\mathcal{P}_\infty(2\gamma,\mathcal{H},m) \leq \mathcal{N}_\infty(\gamma,\mathcal{H},m)\leq \mathcal{P}_\infty(\gamma,\mathcal{H},m).
$$
\end{lemma}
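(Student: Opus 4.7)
The statement is the standard packing–covering duality for pseudo-metric spaces (here the $\ell_\infty$ pseudo-metric on $S$ induced by evaluating the hypotheses on the sample). Both inequalities will be proved by fixing an arbitrary sample $S \subset X$ of size $m$ and working on that sample; since the bounds then hold for the maximizing $S$, they transfer directly to $\mathcal{N}_\infty$ and $\mathcal{P}_\infty$.

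\textbf{Right inequality} $\mathcal{N}_\infty(\gamma,\mathcal{H},m) \leq \mathcal{P}_\infty(\gamma,\mathcal{H},m)$. The idea is that any maximal $\gamma$-packing is automatically a $\gamma$-cover. Fix $S$ and let $P \in \pak_\gamma(\mathcal{H},S)$ be a maximal $\gamma$-packing on $S$. For any $h \in \mathcal{H}$, either $h \in P$ (in which case $h$ covers itself with distance $0 \leq \gamma$), or, by maximality, $P \cup \{h\}$ is not a $\gamma$-packing, so there exists $h' \in P$ with $\max_{s\in S} |h(s) - h'(s)| < \gamma \leq \gamma$. Hence $P$ is a valid $\gamma$-cover of $S$, giving $\min_{C \in \cov_\gamma(\mathcal{H},S)} |C| \leq |P|$. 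Taking the maximum over $S$ on both sides yields the claim.

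\textbf{Left inequality} $\mathcal{P}_\infty(2\gamma,\mathcal{H},m) \leq \mathcal{N}_\infty(\gamma,\mathcal{H},m)$. The idea is to inject a $2\gamma$-packing into a minimum $\gamma$-cover. Fix $S$, let $P \in \pak_{2\gamma}(\mathcal{H},S)$ be a maximum $2\gamma$-packing and $C \in \cov_\gamma(\mathcal{H},S)$ a minimum $\gamma$-cover. For each $h \in P$, choose $\phi(h) \in C$ with $\max_{s\in S}|h(s)-\phi(h)(s)| \leq \gamma$ (exists by definition of a cover). I claim $\phi$ is injective: if $h_1 \neq h_2 \in P$ and $\phi(h_1) = \phi(h_2) = h'$, then the triangle inequality gives $|h_1(s) - h_2(s)| \leq 2\gamma$ for all $s \in S$, contradicting the $2\gamma$-packing condition $\max_s |h_1(s)-h_2(s)| \geq 2\gamma$ unless equality holds everywhere it can. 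The main obstacle is precisely this equality edge case: with the paper's non-strict conventions ($\leq \gamma$ in the cover, $\geq 2\gamma$ in the packing), $\phi(h_1) = \phi(h_2)$ is only compatible with $|h_1(s^*) - h_2(s^*)| = 2\gamma$ at the witnessing $s^*$ and both distances from $h'$ saturating at $\gamma$ on opposite sides of $h'(s^*)$. In this boundary case, we may break ties consistently by choosing $\phi(h)$ lexicographically among cover elements realizing the minimum distance to $h$, or perturb the cover by an infinitesimal $\eta > 0$ to avoid coincidence; either standard fix makes $\phi$ injective. Therefore $|P| \leq |C|$, and maximizing over $S$ gives the desired inequality. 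Combining both directions completes the proof. \qed
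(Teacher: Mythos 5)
The paper itself does not prove this lemma; it simply cites Theorem~12.1 of Anthony and Bartlett. Your plan--packing/covering duality on the empirical $\ell_\infty$ pseudo-metric--is the standard argument and exactly what that reference contains, so you are doing genuine work where the paper only cites.

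Your right inequality is correct as written. Your left inequality correctly identifies the boundary problem, but the proposed fixes do not actually close it, and in fact under the paper's definitions as literally stated the left inequality is false. Consider $\mathcal{H}$ consisting of the three constant functions $0$, $\gamma$, $2\gamma$ on any $S$ of size $m$. Then $P=\{0,2\gamma\}$ is a valid $2\gamma$-packing under the paper's definition (separation $=2\gamma \geq 2\gamma$), so $\mathcal{P}_\infty(2\gamma,\mathcal{H},m) \geq 2$; but $C=\{\gamma\}$ is a valid $\gamma$-cover (all distances $\leq \gamma$), so $\mathcal{N}_\infty(\gamma,\mathcal{H},m) = 1$. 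There is only one cover element, so no lexicographic tiebreak can restore injectivity of $\phi$, and an infinitesimal perturbation of the cover changes the cover radius rather than the size. The actual resolution is a convention: Anthony and Bartlett define an $\epsilon$-separated set by the \emph{strict} inequality $d(h,h')>\epsilon$, whereas the paper's $\pak_\gamma$ uses $\geq \gamma$. With strict separation, $\phi(h_1)=\phi(h_2)$ yields $\max_s|h_1(s)-h_2(s)| \leq 2\gamma$, contradicting $>2\gamma$ outright, and no tiebreaking is needed. So the correct statement of your left-inequality step is: an injective map from any $(2\gamma)$-packing (strict sense) into any $\gamma$-cover exists via a greedy assignment, hence $\mathcal{P}^{>}_\infty(2\gamma)\leq \mathcal{N}_\infty(\gamma)$; one then recovers the paper's version by observing that for any fixed finite $S$, the set of pairwise $\ell_\infty$ distances is finite, so one may shrink $\gamma$ by an infinitesimal amount without changing either side. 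You should state this convention discrepancy explicitly rather than asserting that the tiebreak or perturbation ``makes $\phi$ injective,'' which it does not.
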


\begin{lemma} \textbf{\emph{[Theorem 12.10 of \citet{lt:anthony_bartlett}]}}
\label{lm:ab_cover_to_fat_lb}
Let $\mathcal{H}$ be a set of real functions from a domain $X$ to the interval
$[0,1]$. Let $\gamma>0$. Then for $m\geq \fat_{16\gamma}(\mathcal{H})$,
\begin{eqnarray*}
\mathcal{N}_\infty(\gamma,\mathcal{H},m) \geq e^{\fat_{16\gamma}(\mathcal{H})/8}.
\end{eqnarray*}
\end{lemma}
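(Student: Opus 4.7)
The plan is to execute a direct packing argument that leverages the defining property of fat-shattering. Set $d := \fat_{16\gamma}(\mathcal{H})$. By definition, there exist witness points $x_1,\ldots,x_d \in X$ and witness values $r_1,\ldots,r_d \in \R$ such that for every labeling $b \in \{-1,+1\}^d$ some function $h_b \in \mathcal{H}$ satisfies $h_b(x_i) \geq r_i + 16\gamma$ when $b_i = +1$ and $h_b(x_i) \leq r_i - 16\gamma$ when $b_i = -1$. Since $m \geq d$, I would first choose a sample $S \subset X$ of size $m$ that contains all $d$ shattered points (padding with arbitrary additional points from $X$ to reach size $m$).

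Next, I would fix any minimum-size $\gamma$-cover $C \in \cov_\gamma(\mathcal{H}, S)$ and, for each $b \in \{-1,+1\}^d$, pick some $g_b \in C$ satisfying $\max_{x \in S}|g_b(x) - h_b(x)| \leq \gamma$. The main step is to show that the map $b \mapsto g_b$ is injective. This is where the factor of $16$ in $\fat_{16\gamma}$ pays off: if $b \neq b'$ differ at coordinate $i$ (say $b_i = +1$ and $b'_i = -1$), then
\begin{align*}
g_b(x_i) &\geq h_b(x_i) - \gamma \geq r_i + 15\gamma, \\
g_{b'}(x_i) &\leq h_{b'}(x_i) + \gamma \leq r_i - 15\gamma,
\end{align*}
so $g_b(x_i) - g_{b'}(x_i) \geq 30\gamma > 0$, forcing $g_b \neq g_{b'}$.

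The injectivity immediately yields $|C| \geq 2^d$; taking the minimum over covers $C$ and then the maximum over samples of size $m$ in the definition of $\mathcal{N}_\infty$ gives $\mathcal{N}_\infty(\gamma, \mathcal{H}, m) \geq 2^d$. Since $\ln 2 > 1/8$, we conclude $2^d \geq e^{d/8}$, matching the stated bound (with substantial room to spare). There is no real obstacle in this argument; the only point requiring care is the bookkeeping of margins, so that the $\gamma$ slack from the cover and the $16\gamma$ gap from fat-shattering combine to leave a strict positive separation between $g_b$ and $g_{b'}$ at the coordinate where $b$ and $b'$ disagree.
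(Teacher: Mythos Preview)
Your argument is correct. The paper does not actually prove this lemma; it simply quotes it as Theorem~12.10 of \citet{lt:anthony_bartlett} and uses it as a black box, so there is no ``paper's own proof'' to compare against. Your direct injectivity argument is the standard way to extract a covering-number lower bound from a fat-shattered set, and it in fact yields the stronger conclusion $\mathcal{N}_\infty(\gamma,\mathcal{H},m) \geq 2^d$, of which $e^{d/8}$ is a very loose consequence (the factor $16$ in $\fat_{16\gamma}$ is overkill for this particular inequality---a margin of anything strictly larger than $\gamma$ would already suffice for injectivity). The only cosmetic point worth noting is that padding $S$ to size $m$ presumes $|X| \geq m$ or that $S$ may be a multiset; this is harmless under the usual conventions and is implicit in the paper's own use of $\mathcal{N}_\infty(\gamma,\cdot,m)$.
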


\begin{lemma}\textbf{\emph{[Theorem 13.5 of \citet{lt:anthony_bartlett}]}}
\label{lb:ab_fat_lb}
Suppose that $\mathcal{H}$ is a set of real-valued functions mapping into the interval $[0,1]$ that is closed under addition of
constants, that is, 
$$h \in \mathcal{H} \implies h' \in \mathcal{H}, \textrm{ where }  h':x\rightarrow h(x)+c \;\;\; \textrm{ for all $c$.} $$
Pick any $0<\gamma<1/2$.
 Then for any metric learning algorithm $\mathcal{A}$ for
all $0<\epsilon,\delta<1/64$, there exists a distribution $\D$ such
that if
$m\leq\frac{d}{320\epsilon^2}$, then
$$
\Pr_{S_m\sim\D} [ \err(h^*,\D) > \err_\gamma(\mathcal{A}(S_m),\D) + \epsilon] >\delta
$$
where $d := \fat_{2\gamma}(\pi_{4\gamma}(\mathcal{H})) \geq 1 $ is the fat-shattering
dimension of $\pi_{4\gamma}(\mathcal{H})$---the $(4\gamma)$-\emph{squashed} function class of $\mathcal{H}$, see Definition \ref{def:squash_fxn} below---at margin $2\gamma$.
\end{lemma}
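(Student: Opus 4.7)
The plan is to follow the standard Anthony--Bartlett strategy: reduce metric-free real-valued learning with margin $\gamma$ to a collection of independent biased-coin estimation problems, and then invoke the two-point coin lower bound (Lemma \ref{lm:coin_lb}) that is already in the paper's toolbox.

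First, unpack the fat-shattering hypothesis. Since $d = \fat_{2\gamma}(\pi_{4\gamma}(\mathcal{H})) \geq 1$, there exist points $x_1,\ldots,x_d \in X$ and witnesses $r_1,\ldots,r_d \in \R$ such that for every $\sigma \in \{-1,+1\}^d$ there is $h_\sigma \in \mathcal{H}$ with $\pi_{4\gamma}(h_\sigma(x_i)) \geq r_i + 2\gamma$ when $\sigma_i=+1$ and $\pi_{4\gamma}(h_\sigma(x_i)) \leq r_i - 2\gamma$ when $\sigma_i=-1$. Because $\mathcal{H}$ is closed under addition of constants, I can translate each $h_\sigma$ so that the common witness level is $1/2$; this is the step where the closure hypothesis is essential, since it lets me align the witnesses with the $\{0,1\}$-label threshold without altering the fat-shattering structure.

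Second, build the hard family. For each $\sigma \in \{-1,+1\}^d$ define a distribution $\D_\sigma$ on $X \times \{0,1\}$ that puts mass $1/d$ on each $x_i$ and sets $\Pr_{\D_\sigma}[Y=1 \mid X=x_i] = 1/2 + \sigma_i \alpha$ for a parameter $\alpha = \Theta(\epsilon)$ to be chosen. The Bayes-optimal predictor on $\D_\sigma$ is the thresholding of $h_\sigma$, and the witness gap of $2\gamma$ after squashing guarantees that $h_\sigma$ achieves the Bayes $\gamma$-margin error $1/2 - \alpha$. Consequently, any algorithm output $\hat h = \mathcal{A}(S_m)$ incurs an additive excess margin-error of at least $2\alpha / d$ per coordinate $i$ at which its induced sign at $x_i$ disagrees with $\sigma_i$.

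Third, execute the averaging/coin argument. Draw $\sigma$ uniformly from $\{-1,+1\}^d$ and apply Fubini: conditional on the realized occupancy counts $N_1,\ldots,N_d$ of the sample $S_m$ at $x_1,\ldots,x_d$, the task of recovering $\sigma_i$ from the observed labels at $x_i$ is a biased-coin problem with $N_i$ Bernoulli samples of bias $1/2\pm\alpha$. By Lemma \ref{lm:coin_lb}, the misidentification probability at coordinate $i$ is at least $\tfrac{1}{4}\bigl(1 - \sqrt{1 - \exp(-2\lceil N_i/2\rceil\alpha^2/(1-\alpha^2))}\bigr)$. Summing over $i$, using $\E[N_i]=m/d$ and Jensen's inequality on the (convex) per-coordinate bound, I obtain an expected excess margin-error that is $\Omega(\alpha)$ whenever $m \lesssim d/\alpha^2$. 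A Markov-style tail conversion, exactly mirroring the one in the proof of Lemma \ref{lm:lb_dist}, then yields an individual $\D_\sigma$ for which failure occurs with probability exceeding $\delta$. Choosing $\alpha = \Theta(\epsilon)$ and tracking constants produces the claimed threshold $m \leq d/(320\epsilon^2)$ valid for $\epsilon,\delta < 1/64$.

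The main obstacle will be the bookkeeping around the three different representations of the hypothesis class --- the original $\mathcal{H}$, its $(4\gamma)$-squash $\pi_{4\gamma}(\mathcal{H})$ used to define $d$, and the $\gamma$-margin loss evaluated on $\mathcal{A}(S_m)$'s output --- together with pinning down the universal constants $320$ and $1/64$. The $2\gamma$ shattering gap minus the $\gamma$ margin leaves exactly $\gamma$ of slack for the reduction to tolerate the noise bias $\alpha$, which is why the particular squashing level and shattering margin in the statement must line up, and why the closure-under-constants property cannot be dropped.
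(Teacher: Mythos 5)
The paper does not prove this lemma at all. It is stated verbatim and labeled as Theorem 13.5 of \citet{lt:anthony_bartlett}, and the appendix simply cites it as a black-box ingredient used inside the proof of Lemma \ref{lm:hypoth_lb}; no proof appears anywhere in the manuscript. So there is no ``paper's own proof'' for your attempt to be compared against.

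Taken on its own merits, your reconstruction follows the template that Anthony and Bartlett themselves use for such lower bounds: place uniform mass on a set of fat-shattered points, attach slightly biased Bernoulli labels governed by a hidden sign vector $\sigma$, average over $\sigma$, invoke the biased-coin lower bound (the paper's Lemma \ref{lm:coin_lb}), and convert the expected misidentification rate to an excess $\gamma$-margin error via a Markov-style argument as in the paper's proof of Lemma \ref{lm:lb_dist}. That outline is sound, and the slack accounting you describe ($2\gamma$ shattering gap against a $\gamma$ margin loss, with $\alpha = \Theta(\epsilon)$) is the right bookkeeping. One step should be stated more carefully: you assert that closure under addition of constants ``lets me translate each $h_\sigma$ so that the common witness level is $1/2$,'' but a single constant shift moves \emph{all} coordinates of $h_\sigma$ by the same amount, so it cannot independently align distinct witnesses $r_1,\ldots,r_d$ to a common value. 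In Anthony--Bartlett the closure hypothesis does different work---it lets the construction realize the family of hard targets and transfer between the squashed class $\pi_{4\gamma}(\mathcal{H})$ and $\mathcal{H}$ itself---and the witnesses are already confined to $[1/2-2\gamma,\,1/2+2\gamma]$ by the squashing, which is what makes the reduction to coin estimation go through without a literal witness alignment. That detail, and the propagation of the universal constants $320$ and $1/64$, are precisely the pieces one would have to carry out carefully to turn this sketch into a self-contained proof.
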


\begin{definition}  \textbf{\emph{[squashing function]}}
\label{def:squash_fxn}
For any $0<\gamma<1/2$, define the \emph{squashing function} $\pi_\gamma:\R \rightarrow [1/2-\gamma,1/2+\gamma]$ as
$$
\pi_\gamma(\alpha) = \Bigg\{ 
  \begin{array}{ll} 
     1/2+\gamma & \textrm{if $\alpha\geq 1/2+\gamma$} \\ 
     1/2-\gamma & \textrm{if $\alpha\leq 1/2-\gamma$} \\ 
     \alpha & \textrm{otherwise} \\ 
  \end{array} .
$$
Moreover, for a collection $F$ of functions into $\R$, define $\pi_\gamma(F) := \{ \pi_\gamma \circ f \; | \; f\in F\}$. 
\end{definition}

\subsection{Proof of Lemma \ref{lm:unif_conv_dist_refi}}

Let $\mathcal{P}$ be the probability measure induced by the random variable
$(\mathbf{X}, Y)$, where $\mathbf{X}:= (x,x')$, $Y:=\indicate[y=y']$, st. $((x,y),(x',y')) \sim (\D \times \D)$.

Define function class 
\begin{align*}
& \mathcal{F} := 
%\\
%& 
\Bigg\{ f_M \!: \mathbf{X} \mapsto \|M(x-x')\|^2 \Bigg|\!  \begin{array}{c} M \in \mathcal{M} \\ \mathbf{X} = (x,x') \in (X \times X) \end{array} \!\! \Bigg\},
\end{align*}

Following the steps of proof of Lemma \ref{lm:unif_conv_all}, we can conclude that the Rademacher complexity of $\mathcal{F}$ is bounded. In particular,
\begin{align*}
\mathcal{R}_m(\mathcal{F}) \leq 4B^2 \sqrt{ \frac{ \sup_{M \in \mathcal{M}} \|M^\mathsf{T}M\|^2_{_F}  } {m}}.
\end{align*}
The result follows by noting that $\phi$ is $\lambda$-Lipschitz in the first argument and by applying Lemma \ref{lm:rad_complexities_unif_bound}.
\qed

\subsection{Proof of Lemma \ref{lm:unif_conv_clf_refi}}

Consider the function class 
\begin{align*}
\mathcal{F} := \Big\{ f_{v,M} : x \mapsto v \cdot Mx \; \big| \; \|v\|_1 \leq 1, M \in \mathcal{M} \Big\},
\end{align*}
and define the composition class
\begin{align*}
\mathcal{F}_\sigma := \Bigg\{ x \mapsto \sum_{i=1}^K w_i \sigma^\gamma(f_i(x)) \; \Big| \;\begin{array}{c} \|w_i\|_1 \leq 1, \\ f_1,\ldots, f_K \in \mathcal{F} \end{array} \Bigg\}.
\end{align*}

Then, first note that the Gaussian complexity of $\mathcal{F}$ (with respect to the distribution $\mathcal{D}$) is bounded, since (let $g_1,\ldots,g_m$ denote independent standard Gaussian random variables)
\begin{align*}
\mathcal{G}_m(\mathcal{F}, \mathcal{D}) & :=  \E_{\substack{ x_i \sim \D|_X \\g_i, i\in[m]}} \Bigg[ \sup_{f_{v,M}\in \mathcal{F}} \frac{1}{m} \sum_{i=1}^m g_i f_{v,M}(x_i) \Bigg] \\
&=\frac{1}{m} \E_{\substack{x_i \sim \D|_X \\ g_i, i\in[m]}}  \Bigg[ \sup_{\substack{M\in\mathcal{M} \\ \|v\|_1\leq 1}} v \cdot \sum_{i=1}^m g_i (Mx_i) \Bigg] \\
&=\frac{1}{m} \E_{\substack{x_i \sim \D|_X \\ g_i, i\in[m]}} \Bigg[ \max_j \sup_{M\in\mathcal{M}} \sum_{i=1}^m g_i (Mx_i)_j \Bigg] \\
&\leq \frac{1}{m} \E_{\substack{x_i \sim \D|_X \\ g_i, i\in[m]}} \max_{j\in[D]} \Bigg[ \sum_{i=1}^m g_i \sup_{M\in\mathcal{M}} \big|(Mx_i)_j\big| \Bigg] \\
&\leq \frac{c \ln^{\frac{1}{2}}(D)}{m} \E_{x_i \sim \D_X} \max_{j,j'\in[D]} \Bigg( \E_{g_i } \Bigg[ \sum_{i=1}^m g_i \Big( \sup_{M\in\mathcal{M}} \big|(Mx_i)_j\big| 
%\\
%& \hspace{1.8in} 
- \sup_{M'\in\mathcal{M}} \big|(M'x_i)_{j'}\big| \Big) \Bigg]^2 \Bigg)^{\frac{1}{2}} 
\\
&= \frac{c \ln^{\frac{1}{2}}(D)}{m} {\E_{x_i \sim \D_X}} \max_{j,j'\in[D]} \Bigg( \sum_{i=1}^m \Big[  \sup_{M\in\mathcal{M}} \big|(Mx_i)_j\big| 
%\\
%& \hspace{1.8in} 
- \sup_{M'\in\mathcal{M}} \big|(M'x_i)_{j'}\big| \Big]^2 \Bigg)^{\frac{1}{2}} \\
&\leq  c'B \sqrt{\frac{d\ln{D}}{m}},
\end{align*}
where (i) second to last inequality is by applying Lemma \ref{lm:bartlett_mendelson_slepian_lemma}, (ii) $c,c'$ are absolute constants, (iii) $d := \sup_{M\in \mathcal{M}} \| M^\mathsf{T}M \|^2_{_F}$. Note that bounding the Gaussian complexity also bounds the Rademacher
complexity by Lemma \ref{lm:bartlett_mendelson_relate_rag_gauss}.

Finally by noting that $\mathcal{F}_\sigma$ is a $\gamma$-Lipschitz composition class of $\mathcal{F}$ and $\phi^\lambda$ is a classification based loss function that is $\lambda$-Lipschitz in the first argument, we can apply Lemma \ref{lm:rad_complexities_unif_bound} yielding the desired result.
\qed

\begin{lemma} \emph{\textbf{[Lemma 20 of \citet{lt:bartlett_mendelson_radgauss_complexities}]}}
\label{lm:bartlett_mendelson_slepian_lemma}
Let $Z_1,\ldots,Z_D$ be random variables such that each $Z_j = \sum_{i=1}^m a_{ij} g_i$, where each $g_i$ is independent $N(0,1)$
random variables. Then there is an absolute constant $c$ such that 
\begin{align*}
\E_{g_i} \max_j Z_j \leq c\ln^{\frac{1}{2}}(D) \max_{j,j'} \sqrt{\E_{g_i}(Z_j - Z_{j'})^2}.
\end{align*}
\end{lemma}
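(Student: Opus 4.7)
\textbf{Proof plan for Lemma \ref{lm:bartlett_mendelson_slepian_lemma}.}

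The plan is to exploit the fact that each $Z_j$ is a centered Gaussian and use the sub-Gaussian maximal inequality after a centering trick. First I would observe that, since each $g_i \sim N(0,1)$, every $Z_j = \sum_i a_{ij} g_i$ is a mean-zero Gaussian, and in fact the joint vector $(Z_1,\ldots,Z_D)$ is jointly Gaussian with covariance matrix $K_{jj'} = \sum_i a_{ij}a_{ij'}$. The quantity appearing on the right-hand side is the $L^2$-diameter of this Gaussian family, $\Delta := \max_{j,j'}\sqrt{\E(Z_j - Z_{j'})^2}$, so the goal is to upper bound $\E\max_j Z_j$ by $c\sqrt{\ln D}\cdot \Delta$.

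Next I would fix any index $j_0\in[D]$ and pass to the centered process $W_j := Z_j - Z_{j_0}$. Since $\E Z_{j_0}=0$, we have the identity
\begin{equation*}
\E\max_j Z_j \;=\; \E\max_j(Z_j - Z_{j_0}) \;=\; \E\max_j W_j.
\end{equation*}
Each $W_j$ is again a centered Gaussian with variance $\E W_j^2 = \E(Z_j - Z_{j_0})^2 \leq \Delta^2$. This is the key structural reduction: while the original $Z_j$'s may have large individual variances, the \emph{differences} have variances controlled by $\Delta^2$.

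Then I would apply the standard sub-Gaussian maximal inequality to the $D$ centered Gaussians $W_1,\ldots,W_D$ (not assuming independence; only marginal Gaussianity is needed). Via the Chernoff / MGF argument one has, for any $\lambda>0$,
\begin{equation*}
\E\max_j W_j \;\leq\; \frac{1}{\lambda}\ln \E\sum_j e^{\lambda W_j} \;\leq\; \frac{\ln D}{\lambda} + \frac{\lambda \Delta^2}{2},
\end{equation*}
and optimizing $\lambda = \sqrt{2\ln D}/\Delta$ yields $\E\max_j W_j \leq \Delta\sqrt{2\ln D}$. Combining with the previous identity gives $\E\max_j Z_j \leq c \ln^{1/2}(D)\cdot \Delta$ with $c=\sqrt{2}$, which is the claim.

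The manipulations are routine, so there is no serious obstacle; the only subtlety worth flagging is that one should \emph{not} try to bound $\E\max_j Z_j$ directly by $\sqrt{2\ln D}\cdot \max_j\sqrt{\E Z_j^2}$, because the RHS of the lemma is smaller (it uses the diameter, not the radius). The centering step around an arbitrary $j_0$ is precisely what converts the radius bound into the diameter bound, and it works only because the Gaussians have zero mean. An alternative route would be to invoke Slepian's comparison inequality against an independent Gaussian process with matched increments, but the centering-plus-Chernoff route is shorter and gives an explicit absolute constant.
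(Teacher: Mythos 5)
The paper does not prove this lemma; it is imported verbatim as Lemma~20 of \citet{lt:bartlett_mendelson_radgauss_complexities}, so there is no in-paper argument to compare against. Judged on its own merits, your proof is correct and essentially the textbook argument: you recenter the Gaussian vector at an arbitrary coordinate $j_0$, use $\E Z_{j_0}=0$ to get $\E\max_j Z_j=\E\max_j(Z_j-Z_{j_0})$, observe that each increment $W_j:=Z_j-Z_{j_0}$ is a centered Gaussian with variance at most $\Delta^2:=\max_{j,j'}\E(Z_j-Z_{j'})^2$ (independence among the $W_j$ is not needed), and then apply the Chernoff/MGF maximal inequality, optimizing $\lambda=\sqrt{2\ln D}/\Delta$ to obtain $c=\sqrt 2$. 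The observation you flag---that one must bound by the diameter, not the radius, and this is exactly what the recentering buys---is the right thing to highlight. One trivial loose end worth adding: the optimization $\lambda=\sqrt{2\ln D}/\Delta$ silently assumes $\Delta>0$ and $D\geq 2$; when $\Delta=0$ all $Z_j$ coincide a.s.\ and both sides vanish, and when $D=1$ both sides are again zero, so the degenerate cases hold trivially. Your closing remark is also accurate: the name the paper attaches to the lemma suggests the original Bartlett--Mendelson derivation went through a Slepian-type comparison, but your Chernoff route is shorter, gives an explicit constant, and proves the same statement.
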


\begin{lemma} \emph{\textbf{[Lemma 4 of \citet{lt:bartlett_mendelson_radgauss_complexities}]}}
\label{lm:bartlett_mendelson_relate_rag_gauss}
There are absolute constants $c$ and $C$ such that for every class $\mathcal{F}$ and every integer $m$
\begin{align*}
c \mathcal{R}_m(\mathcal{F},\mathcal{D}) \;\leq\; \mathcal{G}_m(\mathcal{F},\mathcal{D}) \;\leq\; C \ln(m)  \mathcal{R}_m(\mathcal{F},\mathcal{D}),
\end{align*}
where $\mathcal{R}$ and $\mathcal{G}$ are Rademacher and Gaussian complexities of a function class $\mathcal{F}$ with respect to the distribution $\mathcal{D}$ respectively. 
\end{lemma}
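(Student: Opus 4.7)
The plan is to prove the two inequalities separately, both via the decomposition of a standard Gaussian as a Rademacher sign times its absolute value. Throughout, write $g_i = \sigma_i |g_i|$ in distribution, where $\sigma_i$ is Rademacher and $|g_i|$ is an independent half-normal, and condition accordingly.

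For the lower bound $c\,\mathcal{R}_m(\mathcal{F},\mathcal{D}) \le \mathcal{G}_m(\mathcal{F},\mathcal{D})$, I would fix the sample $x_1,\ldots,x_m$ and the Rademacher signs $\sigma_1,\ldots,\sigma_m$, and first take expectation over $(|g_1|,\ldots,|g_m|)$. Since $(a_1,\ldots,a_m) \mapsto \sup_{f\in\mathcal{F}} \frac{1}{m}\sum_i \sigma_i a_i f(x_i)$ is convex (a supremum of linear functions), Jensen's inequality lets me pull the expectation inside the supremum, producing weights $\mathbb{E}|g_i| = \sqrt{2/\pi}$. Taking the outer expectations over $\sigma$ and $x$ then gives $\mathcal{G}_m(\mathcal{F},\mathcal{D}) \ge \sqrt{2/\pi}\,\mathcal{R}_m(\mathcal{F},\mathcal{D})$, so any $c \le \sqrt{2/\pi}$ works.

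For the upper bound $\mathcal{G}_m(\mathcal{F},\mathcal{D}) \le C\ln(m)\,\mathcal{R}_m(\mathcal{F},\mathcal{D})$, I would condition on the magnitudes $a_i := |g_i|$ and study the weighted Rademacher process $\mathbb{E}_\sigma \sup_{f\in\mathcal{F}} \frac{1}{m}\sum_i \sigma_i a_i f(x_i)$. The key step is to bound this by $(\max_i a_i)$ times the ordinary Rademacher average; the cleanest route is the Ledoux--Talagrand contraction principle applied to the $1$-Lipschitz maps $t \mapsto (a_i/\max_j a_j)\,t$ (each vanishing at $0$), yielding a factor-of-$2$ loss. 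Then I would take expectation over the magnitudes, using the standard Gaussian maximal inequality $\mathbb{E}[\max_{i\le m} |g_i|] = O(\sqrt{\ln m})$. Combining these two ingredients gives $\mathcal{G}_m(\mathcal{F},\mathcal{D}) = O(\sqrt{\ln m})\,\mathcal{R}_m(\mathcal{F},\mathcal{D})$, which implies the stated $O(\ln m)$ bound.

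The main obstacle is the upper bound: one needs a mechanism to decouple the random magnitudes from the supremum over $\mathcal{F}$, and the naive approach of pulling $\max_i |g_i|$ out of the supremum does not work because the weights multiply each $\sigma_i$ separately. Invoking Ledoux--Talagrand contraction resolves this cleanly but is itself a substantial external result; an elementary alternative is a Dudley-style chaining/truncation bound on $\mathbb{E}[\max_i |g_i|]$ together with a direct comparison of the two empirical processes. Either way, the $\sqrt{\ln m}$ dependence arises from the Gaussian maximum and is the tightest factor one can extract through this route, comfortably within the $C\ln m$ bound asserted in the lemma.
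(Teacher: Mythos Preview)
The paper does not prove this lemma; it is stated as a citation of Lemma~4 of \citet{lt:bartlett_mendelson_radgauss_complexities} and invoked as a black box in the proof of Lemma~\ref{lm:unif_conv_clf_refi}. So there is no in-paper proof to compare against.

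Your argument is correct and is in fact the standard proof of this comparison result. The decomposition $g_i \stackrel{d}{=} \sigma_i|g_i|$ followed by Jensen (for the lower bound) and contraction plus the Gaussian maximal inequality (for the upper bound) is exactly the route taken in the original source. Two small remarks: first, your upper bound actually yields $\mathcal{G}_m \le C\sqrt{\ln m}\,\mathcal{R}_m$, which is sharper than the $C\ln m$ stated here, so you are proving more than required. Second, the step that pulls $\max_i|g_i|$ outside the expectation relies on the conditional (empirical) Rademacher average being nonnegative; this holds automatically under the Bartlett--Mendelson convention (supremum with absolute value, or over a class containing $0$), so it is worth noting that the paper's definition of $\mathcal{R}_m$ without an absolute value implicitly assumes such a class for the inequality to be meaningful.
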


\subsection{Proof of Corollary \ref{cor:unif_conv_refined}}
The conclusion of Eq.\ \eqref{eq:main_unif_refind_bound} is immediate by
dividing the given failure probability $\delta$ across the sequence
$\mathcal{M}^1,\mathcal{M}^2,\cdots$ such that $\delta \mu_d$ failure
probability is associated with class $\mathcal{M}^d$, then apply Lemma
\ref{lm:unif_conv_dist_refi} (for distance based metric learning) or Lemma
\ref{lm:unif_conv_clf_refi} (for classifier based metric learning) to each
class $\mathcal{M}^d$ individually, and finally combining the individual deviations together with a union
bound. 
\\

%For the second part, consider 
%the (regularized) minimizing weighting metric
%\begin{align}
%M_m := \argmin_{M\in \mathcal{M}}  \Big[ \err^\lambda(M,S_m) + \Lambda \|M^\mathsf{T}M\|_{_F} \Big],
%\label{eq:regualized_argminimizer}
%\end{align}
%where $\Lambda := C B \lambda \sqrt{{\ln(1/\delta\mu_d)}/{m}}$. 
%
%Now, if $\err^\lambda(M_m,\D) - \err^\lambda(M^*,\D) \leq \epsilon$, then we
%are done. So assume (towards contradiction) that
% $\err^\lambda(M_m,\D) - \err^\lambda(M^*,\D) > \epsilon$. Then since there exists $M'\in \mathcal{M}$ (viz.\ $M'=M^*$) such that with probability at least $1-\delta$, $\err^\lambda(M',\D) - \err^\lambda(M',S_m) \leq d \Lambda$ (cf.\ Eq.\ \ref{eq:main_unif_refind_bound}). Therefore,  $\err^\lambda(M',\D)\leq \err^\lambda(M',S_m) + d \Lambda $

%For the second part we have the following.
%For the given $0<\delta<1$ and $m\geq 1$, 
%fix any $M' \in \mathcal{M}$ such
%that $\err^\lambda( M', \D) \leq \err^\lambda(M^*,\D) +
%\sqrt{\frac{\ln(2/\delta)}{2m}}$. Then, with probability at least $1-\delta$,
%we have
%\begin{align*}
%\err^\lambda&(M_m,\D) - \err^\lambda(M^*,\D) \\
%&\leq     \err^\lambda(M_m, \D)  - \err^\lambda(M_m,S_m)  \\
%&\;\;\; +   \err^\lambda(M_m, S_m) + \Lambda_{_{M_m}} \| M_m^\mathsf{T} M_m\|_{_F} \\
%&\;\;\; - \err^\lambda(M',S_m) - \Lambda_{_{M'}}  \| {M'}^\mathsf{T} M'\|_{_F}  \\
%&\;\;\; +   \err^\lambda(M', S_m)  + \Lambda_{_{M'}}  \| {M'}^\mathsf{T} M'\|_{_F}   - \err^\lambda(M',\D)  \\
%&\;\;\; +   \err^\lambda(M', \D)     - \err^\lambda(M^*,\D)  \\
%&\leq
%\end{align*}

For the second part, for any $M\in \mathcal{M}$ define $d_M$ and $\Lambda_M$ as per the lemma statement. Then with probability at least $1-\delta$
\begin{align*}
\err^\lambda( M_m^{\reg},\D)  - \err^\lambda(M^*,\D) 
%\\
& \leq \;\; \err^\lambda(M_m^{\reg},S_m) + d_{_{M_m^{\reg}}}\Lambda_{_{M_m^{\reg}}} - \err^\lambda(M^*,\D) \\
& \leq \;\; \err^\lambda(M^*,S_m) + d_{_{M^*}}\Lambda_{_{M^*}} - \err^\lambda(M^*,\D) \\
& \leq \;\; O(d_{_{M^*}}\Lambda_{_{M^*}}) \;\; = \;\; O(\epsilon), 
\end{align*}
where (i) the first inequality is by applying Eq.\
\eqref{eq:main_unif_refind_bound} on weighting metric $M_m^{\reg}$ (with
failure probability set to $\delta/2$), (ii) the second inequality is by noting
that $M_m^{\reg}$ is the (regularized) sample error minimizer as per the lemma
statement, (iii) the third inequality is by applying Eq.\
\eqref{eq:main_unif_refind_bound} on weighting metric $M^*$ (with failure
probability set to $\delta/2$), and (iv) the last equality by noting the
definitions of $\Lambda_{M^*}$ and our choice of $m$.
\qed

%%\section{Appendix: Data Generation Process}
%%
%%\subsection{Creating Synthetic \alg{Simplex} Dataset}
%%\label{appendix:synth_simplex_data_gen}
%%For any ambient dimension $D$ and intrinsic dimension $d$ ($d<D$),
%%Let $\{v_0,\ldots,v_D\}\subset\R^D$ denote the vertices of a regular $D$-simplex centered at origin and circumscribed 
%%in a unit sphere. That is:
%%\begin{align*}
%%v_{0,j} &= \frac{-1}{\sqrt{D}}  &\textrm{for $1\leq j \leq D$} \\
%%v_{i,j} &= \Bigg\{ \begin{array}{ll} \frac{(D-1)\sqrt{D+1}+1}{D\sqrt{D}} & \textrm{if $i=j$} \\ \frac{-(\sqrt{D+1} -1)}{D\sqrt{D}} & \textrm{otherwise} \end{array} &\textrm{for $1\leq i,j \leq D$}
%%\end{align*}
%%Define the \emph{normalized} vertices $\{\bar v_0,\ldots,\bar v_D\}$ as $\bar v_i := \frac{v_i}{\|v_0 - v_1\|}$, that constitutes vertices of a regular $D$-simplex with unit-side length.
%%
%%
%%
%%
%%\subsection{Adding Correlated Noise to a Dataset}
%%\label{appendix:synth_noise_gen}
%%
%%Given a dataset with fixed dimension, we can embed it in higher dimensions by appending each sample $x_i$ with $D$-dimensional 
%%correlated noise as follows. 
%%\begin{enumerate}
%%\item Sample a covariance matrix $\Sigma_D$ from unit scale Wishart distribution. That is, let $A$ be a $D\times D$ matrix with
%%entry $A_{ij} \sim N(0,1)$ drawn i.i.d. Define $\Sigma_D := A^\mathsf{T}A$.
%%\item For each sample $x_i$, \\
%%$$
%%x_i^{\textup{new}} := \binom{x_i}{x_\sigma}, \hspace{0.5in} x_\sigma \sim N(0,\Sigma_D),
%%$$
%%where $x_\sigma$ is drawn independently for each sample.
%%\end{enumerate}

\end{document}